\def\eqref#1{equation~\ref{#1}}
\def\1{\bm{1}}
\DeclareMathAlphabet{\mathsfit}{\encodingdefault}{\sfdefault}{m}{sl}
\SetMathAlphabet{\mathsfit}{bold}{\encodingdefault}{\sfdefault}{bx}{n}
\DeclareMathOperator*{\argmax}{arg\,max}
\DeclareMathOperator*{\argmin}{arg\,min}
\newcommand{\PreserveBackslash}[1]{\let\temp=\\#1\let\\=\temp}
\newcolumntype{C}[1]{>{\PreserveBackslash\centering}p{#1}}
\newcolumntype{R}[1]{>{\PreserveBackslash\raggedleft}p{#1}}
\newcolumntype{L}[1]{>{\PreserveBackslash\raggedright}p{#1}}
\newcommand{\specialcell}[2][c]{%
    \begin{tabular}[c]{@{}#1@{}}#2\end{tabular}}%
\definecolor{bluechart}{rgb}{0.00392156862745098, 0.45098039215686275, 0.6980392156862745}
\newtheorem{prop}{Proposition}
\newcommand{\gf}[1]{\footnote{\textbf{Giorgio: #1}}}
\newcommand{\mm}[1]{\footnote{\textbf{Mirco: #1}}}
\title{DiffSampling: Enhancing Diversity and Accuracy in Neural Text Generation}
\author{\name Giorgio Franceschelli \email giorgio.franceschelli@unibo.it\\
      \addr
      Alma Mater Studiorum Università di Bologna, Bologna, Italy
      \AND
      \name Mirco Musolesi \email m.musolesi@ucl.ac.uk\\
      \addr University College London, London, United Kingdom \\
      \addr Alma Mater Studiorum Università di Bologna, Bologna, Italy
     }
\newcommand{\fix}{\marginpar{FIX}}
\newcommand{\new}{\marginpar{NEW}}
\def\month{12}  % Insert correct month for camera-ready version
\def\year{2025} % Insert correct year for camera-ready version
\def\openreview{\url{https://openreview.net/forum?id=kXjHbMvdIi}} % Insert correct link to OpenReview for camera-ready version
\begin{document}

\maketitle

\begin{abstract}
Despite their growing capabilities, language models still frequently reproduce content from their training data, generate repetitive text, and favor common grammatical patterns and vocabulary. A possible cause is the decoding strategy: the most common strategies either consider only the most probable tokens, which reduces output diversity, or increase the likelihood of unlikely tokens, compromising output accuracy and correctness. In this paper, we propose \textit{DiffSampling}, a new decoding method that leverages a mathematical analysis of the token probability distribution to ensure the generation of contextually appropriate text. In particular, the difference between consecutive, sorted probabilities can be used to truncate incorrect tokens. In addition, we also propose two variations of the proposed method that aim to correct the subtle inconsistencies of common sampling strategies.
Experiments involving four different text-generation tasks demonstrate that our approach consistently performs at least on par with the existing methods it builds upon in terms of quality, despite sampling from a larger set of tokens.
%while potentially improving output diversity.
\end{abstract}

\section{Introduction}

\begin{wrapfigure}{rt}{0.5\textwidth}
\vspace{-12pt}
    \centering
    \includegraphics[width=.38\textwidth]{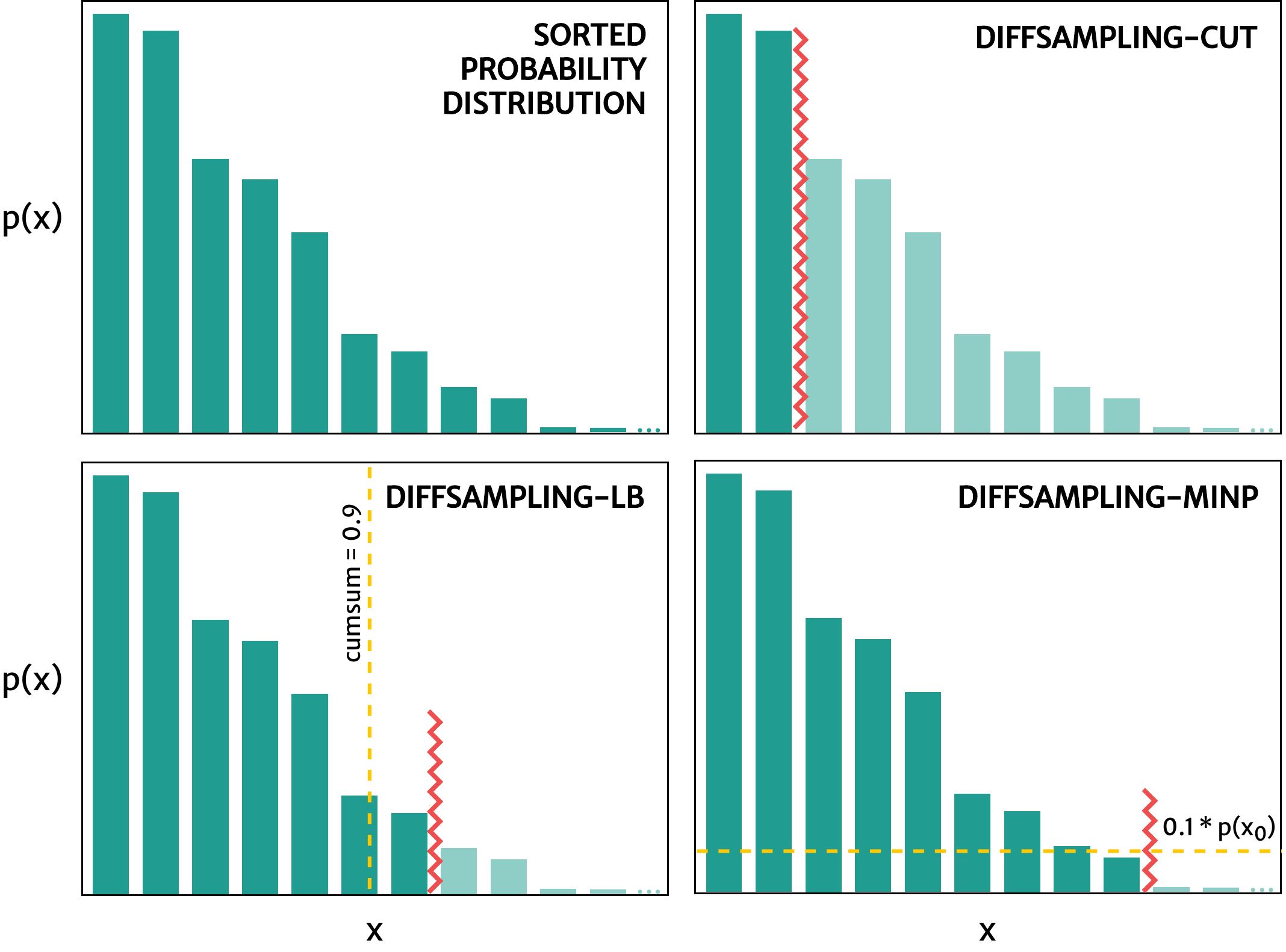}
    \caption{In the top-left square, the original distribution. In the top-right square, \textit{DiffSampling-cut} truncates after the minimum discrete derivative.
    In the bottom-left square, \textit{DiffSampling-lb} also imposes a total probability lower bound $p_{lb} = 0.9$. In the bottom-right square, \textit{DiffSampling-minp} applies truncation only among tokens with a probability less than $p_{min} = 0.1$ times the highest probability.}
    \label{fig:example_diffsampling}
\vspace{-13pt}
\end{wrapfigure}

In recent years, large language models (LLMs) have demonstrated remarkable performance \citep{bubeck2023sparks}, driven by the availability of large-scale datasets, advances in computational power \citep{bommasani2021opportunities}, and the development of innovative learning strategies (e.g., \citealp{stiennon2020learning, rafailov2023direct}).
While training provides LLMs with the information and skills required to process natural language, another aspect plays a key role at generation time: the decoding strategy, that is, the method used to extract text sequences from the model.
The choice of decoding scheme significantly impacts the generated output, as there is a pronounced trade-off between quality and diversity \citep{ippolito2019comparison}.
The most straightforward strategies, such as greedy decoding (selecting the highest-probability token) or sampling, tend to repeat the same tokens multiple times \citep{su2022contrastive}, reproduce training data \citep{carlini2021extracting}, or flatten the lexicon in favor of the most common grammatical structures and words \citep{fleisig2024linguistic,reviriego2023playing}.
Although the temperature parameter may increase the likelihood of less frequent tokens, it also raises the chance of syntactically incorrect ones by flattening their probabilities, regardless of their actual ranking.
An ideal solution should concentrate on where the \textit{critical mass} of the probability distribution resides. More precisely, with critical mass, we refer here to the portion of the distribution that collectively accounts for the majority of the probability mass of the tokens. In this direction, a common approach is nucleus sampling \citep{holtzman2020curious}, which removes the tail of the distribution by focusing on the smallest subset of tokens whose global probability exceeds a given threshold.
However, a key issue remains: it can either preserve incorrect tokens or exclude appropriate ones, depending on whether the critical mass is smaller or larger than the threshold, respectively.
As suggested by \citet{hewitt2022truncation}, the learned probability distribution can be viewed as a mixture of the true distribution, which assigns a non-zero probability only to appropriate tokens (the critical mass), and a smoothing distribution, which assigns a small but non-zero probability to incorrect tokens for learning purposes.

To address the subtle inconsistencies in existing truncation strategies, we introduce a family of decoding strategies called \textit{DiffSampling}, based on the analysis of the probability distribution of the tokens, and in particular, on the minimum discrete derivative (i.e., the largest difference between consecutive probabilities in a sorted distribution). 
We propose a method for excluding incorrect tokens introduced by the smoothing distribution, along with two relaxed variants designed to promote output diversity by \textit{correcting} standard methods (see Figure \ref{fig:example_diffsampling}).
We then provide a comprehensive evaluation of them under four different tasks\footnote{The code and results are available at: \url{https://github.com/giorgiofranceschelli/DiffSampling-tmlr}}, namely mathematical problem-solving tasks, extreme summarization, the divergent association task\footnote{A common task in creativity research that evaluates the ability to generate semantically unrelated concepts \citep{olson2021naming}.}, and story generation against the most common baselines, and discuss their advantages and limitations. We show that \textit{DiffSampling} consistently performs at least on par with the standard methods they aim to correct, while enhancing output diversity, especially in longer-form text generation.

\section{Background} \label{background}

\subsection{Language Modeling}

An autoregressive language model (LM) is a probability distribution $p_{\boldsymbol{\theta}}(\mathbf{x})$ parameterized by $\boldsymbol{\theta}$ over a variable-length text sequence $\mathbf{x} = (x_1 \ldots x_T)$, where $T$ is the sequence length and each token $x_t$ is in a finite vocabulary $\mathcal{V}$. The probability distribution is factorized as $p_{\boldsymbol{\theta}}(\mathbf{x}) = \prod_{t=1}^T p_{\boldsymbol{\theta}}(x_t | x_1 \ldots x_{t-1})$, and the LM is usually trained to maximize the likelihood of the true distribution $p_\star(\mathbf{x})$ for any $\mathbf{x}$ from a reference dataset (the training set). In other words, given as input $x_1 \ldots x_t$, the model learns to approximate the probability of each token from $\mathcal{V}$ being $x_{t+1}$. While this makes the model immediately capable of scoring the probability of a given text, it also allows for the generation of new sentences. Given a commonly human-written prefix (also known as a prompt) $\mathbf{x} = (x_1 \ldots x_P)$ of length $P$, we can decode a continuation $\mathbf{\hat{x}} = (x_{P+1} \ldots x_{T+P})$ from the LM through its factorized representation introduced before. However, we must remember that the model is trained to score and not to generate sentences. A given text might have zero probability for generation purposes (e.g., the text is syntactically incorrect), but non-zero probability for ranking purposes \citep{hewitt2022truncation}.

\subsection{Decoding Strategies}
The decoding of tokens from the probability distribution learned by a neural language model can occur in several ways. The greedy strategy involves selecting the most probable token each time. However, this can lead to a consistent lack of diversity and several repetitions. The standard approach involves sampling from the probability distribution, which can be transformed through a \textit{temperature} parameter $\tau$. The temperature scales the differences among the various probabilities: a temperature lower than $1$ will increase the probability of the most-probable tokens (a zero temperature degenerates to greedy strategy), while a temperature higher than $1$ will increase the probability of the least-probable tokens, allowing for more diversity in generation \citep{peeperkorn2024temperature}. However, this might lead to the selection of tokens that are not syntactically appropriate for the current input. Selective sampling \citep{troshin2025control} dynamically switches between greedy and high-temperature sampling based on the likelihood of output errors estimated by a lightweight, ad-hoc classifier. Alternatively, top-$k$ sampling \citep{fan2018hierarchical} reduces the token space to the $k$ most probable ones. 
To generate more natural and coherent solutions, contrastive search \citep{su2022contrastive} employs a greedy strategy over the combination of a top-$k$ truncation and a degeneration penalty. This promotes the selection of tokens that differ from those already generated, enhancing the diversity and quality of the output. Nevertheless, limiting the number of selected tokens \textit{a priori} can lead to the exclusion of meaningful tokens or the inclusion of inappropriate ones. A possible solution is to set $k$ dynamically, as in Mirostat \citep{basu2021mirostat}: to maintain the perplexity of generated text at a desired value, the $k$ parameter is actively tuned based on the current cross-entropy.

Alternatively, nucleus (or top-$p$) sampling \citep{holtzman2020curious} reduces the token space to the smallest subset of tokens with a total probability no less than $p$. To restrict the nucleus to tokens whose information content is close to the expected one given prior context, locally typical sampling \citep{meister2023locally} focuses on the tokens with negative log-probability within a certain absolute range from the conditional entropy (and a total probability no less than $p$). Finally, \citet{hewitt2022truncation} assert that a language model learns a mixture of the true token distribution and a smoothing distribution to avoid infinite perplexity. For \textit{de-smoothing} the distribution, they propose $\epsilon$- and $\eta$-sampling, which truncate tokens with a probability smaller than a threshold set \textit{a priori} or dynamically through the entropy of the distribution, respectively. This threshold can also be set according to the magnitude of the highest probability, as in min-$p$ \citep{minh2025turning}, or based on the logit rather than the probability distribution \citep{tang2025top}. However, such strategies do not guarantee the exclusion of the smoothing-induced tail. Contrastive decoding \citep{li2023contrastive} leverages the difference in likelihood between a large language model and a smaller, less capable one to prioritize tokens with sufficiently high probability under the expert model. However, it requires access to a smaller model with an identical vocabulary, which is not always available.
While conceptually aligned, our method simplifies the threshold computation and provides more intuitive guarantees on the suitability of selected tokens.

\section{DiffSampling} \label{method}

Given the probability distribution of the next token, let us imagine sorting it to have tokens in descending order based on their probability. 
Following \citet{hewitt2022truncation}, only the first $D$ tokens have a positive probability under the true token distribution, while the remaining $|\mathcal{V}| - D$ tokens receive a non-zero final probability solely due to the smoothing distribution, which prevents infinite perplexity. To generate correct text, we need to limit our sampling among the first $D$ tokens, thus, we need to identify a cutting point that is as close as possible to the $D$-th token. We propose to achieve this by truncating after the largest difference between probabilities: the token to its left should be the least probable token that our model considers correct.

From a mathematical analysis perspective, this point is characterized simply and elegantly as the location where the derivative reaches its minimum. Let us consider a probability distribution $p(x_t)$ defined for a limited number of $x_t^{[1]} \ldots x_t^{[N]}$, with $p\!\left(\right)$ monotonically decreasing. According to the forward difference approximation, the discrete derivative of a function $f(x_t^{[n]})$ is defined as $\Delta f(x_t^{[n]}) = f(x_t^{[n+1]}) - f(x_t^{[n]})$, thus we have:
\begin{equation}
    \Delta p(x_t^{[n]}) = 
        \begin{cases}
        p(x_t^{[n+1]}) - p(x_t^{[n]}) & \text{if } n < N \\
        - p(x_t^{[n]}) & \text{if } n = N
        \end{cases}
\end{equation}
\noindent which is always non-positive. $\argmin(\Delta p(x_t^{[n]}))$ represents the index of the last token before the point characterized by the largest difference between consecutive probabilities.

In particular, it seems plausible that $\argmin(\Delta p(x_t^{[n]})) \leq D$, i.e., it either marks the point where the true distribution ends and smoothing begins to take effect, or a point within the true distribution that separates tokens with significantly higher probabilities from the rest. Indeed, due to the inner nature of smoothing, it seems unreasonable that the maximum difference is between tokens with zero probability under the true distribution, and thus only because of the smoothing distribution (see Appendix \ref{formal_analysis} for a formal analysis on when $\argmin(\Delta p(x_t^{[n]}))$ is provably $\leq D$).

\begin{algorithm}[t] 
    \caption{DiffSampling} \label{alg:diffsampling}
    \begin{algorithmic}
        \State \textbf{Input:} probabilities $\mathsf{probs} = [p_t^{[1]} \ldots p_t^{[N]}]$, lower bound $\mathsf{p\_lb} = p_{lb}$, upper bound $\mathsf{p\_min} = p_{min}$, temperature $\mathsf{tau} = \tau$.
        \State $\mathsf{sorted\_probs}, \mathsf{indices} = \text{\textbf{sort}}\!\left(\mathsf{probs}\right)$
        \State $\mathsf{fwd\_probs} = \mathsf{sorted\_probs}[1\!:] + [0.0]$
        \State $\mathsf{delta\_probs} = \mathsf{fwd\_probs} - \mathsf{sorted\_probs}$
        \If{$\mathsf{p\_min} > 0.0$} 
        \State $\mathsf{lim} = \text{\textbf{argmin}}\!\left(\mathsf{sorted\_probs} > \mathsf{p\_min} \cdot \mathsf{sorted\_probs}[0]\right) - 1$
        \State $\mathsf{delta\_probs}[:\!\mathsf{lim}] = 0.0$
        \Else
        \State $\mathsf{nucleus} = \text{\textbf{cumsum}}\!\left(\mathsf{sorted\_probs}\right) < \mathsf{p\_lb}$
        \State $\mathsf{delta\_probs}[\mathsf{nucleus}] = 0.0$
        \EndIf
        \State $\mathsf{cut\_idx} = \text{\textbf{argmin}}\!\left(\mathsf{delta\_probs}\right)$
        \State $\mathsf{sorted\_probs}[\mathsf{cut\_idx}\!+\!1\!:] = 0.0$
        \State $\mathsf{probs} = \text{\textbf{sort\_by\_idx}}\!\left(\mathsf{sorted\_probs}, \mathsf{indices}\right)$
        \State $\mathsf{logits} = \text{\textbf{log}}(\mathsf{probs} / \text{\textbf{sum}}\!\left(\mathsf{probs}\right)) / \mathsf{tau}$
        \State $\mathsf{probs} = \text{\textbf{softmax}}(\mathsf{logits})$
        \State \textbf{Output:} $\mathsf{probs}$.
    \end{algorithmic}
\end{algorithm}

Building on this intuition, we propose \textit{DiffSampling}, a family of three decoding strategies (the full algorithm is reported in Algorithm \ref{alg:diffsampling}). The first one, which we call \textit{DiffSampling-cut}, leverages the aforementioned property and consists of cutting the distribution tail at the right side of the minimum discrete derivative, i.e., sampling among the tokens $x_i, i \leq \argmin(\Delta p(x_t^{[n]}))$. Due to the guarantee of selecting a correct token, which prioritizes reliability over aggressiveness, this approach can be seen as an improved greedy strategy: when the model has high confidence in a single token, it degenerates into the greedy strategy; otherwise, it preserves other appropriate tokens, increasing diversity. The next section provides a toy example to showcase this relation.

Since the minimum discrete derivative should guarantee the correctness of the truncation, \textit{any} preserved token should come from the true distribution: we can sample at a higher temperature to foster diversity without the usual trade-off with quality. Note that although temperature scaling is typically applied before truncation, doing so alters the probability distribution, potentially shifting the minimum of the discrete derivative forward - possibly into the region of tokens that have zero probability under the true distribution. To preserve the mathematical properties discussed above, we instead apply temperature scaling \textit{after} truncation.

However, as previously discussed, this cutoff point can fall within the true distribution, thereby excluding tokens that are still correct; a frequent scenario consists of the first token minimizing $\Delta p(x_t^{[n]})$, but still having a quite low probability.
To address this issue, we propose two relaxations to \textit{right-move} the truncation.
The first one builds upon top-$p$ sampling and introduces a lower bound on the saved mass probability. \textit{DiffSampling-lb} considers truncating based on $\Delta p(x_t^{[n]})$ in such a way that the resulting tokens have a total probability at least equal to the lower bound $p_{lb}$. In other words, given $k$ cardinality of the smallest subset of tokens whose total probability is not lower than $p_{lb}$, it computes the $\argmin(\Delta p(x_t^{[n]}))$ for $n \geq k$ (i.e., the cutting point is between tokens not included in the top-$p$ nucleus). This approach can be seen as an improved top-$p$ sampling: it \textit{corrects} the $p$ parameter via our derivative-based approach to include appropriate tokens after the selected nucleus.

Alternatively, we can build upon min-$p$ sampling by introducing a dynamic upper bound on the probability of truncated tokens. \textit{DiffSampling-minp} considers truncating based on $\Delta p(x_t^{[n]})$ in such a way that no discarded tokens have a probability greater than $p_{min} \cdot \max_{v \in \mathcal{V}} p(v)$. In other words, given $j$ index of the lowest-probable token with a probability greater than $p_{min} \cdot \max_{v \in \mathcal{V}} p(v)$, it computes the $\argmin(\Delta p(x_t^{[n]}))$ for $n \geq j$. This approach can be seen as an improved min-$p$ sampling: if there are tokens after index $j$ with a probability very close to the threshold, it still preserves them.

Overall, \textit{DiffSampling} can be seen as a sampling scheme governed by two parameters, i.e., the probability-mass lower bound $p_{lb}$ and the truncated probability upper bound $p_{min}$ (where \textit{DiffSampling-cut} just assumes a value of $0.0$ for the first and of $1.0$ for the second), plus the additional temperature $\tau$.
%The full algorithm is reported in Algorithm \ref{alg:diffsampling}.

\section{Illustrative Example} \label{ill_ex}

To make it easier to understand the advantages of our methods, Table \ref{tab:ill_ex} presents an illustrative example comparing them with their most similar methods. For the sake of simplicity, top-$p$ and \textit{DiffSampling-lb} consider the same $p = p_{lb} = 0.9$, while min-$p$ and \textit{DiffSampling-minp} consider the same $p = p_{min} = 0.1$. As apparent, \textit{DiffSampling-cut} improves upon the greedy strategy by also considering the second-most probable token, while both \textit{DiffSampling-lb} and \textit{DiffSampling-minp} improve upon top-$p$ and min-$p$ by not discarding tokens with very similar probability compared to preserved ones (for example, top-$p$ would discard the `read' token while having only a $0.014\%$ probability less than `,'). Although the differences between standard methods and ours are often minimal (typically involving low-probability tokens), even a slight correction in the right direction, at the negligible computational cost of an $\argmin$ function, can lead to meaningful improvements.

\begin{table}[t]
    \centering
    \small
    \setlength{\tabcolsep}{3pt}
    \begin{tabular}{p{0.67\textwidth} p{0.67\textwidth}}
    \begin{minipage}[t]{\linewidth}
        \centering
        \fcolorbox{black}{gray!10}{
        \begin{minipage}{0.93\linewidth}
            \textbf{Prompt:} \emph{Natural language generation (NLG) is the subfield of artificial intelligence and computational linguistics that is concerned with the construction of computer systems that can \_\_\_\_}
        \end{minipage}
        }
        \vspace{0.2cm}
        \rowcolors{2}{gray!10}{white}
        \resizebox{\textwidth}{!}{%
        \begin{tabular}{L{1.5cm}C{1.3cm}C{1.3cm}C{1.3cm}C{1.3cm}C{1.3cm}C{1.3cm}}
            \toprule
            \textbf{Token} & \textbf{Prob} & \textbf{Top-$p$} & \textbf{Min-$p$} & \textbf{D-cut} & \textbf{D-lb} & \textbf{D-minp} \\
            \midrule
            generate      & 37.326 & 41.366 & 50.872 & 59.886 & 40.929 & 47.537 \\
            produce       & 25.002 & 27.709 & 34.076 & \textbf{40.114} & 27.416 & 31.842 \\
            understand    &  7.295 &  8.084 &  9.942 &   -    &  7.999 &  9.290 \\
            create        &  3.749 &  4.154 &  5.109 &   -    &  4.110 &  4.774 \\
            naturally     &  2.797 &  3.100 &   -    &   -    &  3.067 &  \textbf{3.562} \\
            perform       &  2.352 &  2.606 &   -    &   -    &  2.579 &  \textbf{2.995} \\
            reason        &  1.067 &  1.182 &   -    &   -    &  1.170 &   -    \\
            be            &  0.956 &  1.060 &   -    &   -    &  1.048 &   -    \\
            ...           &  ...   &  ...   &   -    &   -    &  ...   &   -    \\
            %process       &  0.843 &  0.934 &   -    &   -    &  0.925 &   -    \\
            %simulate      &  0.745 &  0.825 &   -    &   -    &  0.816 &   -    \\
            %effectively   &  0.727 &  0.806 &   -    &   -    &  0.797 &   -    \\
            %communicate   &  0.694 &  0.769 &   -    &   -    &  0.761 &   -    \\
            %use           &  0.669 &  0.741 &   -    &   -    &  0.733 &   -    \\
            %learn         &  0.658 &  0.729 &   -    &   -    &  0.721 &   -    \\
            %think         &  0.604 &  0.669 &   -    &   -    &  0.662 &   -    \\
            %speak         &  0.542 &  0.601 &   -    &   -    &  0.594 &   -    \\
            %mimic         &  0.538 &  0.596 &   -    &   -    &  0.590 &   -    \\
            %accurately    &  0.509 &  0.564 &   -    &   -    &  0.558 &   -    \\
            %make          &  0.489 &  0.542 &   -    &   -    &  0.537 &   -    \\
            %automatically &  0.446 &  0.495 &   -    &   -    &  0.489 &   -    \\
            %comprehend    &  0.445 &  0.493 &   -    &   -    &  0.488 &   -    \\
            %write         &  0.390 &  0.432 &   -    &   -    &  0.427 &   -    \\
            %compose       &  0.353 &  0.391 &   -    &   -    &  0.387 &   -    \\
            %converse      &  0.350 &  0.388 &   -    &   -    &  0.384 &   -    \\
            recognize     &  0.350 &  0.388 &   -    &   -    &  0.384 &   -    \\
            ,             &  0.339 &  0.375 &   -    &   -    &  0.371 &   -    \\
            read          &  0.325 &   -    &   -    &   -    &  \textbf{0.357} &   -    \\
            respond       &  0.321 &   -    &   -    &   -    &  \textbf{0.352} &   -    \\
            interpret     &  0.318 &   -    &   -    &   -    &  \textbf{0.348} &   -    \\
            interact      &  0.259 &   -    &   -    &   -    &   -    &   -    \\
            \bottomrule
        \end{tabular}
        }
    \end{minipage}
    \end{tabular}
    \caption{Token probability comparison between top-$p$, min-$p$, and our methods, showing how they avoid treating tokens with very similar probabilities differently (reported in \textbf{bold}). The probabilities (in percentage) are taken from \texttt{SmolLM-135M-Instruct} \citep{benallal2024smollm}.}
    \label{tab:ill_ex}
\end{table}

\section{Experiments} \label{experiments}
To evaluate whether \textit{DiffSampling} helps diversify outputs while maintaining a high accuracy, we test it on four case studies: math problem solving, text summarization, the divergent association task, and story generation. While slightly unconventional, these tasks are very different from each other, and provide meaningful ways to evaluate diversity \textit{and} quality together, as they have quantifiable goals which can be reached in syntactically and semantically different ways.

\subsection{Models and Baselines}
In all our experiments, we start from a state-of-the-art language model and test various decoding strategies. For the math problem-solving tasks, we use the Llama2-based \texttt{MetaMath-7B-V1.0} model trained with self-supervised learning on MetaMathQA \citep{yu2024metamath}. 
%Following \citet{chhabra2024revisiting}, for extreme text summarization, we use the \texttt{Llama-2-7b-hf} model \citep{touvron2023llama2}, considering both pre-trained and RLHF-instructed \texttt{-chat} versions. For the divergent association task, we consider \texttt{Meta-Llama-3-8B} \citep{grattafiori2024llama3}, using both pre-trained and DPO-tuned \texttt{-Instruct} versions. Finally, for story generation, we use the \texttt{Llama-3.2-3B} model, with both original and \texttt{-Instruct} versions. 
For extreme text summarization and story generation, we utilize the \texttt{Llama-3.2-3B} model \citep{grattafiori2024llama3}, with both original and \texttt{-Instruct} versions. Finally, for the divergent association task, we consider \texttt{Meta-Llama-3-8B} \citep{grattafiori2024llama3}, using both pre-trained and DPO-tuned \texttt{-Instruct} versions.
We study the performances of our three methods: \textit{DiffSampling-cut}; \textit{DiffSampling-lb} with $p_{lb} = 0.9$; and \textit{DiffSampling-minp} with $p_{min} = 0.1$. While these values are sometimes sub-optimal (see Appendix \ref{additional_experiments} for a full ablation study), we chose to standardize their values to match those used for the top-$p$ and min-$p$ baselines. Indeed, we compare them with a total of 5 decoding-based baselines: greedy strategy; $\eta$-sampling (with $\eta = 0.0003$); locally typical sampling (with $p = 0.9$); top-$p$ sampling (with $p = 0.9$); and min-$p$ sampling (with $p = 0.1$). While other methods, such as selective sampling \citep{troshin2025control}, contrastive decoding \citep{li2023contrastive}, and beam search \citep{roark2001probabilistic}, could also be considered, we restrict our analysis to sampling-based methods to ensure a fair comparison, selecting those with similar computational costs and operational principles to our approach.

\subsection{Math Problem Solving} \label{math_experiments}

\textbf{Experimental Setup.}
Solving math problems provides a useful case study for our decoding strategies, as it allows us to evaluate the correctness of solutions (as the percentage of correctly solved problems) and the diversity of procedures to arrive at the result. 
In particular, we consider the GSM8K \citep{cobbe2021training} and MATH \citep{hendrycks2021measuring} test sets; the relative prompts are reported in Appendix \ref{diffsampling_implementation_details}.
To avoid resource wasting, we focus on entries with a problem and a solution of no more than $512$ tokens.

We evaluate the quality of solutions through the ratio of correctly solved problems, i.e., with pass@1. Instead, the diversity is computed according to two methods: expectation-adjusted distinct $N$-grams (EAD) \citep{liu2022rethinking} and sentence embedding cosine diversity (SBERT) \citep{hong2024curiositydriven}, which should evaluate syntactic and semantic diversity, respectively \citep{kirk2024understanding}. EAD counts the number of distinct $N$-grams tokens (averaging over $N=1 \ldots 5$) and removes the bias toward shorter inputs by scaling the number of distinct tokens based on their expectations\footnote{Note that EAD is not upper-bounded. Moreover, it counts for distinct $N$-grams across all outputs, including inside the same output: the EAD of a set of equal, non-empty sentences is not 0, as each sentence will contain at least one distinct $1$-gram. In general, an EAD score cannot be considered high or low per se, but it must be compared with other EAD scores from experiments under similar conditions.}. The SBERT metric is $1$ minus the cosine similarity between the embeddings of the sentences. While originally based on Sentence-BERT \citep{reimers2019sentence}, we employ the more recent \texttt{all-mpnet-base-v2} to obtain the embeddings, as suggested by their developers\footnote{\url{https://huggingface.co/sentence-transformers/bert-large-nli-stsb-mean-tokens}}. Following \citet{kirk2024understanding}, we compute \textit{cross-input} EAD and SBERT, i.e., by considering the set of all outputs produced for a specific seed. In addition, we also compute \textit{against-greedy} EAD and SBERT. Given each input, we compare the output with the greedy one by calculating the average expectation-adjusted distinct $N$-grams not present in the greedy response, and $1$ minus the cosine similarity between the two outputs, respectively. We refer the interested reader to Appendix \ref{evaluation_metrics} for a formal definition of all used metrics. Finally, for a more fine-grained analysis, Appendix \ref{qualitative_appendix} reports a few examples of generated outputs.

\begin{table*}[ht]
\centering
\resizebox{\textwidth}{!}{%
\begin{tabular}{|L{2.3cm}||C{1.5cm}|C{1.3cm}C{1.3cm}|C{1.3cm}C{1.3cm}||C{1.5cm}|C{1.3cm}C{1.3cm}|C{1.3cm}C{1.3cm}|} 
\hline
Dataset: & \multicolumn{5}{c||}{GSM8K} & \multicolumn{5}{c|}{MATH} \\
\hline %  $\uparrow$
Method & Accuracy & \multicolumn{2}{c|}{Cross-Input} & \multicolumn{2}{c||}{Against-Greedy} & Accuracy & \multicolumn{2}{c|}{Cross-Input} & \multicolumn{2}{c|}{Against-Greedy} \\
\hline
\textcolor{white}{placeholder} & \textcolor{white}{placeholder} & EAD & SBERT & EAD & SBERT & \textcolor{white}{placeholder} & EAD & SBERT & EAD & SBERT \\
\hline
Greedy          & $66.44_{\pm .09}$ & $2.03_{\pm .00}$ & $0.64_{\pm .00}$ & - & - & $20.62_{\pm .01}$ & $5.65_{\pm .00}$ & $0.80_{\pm .00}$ & - & - \\
%Contrastive search & $65.88_{\pm .59}$ & $2.06_{\pm .00}$ & $0.64_{\pm .00}$ & $0.17_{\pm .00}$ & $0.02_{\pm .00}$ & $21.05_{\pm .14}$ & $5.82_{\pm .01}$ & $0.80_{\pm .00}$ & $0.31_{\pm .00}$ & $0.09_{\pm .00}$ \\
Top-$p$         & $65.00_{\pm .18}$ & $2.08_{\pm .01}$ & $0.64_{\pm .00}$ & $0.23_{\pm .00}$ & $0.03_{\pm .00}$ & $20.02_{\pm .12}$ & $6.08_{\pm .02}$ & $0.80_{\pm .00}$ & $0.36_{\pm .00}$ & $0.10_{\pm .00}$ \\
$\eta$-sampling & $65.05_{\pm .19}$ & $2.12_{\pm .00}$ & $0.64_{\pm .00}$ & $0.25_{\pm .00}$ & $0.04_{\pm .00}$ & $19.67_{\pm .20}$ & $6.36_{\pm .01}$ & $0.80_{\pm .00}$ & $0.39_{\pm .00}$ & $0.11_{\pm .00}$ \\
Locally typical & $66.29_{\pm .55}$ & $2.09_{\pm .00}$ & $0.64_{\pm .00}$ & $0.23_{\pm .00}$ & $0.03_{\pm .00}$ & $19.95_{\pm .26}$ & $6.06_{\pm .01}$ & $0.80_{\pm .00}$ & $0.36_{\pm .00}$ & $0.10_{\pm .00}$ \\
Min-$p$         & $65.76_{\pm .44}$ & $2.09_{\pm .00}$ & $0.64_{\pm .00}$ & $0.23_{\pm .00}$ & $0.03_{\pm .00}$ & $20.25_{\pm .09}$ & $6.09_{\pm .01}$ & $0.80_{\pm .00}$ & $0.36_{\pm .00}$ & $0.10_{\pm .00}$ \\
DiffS.-cut      & $66.36_{\pm .23}$ & $2.04_{\pm .00}$ & $0.64_{\pm .00}$ & $0.14_{\pm .00}$ & $0.02_{\pm .00}$ & $21.38_{\pm .20}$ & $5.71_{\pm .01}$ & $0.80_{\pm .00}$ & $0.27_{\pm .00}$ & $0.07_{\pm .00}$ \\
DiffS.-lb       & $65.18_{\pm .65}$ & $2.09_{\pm .01}$ & $0.64_{\pm .00}$ & $0.23_{\pm .00}$ & $0.03_{\pm .00}$ & $20.20_{\pm .08}$ & $6.11_{\pm .02}$ & $0.80_{\pm .00}$ & $0.37_{\pm .00}$ & $0.10_{\pm .00}$ \\
DiffS.-minp     & $65.48_{\pm .60}$ & $2.09_{\pm .01}$ & $0.64_{\pm .00}$ & $0.23_{\pm .00}$ & $0.03_{\pm .00}$ & $20.18_{\pm .08}$ & $6.06_{\pm .00}$ & $0.80_{\pm .00}$ & $0.36_{\pm .00}$ & $0.10_{\pm .00}$ \\
 \hline
\end{tabular}
}
\caption{Accuracy and diversity of results for the GSM8K and MATH test sets over 3 seeds. 
The mean and standard error of the final score for each run are reported for accuracy and cross-input diversity, whereas the mean and the $95\%$ confidence interval for the full set of answers are reported for against-greedy diversity.
\label{tab:math}}
\end{table*}

\textbf{Experimental Results.}
Table \ref{tab:math} (left side) reports the results for the GSM8K test set. The greedy strategy achieves the highest average accuracy, closely followed by \textit{DiffSampling-cut}. Among the other baselines, only locally typical sampling performs comparably, while \textit{DiffSampling-lb} and \textit{DiffSampling-minp} do not substantially differ from top-$p$ and min-$p$ on any metric (but different $p_{lb}$ or $p_{min}$ values can significantly improve the accuracy at a very small cost in diversity; see Appendix \ref{additional_experiments}).

Table \ref{tab:math} (right side) reports the results for the MATH test set. Here, the highest accuracy is reached by \textit{DiffSampling-cut}, which also improves on the greedy strategy in terms of diversity. By contrast, our other two methods offer limited improvements over the sampling-based baselines. Notably, all methods achieve the same cross-input SBERT score, i.e., the overall diversity between all outputs is always the same across different methods, which might be due to the very similar levels of accuracy (and, therefore, due to the similar meaning of the proposed solutions).

\subsection{Extreme Summarization} \label{xsum_experiments}
\textbf{Experimental Setup.}
Summarizing paragraphs represents another meaningful case study since the same text can be correctly outlined in different ways. To keep the resource consumption as low as possible, we consider the eXtreme Summarization (XSum) dataset \citep{narayan2018dont}, which contains pairs of documents and one-sentence summaries. In particular, we use the test partition ($11334$ entries) and exclude all entries with a tokenized document longer than $768$, obtaining $9815$ entries; then, we limit our experiment to $1000$ random samples, and we use the prompt suggested by \citet{chhabra2024revisiting} and reported in Appendix \ref{diffsampling_implementation_details}.
Again, we aim to verify whether the summaries generated with \textit{DiffSampling} are both diverse and of high quality. For diversity, we consider the per-input EAD and SBERT metrics, computed over five outputs sampled from the same prompt \citep{kirk2024understanding}, along with the against-greedy EAD and SBERT diversity scores introduced above.
For quality assessment, we use ROUGE-$1$ (R-$1$) \citep{lin2004rouge}, a standard metric for summarization that evaluates the ratio of $1$-grams present in both the target and generated summaries, as well as the sentence embedding cosine similarity (SIM) between the generated and target summaries. In this way, we compute both syntactic and semantic quality metrics, as a good summary might use entirely different words while still preserving the original meaning. In addition, following \citet{su2022contrastive}, we compute the coherence (COH) between the generated output and the text to summarize through the cosine similarity between their SimCSE embeddings \citep{gao2021simcse}. 

\begin{table*}[ht]
\centering
\resizebox{\textwidth}{!}{%
\begin{tabular}{|L{2.3cm}||C{1.2cm}C{1.2cm}C{1.2cm}|C{1.2cm}C{1.2cm}|C{1.2cm}C{1.2cm}||C{1.2cm}C{1.2cm}C{1.2cm}|C{1.2cm}C{1.2cm}|C{1.2cm}C{1.2cm}|} 
\hline
Model: & \multicolumn{7}{c||}{RLHF-instructed} & \multicolumn{7}{c|}{Pre-trained} \\
\hline %  $\uparrow$
Method & \multicolumn{3}{c|}{Quality} & \multicolumn{2}{c|}{Per-Input} & \multicolumn{2}{c||}{Against-Greedy} & \multicolumn{3}{c|}{Quality} & \multicolumn{2}{c|}{Per-Input} & \multicolumn{2}{c|}{Against-Greedy} \\
\hline
\textcolor{white}{placeholder} & R-$1$ & SIM & COH & EAD & SBERT & EAD & SBERT & R-$1$ & SIM & COH & EAD & SBERT & EAD & SBERT \\
\hline
Greedy          & $0.23_{\pm .00}$ & $0.49_{\pm .01}$ & $0.63_{\pm .01}$ & $0.18_{\pm .00}$ & - & - & - & $0.22_{\pm .00}$ & $0.51_{\pm .00}$ & $0.74_{\pm .00}$ & $0.19_{\pm .00}$ & - & - & - \\
Top-$p$         & $0.21_{\pm .00}$ & $0.45_{\pm .01}$ & $0.59_{\pm .01}$ & $0.36_{\pm .01}$ & $0.47_{\pm .01}$ & $0.66_{\pm .01}$ & $0.41_{\pm .01}$ & $0.16_{\pm .00}$ & $0.34_{\pm .01}$ & $0.48_{\pm .01}$ & $0.72_{\pm .01}$ & $0.66_{\pm .01}$ & $0.77_{\pm .01}$ & $0.55_{\pm .01}$ \\
$\eta$-sampling & $0.20_{\pm .00}$ & $0.45_{\pm .01}$ & $0.58_{\pm .01}$ & $0.38_{\pm .01}$ & $0.49_{\pm .01}$ & $0.69_{\pm .01}$ & $0.43_{\pm .01}$ & $0.16_{\pm .00}$ & $0.34_{\pm .01}$ & $0.48_{\pm .01}$ & $0.75_{\pm .01}$ & $0.67_{\pm .00}$ & $0.80_{\pm .01}$ & $0.56_{\pm .01}$ \\
Locally typical & $0.21_{\pm .00}$ & $0.45_{\pm .01}$ & $0.59_{\pm .01}$ & $0.36_{\pm .01}$ & $0.47_{\pm .01}$ & $0.66_{\pm .01}$ & $0.41_{\pm .01}$ & $0.16_{\pm .00}$ & $0.34_{\pm .01}$ & $0.48_{\pm .01}$ & $0.72_{\pm .01}$ & $0.66_{\pm .01}$ & $0.77_{\pm .01}$ & $0.55_{\pm .01}$ \\
Min-$p$         & $0.22_{\pm .00}$ & $0.46_{\pm .01}$ & $0.61_{\pm .01}$ & $0.36_{\pm .01}$ & $0.43_{\pm .01}$ & $0.64_{\pm .01}$ & $0.38_{\pm .01}$ & $0.20_{\pm .00}$ & $0.44_{\pm .01}$ & $0.63_{\pm .01}$ & $0.65_{\pm .01}$ & $0.47_{\pm .01}$ & $0.62_{\pm .01}$ & $0.39_{\pm .01}$ \\
DiffS.-cut      & $0.23_{\pm .00}$ & $0.48_{\pm .01}$ & $0.63_{\pm .01}$ & $0.35_{\pm .01}$ & $0.25_{\pm .01}$ & $0.45_{\pm .01}$ & $0.23_{\pm .01}$ & $0.21_{\pm .00}$ & $0.49_{\pm .00}$ & $0.73_{\pm .00}$ & $0.38_{\pm .01}$ & $0.19_{\pm .00}$ & $0.32_{\pm .01}$ & $0.17_{\pm .01}$ \\
DiffS.-lb       & $0.21_{\pm .00}$ & $0.45_{\pm .01}$ & $0.59_{\pm .01}$ & $0.37_{\pm .01}$ & $0.47_{\pm .01}$ & $0.67_{\pm .01}$ & $0.41_{\pm .01}$ & $0.16_{\pm .00}$ & $0.34_{\pm .01}$ & $0.48_{\pm .01}$ & $0.72_{\pm .01}$ & $0.66_{\pm .01}$ & $0.77_{\pm .01}$ & $0.55_{\pm .01}$ \\
DiffS.-minp     & $0.22_{\pm .00}$ & $0.46_{\pm .01}$ & $0.60_{\pm .01}$ & $0.35_{\pm .01}$ & $0.43_{\pm .01}$ & $0.64_{\pm .01}$ & $0.38_{\pm .01}$ & $0.20_{\pm .00}$ & $0.44_{\pm .01}$ & $0.62_{\pm .01}$ & $0.65_{\pm .01}$ & $0.47_{\pm .01}$ & $0.63_{\pm .01}$ & $0.39_{\pm .01}$ \\
\hline
\end{tabular}
}
\caption{Aggregate results over 5 outputs sampled for each of the 1000 prompts from the XSum dataset for the instructed model (left) and the pre-trained model (right). The mean and $95\%$ confidence interval are reported for all the metrics.
\label{tab:xsum}}
\end{table*}

\textbf{Experimental Results.}
%%% LLAMA2-7B
%As far as the instructed model is considered, as reported in Table \ref{tab:xsum} (left), all methods achieve the same ROUGE-$1$ and similarity performances, with very small differences in coherence. Confirming the well-known quality-diversity trade-off \citep{ippolito2019comparison}, those performing better on coherence are also the worst methods (by a small margin) in terms of diversity.
%
%%% LLAMA3.2-3B
With respect to the instructed model, as reported in Table \ref{tab:xsum} (left), all methods achieve similar quality scores, with the greedy strategy and \textit{DiffSampling-cut} performing slightly better, followed closely by min-$p$ and its relaxed variant, \textit{DiffSampling-minp}. The remaining sampling methods yield comparable scores, with the notable exception of $\eta$-sampling, which achieves the highest against-greedy diversity but the lowest similarity and coherence scores, thus confirming the well-known quality–diversity trade-off \citep{ippolito2019comparison}.

On the other hand, as shown in Table \ref{tab:xsum} (right), the quality metrics exhibit greater variation for the non-instructed model: \textit{DiffSampling-cut} outperforms all other sampling methods and performs on par with the greedy strategy. While it does not reach the same diversity scores as the other sampling methods, it provides consistent deviations from greedy.
%while increasing the cross-input EAD score. 
Just below, \textit{DiffSampling-minp} and min-$p$ obtain very similar scores across all metrics, while outperforming the other methods in terms of accuracy as they deviate less from greedy decoding. Finally, as expected, \textit{DiffSampling-lb} closely aligns with top-$p$, with negligible differences in diversity.
%Just below, \textit{DiffSampling-minp} and min-$p$ perform comparably across all metrics, exhibiting higher accuracy but reduced diversity relative to greedy decoding when compared to other sampling-based baselines. Finally, as expected, \textit{DiffSampling-lb} closely aligns with top-$p$, offering slightly lower accuracy but marginally greater diversity.
%\medskip

\subsection{Divergent Association Task} \label{dat_experiments}

\textbf{Experimental Setup.}
The third use case considers the divergent association task (DAT) \citep{chen2023probing}. Building on the theory that creativity is related to the capability of generating more divergent ideas \citep{beaty2014roles}, it requires participants to name unrelated words. In particular, the task is the following:

\begin{tcolorbox}[colback=bluechart!5!white,colframe=bluechart]
Write 10 nouns in English that are as irrelevant from each other as possible, in all meanings and uses of the words. Please note that the words you write should have only single word, only nouns (e.g., things, objects, concepts), and no proper nouns (e.g., no specific people or places).
\end{tcolorbox}

Then, their semantic distance can represent an objective measure of divergent thinking \citep{olson2021naming}. DAT is a useful case study for decoding strategies as it constrains the generation to different nouns (thus, assuming an optimal probability distribution, the tail due to smoothing should contain everything else) and requires generating terms that are as different as possible, which is quite the opposite to what typically happens in language modeling: an optimal strategy should exclude non-appropriate tokens but also not limit too much the space of possible tokens. 
We strictly follow the setup proposed by \citet{chen2023probing}. More concretely, given the embeddings of $n$ words, the DAT score is the average cosine distance between each pair of embeddings (then scaled as a percentage). We use GloVe embeddings \citep{pennington2014glove} and ask the model to generate a list of $10$ nouns. We discard outputs without at least $7$ distinct nouns and compute the DAT score for all other outputs over their first $7$ nouns. For completeness, Appendix \ref{dat_ten} reports results obtained when considering all $10$ nouns. We repeat the experiment $100$ times for non-greedy strategies to mitigate the sampling stochasticity.

\begin{figure}[ht]
    \centering
    \includegraphics[width=1.\textwidth]{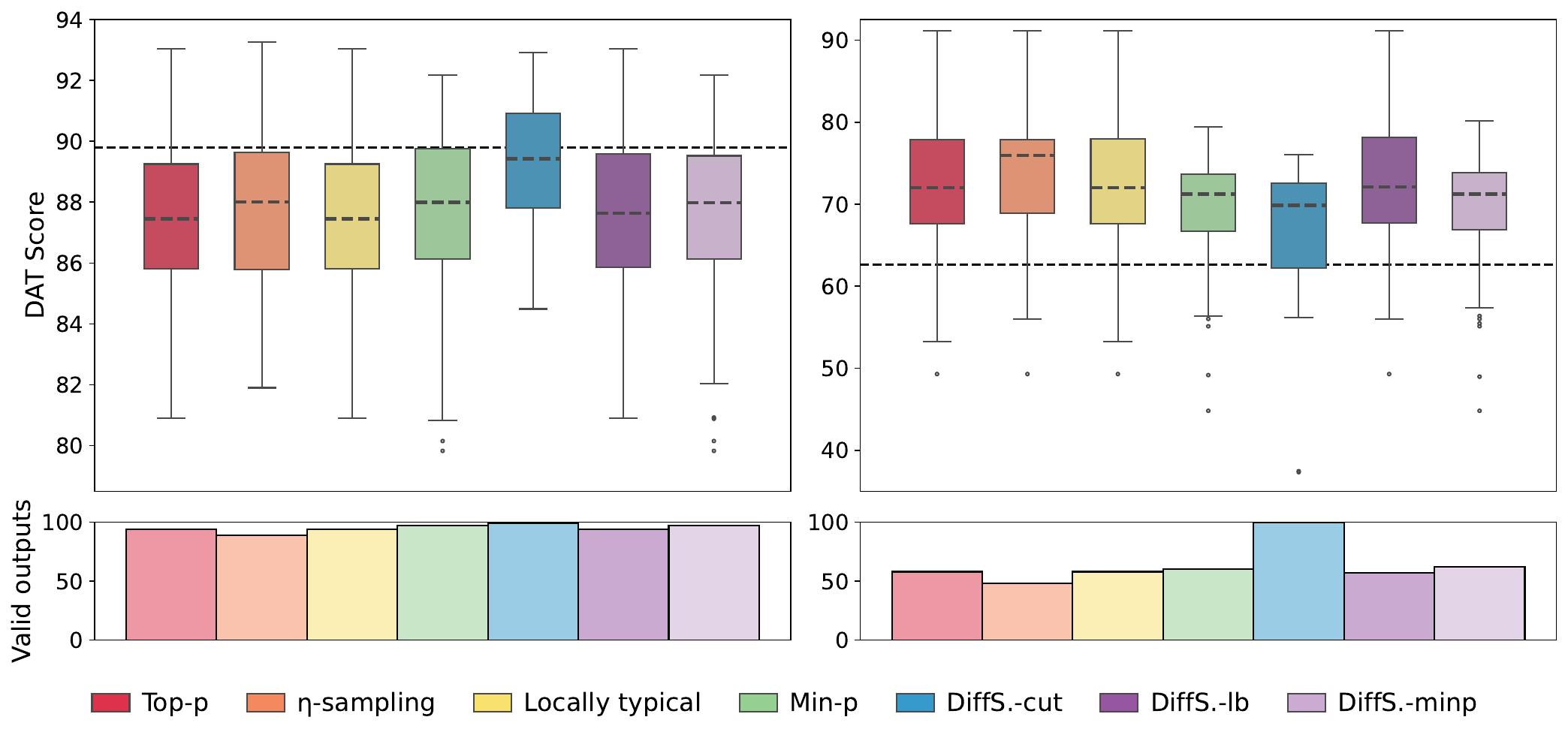}
    \caption{DAT scores for our methods and the baselines over the instructed (left) and pre-trained (right) model. Below, the number of valid outputs produced by each sampling strategy. The dashed line reports the greedy score.}
    \label{fig:dat_instruct}
\end{figure}

\textbf{Experimental Results.}
Figure \ref{fig:dat_instruct} summarizes the DAT results, reporting both diversity (DAT score) and quality (count of valid outputs) measures. For \texttt{Meta-Llama-3-8B-Instruct}, the greedy strategy produces a strong set of nouns, achieving a DAT score higher than the average across all sampling methods. However, \textit{DiffSampling-cut} generates a better set in almost half of the generations, and always produces a valid set of nouns.
%\textit{DiffSampling-cut} has the highest average score (even if lower than the greedy score), and generates only valid outputs. 
Instead, the other sampling schemes can produce better scores only occasionally, while sometimes failing at providing a valid set of nouns, and both \textit{DiffSampling-lb} and \textit{DiffSampling-minp} perform slightly better or almost identically to the top-$p$ and min-$p$, respectively. 
The results for the pre-trained version \texttt{Meta-Llama-3-8B} are quite different, and the quality-diversity trade-off is more pronounced.
\textit{DiffSampling-cut} is substantially better than the greedy strategy, and it produces only valid outputs. 
However, all other methods achieve higher DAT scores while producing significantly fewer valid outputs. Although very similar, both \textit{DiffSampling-minp} and \textit{DiffSampling-lb} outperform their min-$p$ and top-$p$ counterparts, yielding either slightly higher scores or a greater number of valid outputs.

\subsection{WritingPrompts} \label{wp_experiments}

\textbf{Experimental Setup.}
The previous case studies focus on the generation of short or very short outputs. However, certain issues emerge only in longer-form tasks—for example, the tendency of greedy decoding to repeat tokens, thereby degrading text quality \citep{fu2021theoretical}. To address this limitation, the final case study involves generating stories of up to 1024 tokens using inputs from the WritingPrompts dataset \citep{fan2018hierarchical}, which comprises a large collection of prompts sourced from Reddit's WritingPrompts forum. In particular, we sample 500 test prompts among those labeled as \textit{standard} prompts (i.e., that start with \texttt{[WP]}), and we generate 5 outputs for each sampling scheme. Then, we evaluate their quality through their coherence (COH) with the prompt as the cosine similarity between their SimCSE embeddings \citep{gao2021simcse}; instead, diversity is computed through the per-input EAD and SBERT metrics, i.e., calculated among the outputs sampled given the same prompt \citep{kirk2024understanding}.

\begin{table}[ht]
\centering
\resizebox{.67\textwidth}{!}{%
\begin{tabular}{|L{2.3cm}||C{1.3cm}|C{1.3cm}C{1.3cm}||C{1.3cm}|C{1.3cm}C{1.3cm}|} 
\hline
Model: & \multicolumn{3}{c||}{RLHF-instructed} & \multicolumn{3}{c|}{Pre-trained} \\
\hline %  $\uparrow$
Method & \multicolumn{1}{c|}{Quality} & \multicolumn{2}{c||}{Per-Input Diversity} & \multicolumn{1}{c|}{Quality} & \multicolumn{2}{c|}{Per-Input Diversity} \\
\hline
\textcolor{white}{placeholder} & COH & EAD & SBERT & COH & EAD & SBERT \\
\hline
Greedy          & $0.44_{\pm .01}$ & $0.17_{\pm .01}$ & - & $0.59_{\pm .01}$ & $0.07_{\pm .00}$ & - \\
Top-$p$         & $0.42_{\pm .01}$ & $0.73_{\pm .00}$ & $0.25_{\pm .00}$ & $0.42_{\pm .01}$ & $0.64_{\pm .00}$ & $0.58_{\pm .00}$ \\
$\eta$-sampling & $0.42_{\pm .01}$ & $0.80_{\pm .00}$ & $0.28_{\pm .00}$ & $0.40_{\pm .01}$ & $0.77_{\pm .00}$ & $0.60_{\pm .00}$ \\
Locally typical & $0.42_{\pm .01}$ & $0.73_{\pm .00}$ & $0.25_{\pm .00}$ & $0.42_{\pm .01}$ & $0.64_{\pm .00}$ & $0.58_{\pm .00}$ \\
Min-$p$         & $0.43_{\pm .01}$ & $0.71_{\pm .00}$ & $0.23_{\pm .00}$ & $0.51_{\pm .01}$ & $0.35_{\pm .01}$ & $0.46_{\pm .00}$ \\
DiffS.-cut      & $0.43_{\pm .01}$ & $0.63_{\pm .00}$ & $0.19_{\pm .00}$ & $0.60_{\pm .01}$ & $0.15_{\pm .00}$ & $0.31_{\pm .01}$ \\
DiffS.-lb       & $0.42_{\pm .01}$ & $0.73_{\pm .00}$ & $0.25_{\pm .00}$ & $0.41_{\pm .01}$ & $0.67_{\pm .00}$ & $0.58_{\pm .00}$ \\
DiffS.-minp     & $0.43_{\pm .01}$ & $0.71_{\pm .00}$ & $0.23_{\pm .00}$ & $0.51_{\pm .01}$ & $0.36_{\pm .00}$ & $0.47_{\pm .00}$ \\
\hline
\end{tabular}
}
\caption{Aggregate results for the WritingPrompts dataset for the instructed model (left) and the pre-trained model (right). The mean and the $95\%$ confidence interval for the full set of answers are reported for all the metrics.
\label{tab:wp}}
\end{table}

\textbf{Experimental Results.}
Table~\ref{tab:wp} reports the results for both instructed (left) and pre-trained (right) models. For the former, coherence remains largely consistent across all methods, while diversity metrics vary depending on the greediness of the approach: \textit{DiffSampling-cut} produces outputs that are significantly different from each other, though still less diverse than those generated by the other sampling-based baselines, among which $\eta$-sampling achieves the best performance. In contrast, coherence varies more noticeably for the pre-trained model, where \textit{DiffSampling-cut} achieves the highest score alongside the greedy strategy, but with substantial improvements in diversity—highlighted by near-zero scores for the greedy strategy, which suggest it tends to repeat the same tokens indefinitely. Instead, \textit{DiffSampling-minp} and \textit{DiffSampling-lb} match the coherence levels of min-$p$ and top-$p$, respectively, while offering notable gains in EAD diversity for the non-instructed model, likely due to certain tokens being correctly preserved from truncation.

\subsection{Temperature Scaling}

Finally, we experiment with different temperature values $\tau$, i.e., $0.6$, $1$, $1.5$, $2$, and $10$. As detailed above, to preserve the mathematical guarantees of our approach, we apply temperature \textit{after} the \textit{DiffSampling} truncation, while our baselines apply this before. Due to the different nature of temperature scaling, this comparison is intended only to highlight the impact of temperature position, rather than to imply that our method is superior to the baseline. To allow for a fairer analysis, we also report quality scores for \textit{DiffSampling} when applying temperature before truncation (see Appendix \ref{temp_position} for a full comparison between temperature before and after truncation). 
As shown by Figure \ref{fig:temp_acc}, \textit{DiffSampling+temperature} preserves the output quality, and relevant differences only occur with our two relaxations and pre-trained models. Instead, the output quality rapidly drops with higher temperatures for the min-$p$ (by far the best of our baselines at $\tau > 1$) and top-$p$ baselines. In particular, the non-significant loss in quality for \textit{DiffSampling-cut} confirms that our truncation strategy only preserves correct tokens. At the same time, temperature scaling has an (overall positive) impact on diversity; we refer to Appendix \ref{full_temperature} for a detailed analysis of how all our quality and diversity metrics change at different $\tau$.

\begin{figure}[t]
    \centering
    \includegraphics[width=1.\textwidth]{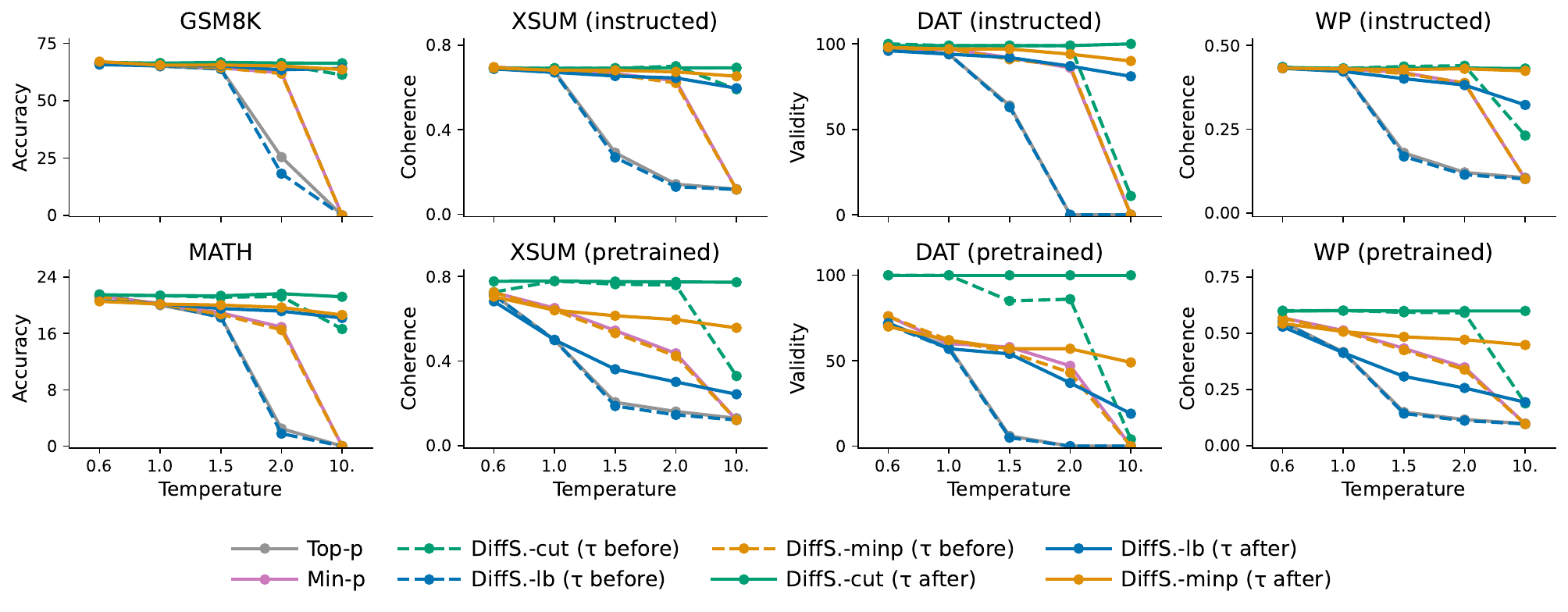}
    \caption{Average quality scores across different temperature values for top-$p$, min-$p$, and our methods when applying temperature before (dashed line) and after (full line) the truncation.
    }
    \label{fig:temp_acc}
\end{figure}

\section{Discussion}
Overall, \textit{DiffSampling-cut} has demonstrated performance better than or equal to the greedy strategy. Additionally, it offers the potential for greater diversity. By introducing a lower bound on the preserved total probability or an upper bound on the probability of truncated ones, the method can further relax selection constraints, enabling greater output diversity at the expense of a marginal reduction in prediction accuracy. Once truncation is applied, sampling at higher temperatures becomes viable, promoting greater variability without significantly compromising output quality. 

However, selecting the most appropriate method and hyperparameters is not straightforward and requires a case-by-case analysis. If a small validation set is available, the choice of which strategy and parameters can be made empirically. Otherwise, our experiments show that
\textit{DiffSampling-cut} works best when the task requires precision: whenever a user might otherwise rely on a greedy decoding strategy or a very low temperature, it enhances diversity without compromising accuracy.
\textit{DiffSampling-lb} fosters output diversity by trading off some accuracy, especially at higher values of $p_{lb}$ and, thus, appears most appropriate for divergent solutions. \textit{DiffSampling-minp} is more well-balanced. Both can be used in place of top-$p$ and min-$p$ to ``correct'' them and potentially improve their diversity with no additional overhead. Increasing the temperature has proven highly effective for fine-tuned models across all methods, whenever it is not strictly necessary to preserve the original distribution.

Our experiments were limited to relatively small LLMs, although preliminary results suggest that the same findings hold for larger models as well (see Appendix \ref{scaling_model_size} for a more detailed analysis), and based on quantitative, automatic evaluation. Several of the adopted metrics exhibit significant limitations (e.g., \citealp{schluter2017limits}), often failing to align with human judgments \citep{tevet2021evaluating}. Moreover, abstract concepts such as originality and creativity remain inherently difficult to define with precision \citep{franceschelli2023creativity}. We plan to experiment with human evaluators to verify whether the quality and diversity that \textit{DiffSampling} aims to provide are also perceived by potential users.

\section{Conclusion} \label{conclusion}

In this paper, we have presented \textit{DiffSampling}, a novel family of decoding strategies based on the analysis of the next-token distribution. In particular, given the distribution sorted in descending order, we compute the forward difference approximation of its discrete derivative and use it to remove tokens after its minimum value (possibly together with relaxations to allow for more diversity). In this way, we avoid incorrect tokens under the learned distribution.
We have experimented with four different tasks, and our method has consistently performed at least as well as similar strategies in terms of accuracy, despite sampling from a larger set of tokens, which has a positive impact on diversity.

Our research agenda includes investigating whether combining \textit{DiffSampling} with complementary techniques, such as re-ranking or controllable generation, can lead to further improvements in output quality. We also plan to leverage additional properties of the underlying probability distribution (e.g., its entropy \citep{potraghloo2025toph}), beyond token likelihoods, to guide generation toward desired characteristics such as coherence, novelty, or user-specific preferences. These directions open up promising opportunities for enhancing the adaptability of text generation systems in general-purpose and task-specific settings.

\subsubsection*{Broader Impact Statement}
While our decoding scheme should, in theory, not increase the risk of generating tokens outside the true support, it may still produce unsafe content in certain contexts if the learned distribution itself is unsafe (e.g., containing learned biases, inappropriate language, or misleading information). Thus, it is important to continue using safety filters and domain constraints. Finally, we also perform a small check to ensure that \textit{DiffSampling} does not increase unsafe content rates in the WritingPrompts use case, as it is the most open-ended generation task. Using \texttt{Llama-Guard-3-8B} \citep{inan2023llama}, we found that the probability of unsafe outputs generated by our methods is identical to that of the corresponding methods at a temperature of $1.0$ or lower. However, the application of temperature after the truncation dramatically reduces the rate of unsafe generated text, especially for \textit{DiffSampling-cut} and \textit{DiffSampling-minp}. We report our results in Table \ref{tab:safety}.

\begin{table}[ht!]
\centering
\resizebox{\textwidth}{!}{%
\begin{tabular}{|L{2.5cm}||C{1.2cm}C{1.2cm}C{1.2cm}C{1.2cm}C{1.2cm}C{1.2cm}|C{1.2cm}C{1.2cm}C{1.2cm}C{1.2cm}C{1.2cm}C{1.2cm}|} 
\hline
Method & \multicolumn{6}{c|}{Pre-Trained} & \multicolumn{6}{c|}{Instructed} \\
\hline
\textcolor{white}{placeholder} & $\tau = 0.0$ & $\tau = 0.6$ & $\tau = 1.0$ & $\tau = 1.5$ & $\tau = 2.0$ & $\tau = 10.$ & $\tau = 0.0$ & $\tau = 0.6$ & $\tau = 1.0$ & $\tau = 1.5$ & $\tau = 2.0$ & $\tau = 10.$ \\
 \hline
Baselines & \multicolumn{6}{c|}{\textcolor{white}{placeholder}} & \multicolumn{6}{c|}{\textcolor{white}{placeholder}} \\
\hline
Greedy & $0.12_{\pm .02}$ & - & - & - & - & - & $0.04_{\pm .01}$ & - & - & - & - & - \\
Top-$p$         & - & $0.12_{\pm .01}$ & $0.09_{\pm .01}$ & $0.62_{\pm .01}$ & $0.70_{\pm .01}$ & $0.69_{\pm .01}$ & - & $0.04_{\pm .01}$ & $0.05_{\pm .01}$ & $0.57_{\pm .01}$ & $0.69_{\pm .01}$ & $0.71_{\pm .01}$ \\
$\eta$-sampling & - & $0.11_{\pm .01}$ & $0.11_{\pm .01}$ & $0.72_{\pm .01}$ & $0.70_{\pm .01}$ & $0.68_{\pm .01}$ & - & $0.04_{\pm .01}$ & $0.11_{\pm .01}$ & $0.62_{\pm .01}$ & $0.69_{\pm .01}$ & $0.69_{\pm .01}$ \\
Locally typical & - & $0.12_{\pm .01}$ & $0.09_{\pm .01}$ & $0.66_{\pm .01}$ & $0.69_{\pm .01}$ & $0.68_{\pm .01}$ & - & $0.04_{\pm .01}$ & $0.05_{\pm .01}$ & $0.58_{\pm .01}$ & $0.69_{\pm .01}$ & $0.70_{\pm .01}$ \\
Min-$p$         & - & $0.11_{\pm .01}$ & $0.10_{\pm .01}$ & $0.09_{\pm .01}$ & $0.19_{\pm .01}$ & $0.68_{\pm .01}$ & - & $0.05_{\pm .01}$ & $0.04_{\pm .01}$ & $0.04_{\pm .01}$ & $0.30_{\pm .01}$ & $0.69_{\pm .01}$ \\
\hline
Ours ($\tau$ before) & \multicolumn{6}{c|}{\textcolor{white}{placeholder}} & \multicolumn{6}{c|}{\textcolor{white}{placeholder}} \\
\hline
DiffS.-cut & - & $0.13_{\pm .01}$ & $0.12_{\pm .01}$ & $0.12_{\pm .01}$ & $0.12_{\pm .01}$ & $0.58_{\pm .01}$ & - & $0.04_{\pm .01}$ & $0.04_{\pm .01}$ & $0.04_{\pm .01}$ & $0.04_{\pm .01}$ & $0.47_{\pm .01}$ \\
DiffS.-lb & - & $0.12_{\pm .01}$ & $0.10_{\pm .01}$ & $0.71_{\pm .01}$ & $0.70_{\pm .01}$ & $0.68_{\pm .01}$ & - & $0.04_{\pm .01}$ & $0.05_{\pm .01}$ & $0.64_{\pm .01}$ & $0.69_{\pm .01}$ & $0.69_{\pm .01}$ \\
DiffS.-minp & - & $0.11_{\pm .01}$ & $0.10_{\pm .01}$ & $0.09_{\pm .01}$ & $0.22_{\pm .01}$ & $0.68_{\pm .01}$ & -  & $0.04_{\pm .01}$ & $0.04_{\pm .01}$ & $0.05_{\pm .01}$ & $0.30_{\pm .01}$ & $0.69_{\pm .01}$ \\
\hline
Ours ($\tau$ after) & \multicolumn{6}{c|}{\textcolor{white}{placeholder}} & \multicolumn{6}{c|}{\textcolor{white}{placeholder}} \\
\hline
DiffS.-cut & - & $0.12_{\pm .01}$ & $0.12_{\pm .01}$ & $0.12_{\pm .01}$ & $0.12_{\pm .01}$ & $0.12_{\pm .01}$ & - & $0.04_{\pm .01}$ & $0.04_{\pm .01}$ & $0.04_{\pm .01}$ & $0.04_{\pm .01}$ & $0.05_{\pm .01}$ \\
DiffS.-lb & - & $0.11_{\pm .01}$ & $0.10_{\pm .01}$ & $0.44_{\pm .01}$ & $0.52_{\pm .01}$ & $0.51_{\pm .01}$ & - & $0.04_{\pm .01}$ & $0.05_{\pm .01}$ & $0.32_{\pm .01}$ & $0.37_{\pm .01}$ & $0.40_{\pm .01}$ \\
DiffS.-minp & - & $0.11_{\pm .01}$ & $0.10_{\pm .01}$ & $0.09_{\pm .01}$ & $0.09_{\pm .01}$ & $0.08_{\pm .01}$ & - & $0.04_{\pm .01}$ & $0.04_{\pm .01}$ & $0.04_{\pm .01}$ & $0.05_{\pm .01}$ & $0.04_{\pm .01}$ \\
 \hline
\end{tabular}
}
\caption{Unsafe probability of WritingPrompts outputs for baselines and our methods at different temperatures according to \texttt{Llama-Guard-3-8B} The mean and the $95\%$ confidence interval for the full set of answers are reported.
\label{tab:safety}}
\end{table}

%\subsubsection*{Author Contributions}
%If you'd like to, you may include a section for author contributions as is done
%in many journals. This is optional and at the discretion of the authors. Only add
%this information once your submission is accepted and deanonymized. 
%
%\subsubsection*{Acknowledgments}
%Use unnumbered third level headings for the acknowledgments. All
%acknowledgments, including those to funding agencies, go at the end of the paper.
%Only add this information once your submission is accepted and deanonymized. 

\bibliography{biblio}
\bibliographystyle{tmlr}

\appendix

\section{Formal Analysis} \label{formal_analysis}

In the following, we aim to formally define the conditions under which our truncation strategy is \textit{safe}, i.e., the conditions under which all tokens up to $\argmin \Delta p(x_t^{[n]})$ have a positive probability under the true distribution.

According to \cite{hewitt2022truncation}, we can define the true probability distribution as $P_\star(\cdot | x_{<i}) = \{p_\star^{[1]}, \ldots, p_\star^{[|\mathcal{V}|]}\}$ with $\sum_{i=1}^{|\mathcal{V}|} p_\star^{[i]} = 1$ and $p_\star^{[i]} \geq p_\star^{[i+1]} \; \forall \; i \in [1, |\mathcal{V}|-1]$, and where exists a $D < |\mathcal{V}|$ such that $p_\star^{[i]} = 0 \; \forall \; i > D$ (with $D = |\mathcal{V}|$, any truncation strategy is \textit{safe}). Let us define a smoothing distribution $Q(\cdot | x_{<i}) = \{q^{[1]}, \ldots, q^{[|\mathcal{V}|]}\}$ with $q^{[i]} \in (\frac{1 - \delta}{|\mathcal{V}|}, \frac{1 + \delta}{|\mathcal{V}|}) \; \forall \; i \in [1, |\mathcal{V}|]$ and $\delta$ constant between $0$ and $1$. Then, the learned distribution $P_{\theta}(\cdot|x_{<i}) = \{p^{[1]}, \ldots, p^{[|\mathcal{V}|]}\}$ with $\sum_{i=1}^{|\mathcal{V}|} p^{[i]} = 1$ and $p^{[i]} \geq p^{[i+1]} \; \forall \; i \in [1, |\mathcal{V}|-1]$ can be defined as the weighted sum between the true probability distribution and the smoothing distribution:

\begin{equation}
    P_{\theta}(\cdot|x_{<i}) = \lambda_{<i} P_\star(\cdot|x_{<i}) + (1 - \lambda_{<i}) Q(\cdot|x_{<i})
\end{equation}

\noindent where $\lambda_{<i} \in (0, 1]$. However, according to \cite{hewitt2022truncation}, we can assume that $\lambda_{<i} \geq \max(\overline{\lambda}_{<i}, \overline{\lambda})$, where $\overline{\lambda}_{<i} = 1 - \frac{|\mathcal{V}| \alpha \exp^{-H_{<i}}}{1 + \delta}$ with $H_{<i}$ entropy of $P_\star(\cdot|x_{<i})$, and $\overline{\lambda}$ constant close to $1$; for simplicity, we will follow \cite{hewitt2022truncation} and assume to have $\overline{\lambda} = 0.8$. This has two implications. First, the contribution provided by the smoothing distribution is bounded by $\alpha \exp^{-H_{<i}}$ with $\alpha \in [0, 1]$ and generally very small, so the actual contribution depends on the entropy of the true distribution. Second, the weighting factor $\lambda_{<i}$ has a lower bound equal to $\overline{\lambda} = 0.8$.

In this article, we propose to truncate the learned probability distribution at an index $K$ such that $K = \argmin_i (p^{[i+1]} - p^{[i]}) = \argmax_i (p^{[i]} - p^{[i+1]})$. The truncation is \textit{safe} when $K \leq D$, i.e., if the truncation only preserves tokens with a non-zero probability under the true distribution.

\begin{prop} \label{prop_1}
    Given a learned probability distribution $P_{\theta}(\cdot|x_{<i}) = \{p^{[1]}, \ldots, p^{[|\mathcal{V}|]}\}$ sorted in descending order, the truncation performed by means of $\argmax_i (p^{[i]} - p^{[i+1]})$ only preserves tokens from the support of the true distribution $P_\star(\cdot | x_{<i})$ if $\max_i(p_\star^{[i]} - p_\star^{[i+1]}) > \frac{1}{|\mathcal{V}|}$.
\end{prop}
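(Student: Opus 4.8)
The plan is to show directly that $K := \argmax_i (p^{[i]} - p^{[i+1]}) \leq D$, i.e.\ that the largest consecutive gap in the sorted learned distribution is attained at an index inside the true support. The argument pits an upper bound on every gap that lies entirely in the smoothing tail against a lower bound on a single gap located inside the support, and shows that the latter strictly dominates precisely under the stated hypothesis $\max_i(p_\star^{[i]} - p_\star^{[i+1]}) > \frac{1}{|\mathcal{V}|}$ together with $\lambda \geq \overline{\lambda} = 0.8$.

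First I would control the tail. For every index $i > D$ both $p_\star^{[i]}$ and $p_\star^{[i+1]}$ vanish, so the learned probabilities there reduce to $(1-\lambda)q^{[\cdot]}$. Since each smoothing value lies in the interval $\left(\frac{1-\delta}{|\mathcal{V}|}, \frac{1+\delta}{|\mathcal{V}|}\right)$ of width $\frac{2\delta}{|\mathcal{V}|}$, any two of them differ by at most that width, giving the uniform bound
\begin{equation}
    p^{[i]} - p^{[i+1]} < (1-\lambda)\,\frac{2\delta}{|\mathcal{V}|} \qquad \text{for all } i > D .
\end{equation}
Next I would exhibit one gap inside the support that beats this bound. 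Let $i^\star = \argmax_i (p_\star^{[i]} - p_\star^{[i+1]})$, noting $i^\star \leq D$ since all true gaps with $i > D$ are zero while the cliff at $i = D$ is positive. Writing the learned gap at $i^\star$ as $\lambda\!\left(p_\star^{[i^\star]} - p_\star^{[i^\star+1]}\right) + (1-\lambda)\!\left(q^{[i^\star]} - q^{[i^\star+1]}\right)$ and bounding the smoothing contribution below by $-(1-\lambda)\frac{2\delta}{|\mathcal{V}|}$, the hypothesis gives a learned gap strictly larger than $\lambda\frac{1}{|\mathcal{V}|} - (1-\lambda)\frac{2\delta}{|\mathcal{V}|}$. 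Comparing with the tail bound, the support gap wins whenever $\lambda \geq 4\delta(1-\lambda)$, and this holds because $\lambda \geq 0.8$ forces $4\delta(1-\lambda) \leq 0.8\,\delta \leq 0.8 \leq \lambda$. Hence no tail index can attain the maximum, so $K \leq D$ and the truncation is \emph{safe}.

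The main obstacle is the reordering induced by the mixture: the entry-wise sum $\lambda P_\star + (1-\lambda)Q$ must be re-sorted to obtain $P_\theta$, and this sort may permute tokens relative to the sorted order of $P_\star$, so the clean index alignment used in the two bounds above is not automatic (a support token with tiny $p_\star^{[i]}$ could, near index $D$, drop below a tail token with large $q$). I would dispose of this by exploiting the near-uniformity of $Q$: adding a distribution whose entries all lie within $\frac{2\delta}{|\mathcal{V}|}$ of one another perturbs each consecutive gap of $\lambda P_\star$ by at most $(1-\lambda)\frac{2\delta}{|\mathcal{V}|}$ in absolute value, so the dominant true gap of size $>\frac{1}{|\mathcal{V}|}$ survives (up to this additive error) in the sorted $P_\theta$ while the purely-smoothing gaps stay below $(1-\lambda)\frac{2\delta}{|\mathcal{V}|}$. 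Making the boundary region around index $D$ precise—showing the permutation cannot move enough mass to manufacture a tail gap larger than the surviving support gap—is the step I expect to require the most care, and it is exactly where the bound $\lambda \geq 0.8$ (which keeps $1-\lambda$ small) does the work.
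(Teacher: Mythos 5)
Your central comparison is the paper's proof, step for step: you bound every gap inside the smoothing tail above by $(1-\lambda)\frac{2\delta}{|\mathcal{V}|}$, bound the learned gap at the maximal true gap below by $\lambda\Delta_\star - (1-\lambda)\frac{2\delta}{|\mathcal{V}|}$ with $\Delta_\star = \max_i(p_\star^{[i]}-p_\star^{[i+1]})$, and compare using $\lambda \geq \overline{\lambda}=0.8$ and $\delta \leq 1$; your condition $\lambda \geq 4\delta(1-\lambda)$ is the paper's condition $\Delta_\star > \frac{4\delta}{|\mathcal{V}|}\frac{1-\lambda}{\lambda}$ rearranged. Where you genuinely go beyond the paper is in taking the re-sorting obstacle seriously: the paper's proof writes $p^{[i]} = \lambda p_\star^{[i]} + (1-\lambda)q^{[i]}$ throughout, tacitly assuming that mixing with $Q$ preserves the ranking of $P_\star$, which is false in general (a support token with tiny $p_\star$ can be overtaken by a tail token with larger $q$). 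So measured against the paper's own argument you have done everything it does and, in addition, identified a real gap in it --- but your resolution of that gap is only sketched, and as stated (``perturbs each consecutive gap \ldots by at most $(1-\lambda)\frac{2\delta}{|\mathcal{V}|}$'') it controls gaps by \emph{position}, which by itself cannot tell you that the tokens occupying the preserved positions are support tokens; that is precisely the permutation problem again.

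The step you flag can, however, be closed cleanly with one more idea. Let $i^\star$ attain $\Delta_\star$, call the $i^\star$ tokens with true probability $\geq p_\star^{[i^\star]}$ \emph{high} and all others (including the entire tail) \emph{low}. Every high token has learned probability at least $\lambda p_\star^{[i^\star]} + (1-\lambda)\frac{1-\delta}{|\mathcal{V}|}$ and every low token at most $\lambda p_\star^{[i^\star+1]} + (1-\lambda)\frac{1+\delta}{|\mathcal{V}|}$; under the hypothesis the difference $\lambda\Delta_\star - (1-\lambda)\frac{2\delta}{|\mathcal{V}|}$ is positive, so after sorting, positions $1,\dots,i^\star$ are exactly the high tokens and the gap at position $i^\star$ is at least this difference. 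Now let $m$ be the position of the highest-ranked tail token, so $m > i^\star$ and every position below $m$ holds a support token. For any position $j \geq m$ one has $p^{[j]} \leq p^{[m]} < (1-\lambda)\frac{1+\delta}{|\mathcal{V}|}$, while \emph{every} learned probability, support or tail, exceeds $(1-\lambda)\frac{1-\delta}{|\mathcal{V}|}$; hence every gap at a position $\geq m$ lies below $(1-\lambda)\frac{2\delta}{|\mathcal{V}|}$, with no alignment between positions and particular $q$-values required --- this is what rescues your boundary-region step. Since the gap at $i^\star < m$ strictly exceeds that bound, the argmax satisfies $K < m$ and only support tokens are preserved. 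One residual case that neither you nor the paper treats: under the paper's definition of $\Delta p$, the ``gap'' at the last position is $p^{[|\mathcal{V}|]}$ itself, and if the argmax landed there nothing would be truncated; ruling this out requires $\lambda\Delta_\star - (1-\lambda)\frac{2\delta}{|\mathcal{V}|} > (1-\lambda)\frac{1+\delta}{|\mathcal{V}|}$, which still follows from the stated constants, since $(1-\lambda)\frac{1+3\delta}{|\mathcal{V}|} \leq \frac{0.8}{|\mathcal{V}|} < \lambda\Delta_\star$. With these two patches your argument is complete and is in fact more rigorous than the paper's.
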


\begin{proof}
    The truncation is \textit{safe} whenever $K \leq D$, i.e., whenever the maximum difference between a token from the true support and its next token is greater than the maximum difference between tokens from outside the true support. The maximum difference between tokens from outside the true support, i.e., with $p_\star^{[i]} = p_\star^{[i+1]} = 0$, is:

\begin{equation}
\begin{split}
    \max_{i>D} (p^{[i]} - p^{[i+1]}) &= \max_{i>D} ((1 - \lambda_{<i}) q^{[i]} - (1 - \lambda_{<i}) q^{[i+1]}) \\ &= (1 - \lambda_{<i}) (\frac{1 + \delta}{|\mathcal{V}|} - \frac{1 - \delta}{|\mathcal{V}|}) = (1 - \lambda_{<i}) \frac{2 \delta}{|\mathcal{V}|}.
\end{split}
\end{equation}

Instead, the maximum difference between a token from the true support and its next token is given by:

\begin{equation}
\begin{split}
    \max_{i \leq D} (p^{[i]} - p^{[i+1]}) &= \max_{i \leq D} (\lambda_{<i} p_\star^{[i]} + (1 - \lambda_{<i}) q^{[i]} - \lambda_{<i} p_\star^{[i+1]} - (1 - \lambda_{<i}) q^{[i+1]})\\ &= \max_{i \leq D} (\lambda_{<i} (p_\star^{[i]} - p_\star^{[i+1]}) + (1 - \lambda_{<i}) (q^{[i]} - q^{[i+1]})).
\end{split}   
\end{equation}

This value is lower-bounded by $\max_{i \leq D} (\lambda_{<i} (p_\star^{[i]} - p_\star^{[i+1]})) + \min_{i \leq D} ((1 - \lambda_{<i}) (q^{[i]} - q^{[i+1]})) = \lambda_{<i} \max_{i \leq D} (p_\star^{[i]} - p_\star^{[i+1]}) + (1 - \lambda_{<i}) \min_{i \leq D} (q^{[i]} - q^{[i+1]})$. The second term is exactly the opposite of the maximum value computed above: $\min_{i} (q^{[i]} - q^{[i+1]}) = \frac{1 - \delta}{|\mathcal{V}|} - \frac{1 + \delta}{|\mathcal{V}|} = - \frac{2 \delta}{|\mathcal{V}|}$. If we define $\Delta_\star^{[i]} = \max_{i \leq D} (p_\star^{[i]} - p_\star^{[i+1]})$, we obtain a lower-bounded maximum given by $\lambda_{<i} \Delta_\star^{[i]} - (1 - \lambda_{<i}) \frac{2 \delta}{|\mathcal{V}|}$.

To have $K \leq D$, we impose:

\begin{equation}
\begin{split}
    \lambda_{<i} \Delta_\star^{[i]} - (1 - \lambda_{<i}) \frac{2 \delta}{|\mathcal{V}|} &> (1 - \lambda_{<i}) \frac{2 \delta}{|\mathcal{V}|} \\
     \lambda_{<i} \Delta_\star^{[i]}  &> (1 - \lambda_{<i}) \frac{4 \delta}{|\mathcal{V}|} \\
     \Delta_\star^{[i]} &> \frac{4 \delta}{|\mathcal{V}|} \frac{(1 - \lambda_{<i})}{\lambda_{<i}}
\end{split}
\end{equation}

Since $\lambda_{<i}$ is lower-bounded by $\overline{\lambda}$, the second term can be reduced to $\frac{(1 - \overline{\lambda})}{\overline{\lambda}}$. 
As suggested by \cite{hewitt2022truncation}, we assumed $\overline{\lambda} = 0.8$; we obtain that our truncation strategy is \textit{safe} if $\Delta_\star^{[i]} > \frac{4 \delta}{|\mathcal{V}|} \frac{1}{4} = \frac{\delta}{|\mathcal{V}|}$. Since $\delta$ is upper-bounded to $1$, we get a lower bound of $\Delta_\star^{[i]} > \frac{1}{|\mathcal{V}|}$ that proves our proposition.
\end{proof}

\begin{prop} \label{prop_2}
    Given a learned probability distribution $P_{\theta}(\cdot|x_{<i}) = \{p^{[1]}, \ldots, p^{[|\mathcal{V}|]}\}$ sorted in descending order, the truncation performed by means of $\argmax_i (p^{[i]} - p^{[i+1]})$ preserves tokens only from the support with size $D$ of the true distribution $P_\star(\cdot | x_{<i})$ if $D < \sqrt{2|\mathcal{V}|}$.
\end{prop}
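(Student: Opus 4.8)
The plan is to reduce Proposition 2 to the sufficient condition already established in Proposition 1, namely that the truncation is \emph{safe} whenever $\Delta_\star^{[i]} = \max_{i \leq D}(p_\star^{[i]} - p_\star^{[i+1]}) > \frac{1}{|\mathcal{V}|}$. It therefore suffices to show that the hypothesis $D < \sqrt{2|\mathcal{V}|}$ forces the largest consecutive gap of the true distribution to exceed $\frac{1}{|\mathcal{V}|}$. In other words, I would isolate and prove a purely combinatorial lemma: any monotonically decreasing probability vector supported on exactly $D$ entries must have a maximal consecutive gap of at least $\frac{2}{D(D+1)}$.

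First I would set $p_\star^{[D+1]} = 0$ so that the telescoping identity $p_\star^{[i]} = \sum_{j=i}^{D}(p_\star^{[j]} - p_\star^{[j+1]})$ holds for every $i \leq D$. Since each of the $D - i + 1$ summands is bounded above by $\Delta_\star^{[i]}$ (which is a constant, being the maximum over all gaps), I obtain the per-coordinate bound $p_\star^{[i]} \leq (D - i + 1)\,\Delta_\star^{[i]}$. Summing this inequality over $i = 1, \ldots, D$ and invoking normalization gives
\begin{equation}
1 = \sum_{i=1}^{D} p_\star^{[i]} \leq \Delta_\star^{[i]} \sum_{i=1}^{D}(D - i + 1) = \Delta_\star^{[i]}\,\frac{D(D+1)}{2},
\end{equation}
and hence $\Delta_\star^{[i]} \geq \frac{2}{D(D+1)}$. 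Combining this with the criterion of Proposition 1, the truncation is safe as soon as $\frac{2}{D(D+1)} > \frac{1}{|\mathcal{V}|}$, that is, $D(D+1) < 2|\mathcal{V}|$; approximating $D(D+1) \approx D^2$ in the regime of a small support inside a large vocabulary yields the stated threshold $D < \sqrt{2|\mathcal{V}|}$.

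I expect the main obstacle to be justifying that the lower bound $\frac{2}{D(D+1)}$ is the correct one rather than a weaker estimate. A naive argument using only $p_\star^{[1]} \geq \frac{1}{D}$ together with the average gap $p_\star^{[1]}/D$ gives merely $\Delta_\star^{[i]} \geq \frac{1}{D^2}$, and hence the weaker condition $D < \sqrt{|\mathcal{V}|}$; the factor of two is recovered precisely by the sharper per-coordinate bound $p_\star^{[i]} \leq (D-i+1)\Delta_\star^{[i]}$ used above.

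To confirm that $\frac{2}{D(D+1)}$ cannot be improved, I would exhibit the extremal distribution $p_\star^{[i]} = (D - i + 1)\frac{2}{D(D+1)}$, i.e. the linearly decreasing (arithmetic) law, for which every consecutive gap equals $\frac{2}{D(D+1)}$ and the entries sum to exactly $1$. This shows the inequality is tight, so no better universal threshold follows from monotonicity and normalization alone, and it also clarifies that the passage from $D(D+1) < 2|\mathcal{V}|$ to $D < \sqrt{2|\mathcal{V}|}$ is an approximation (the exact sufficient condition being the former).
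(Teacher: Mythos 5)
Your proposal is correct and follows essentially the same route as the paper: reduce to the criterion of Proposition \ref{prop_1}, lower-bound the maximal consecutive gap of any monotone distribution with support size $D$ by $\frac{2}{D(D+1)}$, and conclude that $D(D+1) < 2|\mathcal{V}|$ (approximately $D < \sqrt{2|\mathcal{V}|}$) suffices. The only difference is one of rigor, in your favor: the paper merely asserts the bound $\frac{2}{D(D+1)}$ by pointing at the equidistant configuration, whereas your telescoping estimate $p_\star^{[i]} \leq (D-i+1)\,\Delta_\star^{[i]}$ summed over $i$ actually proves it, with the equidistant (arithmetic) law exhibited separately as the tightness case.
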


\begin{proof}
    The maximum difference $\max_i(p_\star^{[i]} - p_\star^{[i+1]})$ is lower-bounded by $\frac{2}{D(D+1)}$. This bound holds when the differences between the first $D+1$ tokens (with the first $D$ tokens having a positive probability and the $D+1$-th zero probability) are equal, i.e., when the first $D+1$ tokens are equidistant. According to Proposition \ref{prop_1}, $\argmax_i (p^{[i]} - p^{[i+1]}) \leq D$ if $\max_i(p_\star^{[i]} - p_\star^{[i+1]}) > \frac{1}{|\mathcal{V}|}$. Thus, this is also true when $\frac{2}{D(D+1)} > \frac{1}{|\mathcal{V}|}$, i.e., if $D(D+1) < 2|\mathcal{V}|$, providing an upper bound for the true support size $D$ of $\approx \sqrt{2|\mathcal{V}|}$, which proves our proposition.
\end{proof}

In summary, our truncation strategy is safe whenever we have $\max(p_\star^{[i]} - p_\star^{[i+1]}) > \frac{1}{|\mathcal{V}|}$ or $D < \sqrt{2|\mathcal{V}|}$. To provide a practical intuition of the meaning of these conditions, consider the case of an LLM, whose vocabulary size $|\mathcal{V}|$ is usually in the order of $50000$. This sets the lower bound for the maximum difference at $0.00002$ and the upper bound for the true support size at $316$. If either of these conditions is satisfied, our truncation strategy is considered safe.

However, it is important to note that these two bounds represent only worst-case scenarios. First of all, we assumed that $\delta$ can be equal to $1$ and $\lambda_{<i} = \overline{\lambda} = 0.8$. In practice, $\delta$ will be much smaller than $1$, leading to a proportionally smaller lower bound for $\max(p_\star^{[i]} - p_\star^{[i+1]})$. Moreover, $\lambda_{<i}$ will also assume the value of $\overline{\lambda}_{<i}$, which is likely to be higher than $\overline{\lambda}$ when $H_{<i}$ is higher, i.e., for very homogeneous distributions (unbalanced distributions most likely have a $\max(p_\star^{[i]} - p_\star^{[i+1]}) > \frac{1}{|\mathcal{V}|}$). These two elements would make a realistic bound for $\max_i(p_\star^{[i]} - p_\star^{[i+1]})$ much looser. In addition, the proof of Proposition \ref{prop_2} considers an extremely conservative (and impossible to find in practice) lower bound for $\max_{i \leq D} (p^{[i]} - p^{[i+1]})$. Indeed, if all probabilities are equidistant, $\max_{i \leq D}(p^{[i]} - p^{[i+1]})$ is not lower-bounded by $\lambda_{<i} \frac{2}{D(D+1)} - (1 - \lambda_{<i}) \frac{2 \delta}{|\mathcal{V}|}$ but by $\lambda_{<i} \frac{2}{D(D+1)}$. While the latter does not provide a valid lower bound for $\max_{i \leq D} (p^{[i]} - p^{[i+1]})$, the fact that the used lower bound cannot actually be attained, combined with the considerations regarding $\delta$ and $\lambda_{<i}$ discussed above, makes the upper bound for $D$ much looser in practice.

\section{Computational Infrastructure and Implementation Details}

All experiments were carried out on a Linux-based local server equipped with 2 80GB NVIDIA H100 GPUs running Python 3.11.9. All the trainings were repeated, varying the random seed among 1, 42, and 121 (set through the \texttt{set\_seed} method from the HuggingFace \texttt{transformers} library). The hyperparameters governing the sampling strategies adopted as baselines were selected according to the best results reported by their original paper for similar tasks and model sizes.

\section{Evaluation Metrics} \label{evaluation_metrics}

In this section, we formally define the quantitative metrics used in our experiments.

\begin{itemize}
    \item \textbf{Cross-Input Diversity:} The diversity of outputs across inputs, i.e., the diversity of $\pi(y)$: $D(\bigcup_{i=1}^N \pi(y_i|x_i))$, with $D$ diversity metric \citep{kirk2024understanding}. In the case of cross-input EAD diversity, this is practically done by computing the EAD score for the entire set of outputs at the same time (i.e., the expected average distinct $N$-grams among all the sequences) for $N \in [1, 5]$ and returning their average. We refer to the original paper by \cite{liu2022rethinking} for the exact computation of EAD. On the contrary, in the case of cross-input SBERT diversity, this is practically implemented by first encoding all outputs into latent vectors through a given text encoder, then computing all the possible pair-wise cosine similarity between different vectors, and finally returning $1$ minus the average cosine similarity.
    \item \textbf{Against-Greedy Diversity:} The diversity between a given output $y_i$ sampled from $\pi(x_i)$ and the greedy output $y_i^{grd}$: $D(y_i, y_i^{grd})$. The against-greedy SBERT diversity requires computing the latent vectors of both outputs and returning 1 minus their cosine similarity. In the case of against-greedy EAD diversity, however, we replace the classic ratio of distinct $N$-gram computation with the ratio of \textit{new}, distinct $N$-gram from $y_i$ with respect to $y_i^{grd}$, and then compute EAD for $N \in [1, 5]$ starting from it.
    \item \textbf{Per-Input Diversity:} The diversity of the output set over a specific input, i.e., the diversity of $\pi(y|x)$: $D(\{y_{i,1} \ldots y_{i,M}\}), y_{i,j} \sim \pi(\cdot|x_i)$ \citep{kirk2024understanding}. In the case of per-input EAD diversity, this is implemented by computing the EAD score for the set of outputs from the same input (i.e., the expected average distinct $N$-grams among all the sequences) for $N \in [1, 5]$ and returning their average. In the case of the per-input SBERT diversity, this is obtained by first encoding the outputs into latent vectors through a given text encoder, then computing all the possible pair-wise cosine similarity between different vectors, and finally returning $1$ minus the average cosine similarity.
    \item \textbf{Accuracy:} The percentage of correctly solved problems, i.e., given the count of solved problems $C_{correct}$ and the count of total problems $C_{total}$, accuracy is defined as $100 \cdot \frac{C_{correct}}{C_{total}}$.
    \item \textbf{Rouge-1 (R1):} The ratio of $1$-grams present in both the target $y_i^*$ and the generated output $y_i$ \citep{lin2004rouge}.
    \item \textbf{Sentence Embedding Cosine Similarity (SIM):} The cosine similarity between the latent vector $v_{y_i}$ and the latent vector $v_{y_i^*}$, where $y_i^*$ is the target output and $y_i$ is the generated output. The latent vectors are obtained from a pre-trained text encoder.
    \item \textbf{Coherence (COH):} The cosine similarity between the latent vector $v_{s_i}$ and the latent vector $v_{y_i}$, where $s_i$ represents a target text (e.g., a text to summarize) or the input passed to the model, and $y_i$ is the output from the model (e.g., the continuation of $s_i$ or its summary). The latent vectors are obtained from the SimCSE embedder \citep{gao2021simcse}.
    \item \textbf{DAT Score:} The average cosine distance (i.e., 1 minus cosine similarity) multiplied by $100$ between each pair of word embeddings from a given list of $n$ distinct nouns. The word embeddings are obtained from GloVe \citep{pennington2014glove}. $n$ can be either $7$ or $10$. If there are not enough distinct nouns, the score is not computed at all.
    \item \textbf{DAT Valid Outputs:} The percentage of experiments in which the count of generated, distinct nouns is greater than or equal to $n$, i.e., the percentage of experiments where it is possible to compute the DAT score.
\end{itemize}

\section{Prompts} \label{diffsampling_implementation_details}

For the mathematical problem-solving tasks, we adopted the same prompt from \citet{yu2024metamath}, i.e.:

\begin{tcolorbox}[colback=bluechart!5!white,colframe=bluechart]
Below is an instruction that describes a task. Write a response that appropriately completes the request.\\
\\
\#\#\# Instruction:\\
\{\texttt{question}\}\\
\\
\#\#\# Response: Let's think step by step.
\end{tcolorbox}

%For the extreme summarization task, the prompt adopted for \texttt{Llama-2-7b-chat-hf} is the same as in \citet{chhabra2024revisiting}: 
%
%\begin{tcolorbox}[colback=bluechart!5!white,colframe=bluechart]
%\textsc{[INST]} For the following article: \{\texttt{article}\}\\ 
%\\
%Return a summary comprising of 1 sentence. With the sentence in a numbered list format.\\
%\\
%For example:\\
%\\
%1. First sentence \textsc{[/INST]}
%\end{tcolorbox}
%
%\noindent where \textsc{[INST]} and \textsc{[/INST]} are special tokens used by the model to identify different roles in the chat.

For the extreme summarization task, the prompt adopted for \texttt{Llama-3.2-3B-Instruct} is the same as in \citet{chhabra2024revisiting}: 

\begin{tcolorbox}[colback=bluechart!5!white,colframe=bluechart]
\texttt{user}\\ 
For the following article: \{\texttt{article}\}\\ 
\\
Return a summary comprising of 1 sentence. Write the sentence in a numbered list format.\\
\\
For example:\\
\\
1. First sentence\\
\texttt{assistant}
\end{tcolorbox}

\noindent where \texttt{user} and \texttt{assistant} are special tokens used by the model to identify different roles in the chat.

Vice versa, for the non-instructed version, we used:

\begin{tcolorbox}[colback=bluechart!5!white,colframe=bluechart]
Generate a 1 sentence summary for the given article.\\
Article: \{\texttt{article}\}\\
Summary:
\end{tcolorbox}

For the divergent association task, we considered the following prompt for \texttt{Meta-Llama-3-8B-Instruct}:

\begin{tcolorbox}[colback=bluechart!5!white,colframe=bluechart]
\texttt{user}\\
\\
Please write 10 nouns in English that are as irrelevant from each other as possible, in all meanings and uses of the words. Please note that the words you write should have only single word, only nouns (e.g., things, objects, concepts), and no proper nouns (e.g., no specific people or places).\\
\texttt{assistant}\\
\\
Here are the 10 nouns in English that are as irrelevant from each other as possible:
\end{tcolorbox}

\noindent where \texttt{user} and \texttt{assistant} are keywords used by the model to identify different roles in the chat, while for \texttt{Meta-Llama-3-8B} we used the following:

\begin{tcolorbox}[colback=bluechart!5!white,colframe=bluechart]
Write 10 nouns in English that are as irrelevant from each other as possible, in all meanings and uses of the words. Please note that the words you write should have only single word, only nouns (e.g., things, objects, concepts), and no proper nouns (e.g., no specific people or places).\\
\\
Solution:\\
Here are the 10 nouns in English that are as irrelevant from each other as possible:
\end{tcolorbox}

Finally, for story generation, we used the same prompt adopted by \citet{chung2025modifying} in the case of \texttt{Llama-3.2-3B-Instruct}:

\begin{tcolorbox}[colback=bluechart!5!white,colframe=bluechart]
\texttt{system}\\
You write a creative writing based on the user-given writing prompt.\\
\texttt{user}\\
\{\texttt{prompt}\}\\
\texttt{assistant}
\end{tcolorbox}

\noindent

\noindent where \texttt{system}, \texttt{user}, and \texttt{assistant} are keywords used by the model to identify different roles in the chat. Instead, for \texttt{Llama-3.2-3B} we used the following:

\begin{tcolorbox}[colback=bluechart!5!white,colframe=bluechart]
Write a creative story based on the user-given prompt.\\
Prompt: \{\texttt{prompt}\}\\
Story:
\end{tcolorbox}

\section{Scaling Model Size} \label{scaling_model_size}

Our main experiments focus only on models of limited size (i.e., 3B, 7B, and 8B models). To demonstrate that our sampling methods can scale up with model size, we also conduct some preliminary tests with a larger model, i.e., with \texttt{Meta-Llama-3-70B} (quantized to 4-bit precision), considering both its pre-trained and its instructed versions. In particular, we repeat all the experiments conducted in Section \ref{experiments} at a fixed temperature $\tau = 1.0$.

Tables \ref{tab:gsm8k_70b} and \ref{tab:math_70b} report the results for the math problem-solving case study. As for the smaller model, \textit{DiffSampling-cut} performs on par or better than greedy in terms of accuracy, while achieving a higher diversity than greedy but lower than the other sampling methods. \textit{DiffSampling-lb} achieves slightly worse accuracy than top-$p$, but with increases in cross-input EAD. Finally, there are no substantial differences between \textit{DiffSampling-minp} and min-$p$ apart from cross-input EAD, thus confirming that our relaxation allows for a little more diversity with no cost in terms of quality.

\begin{table*}[ht]
\centering
\resizebox{\textwidth}{!}{%
\begin{tabular}{|L{2.3cm}||C{1.5cm}|C{1.3cm}C{1.3cm}|C{1.3cm}C{1.3cm}||C{1.5cm}|C{1.3cm}C{1.3cm}|C{1.3cm}C{1.3cm}|} 
\hline
Dataset: & \multicolumn{5}{c||}{RLHF-instructed} & \multicolumn{5}{c|}{Pre-trained} \\
\hline %  $\uparrow$
Method & Accuracy & \multicolumn{2}{c|}{Cross-Input} & \multicolumn{2}{c||}{Against-Greedy} & Accuracy & \multicolumn{2}{c|}{Cross-Input} & \multicolumn{2}{c|}{Against-Greedy} \\
\hline
\textcolor{white}{placeholder} & \textcolor{white}{placeholder} & EAD & SBERT & EAD & SBERT & \textcolor{white}{placeholder} & EAD & SBERT & EAD & SBERT \\
\hline
Greedy          & $57.62_{\pm .00}$ & $0.75_{\pm .00}$ & $0.65_{\pm .00}$ & - & - & $10.16_{\pm .00}$ & $0.97_{\pm .00}$ & $0.56_{\pm .00}$ & - & - \\
Top-$p$         & $53.85_{\pm .20}$ & $0.88_{\pm .00}$ & $0.65_{\pm .00}$ & $0.56_{\pm .00}$ & $0.11_{\pm .00}$ & $7.83_{\pm .14}$ & $2.00_{\pm .01}$ & $0.59_{\pm .00}$ & $0.63_{\pm .00}$ & $0.31_{\pm .00}$ \\
$\eta$-sampling & $48.37_{\pm .28}$ & $0.92_{\pm .00}$ & $0.65_{\pm .00}$ & $0.60_{\pm .00}$ & $0.12_{\pm .00}$ & $4.88_{\pm .11}$ & $2.22_{\pm .01}$ & $0.60_{\pm .00}$ & $0.70_{\pm .00}$ & $0.33_{\pm .00}$ \\
Locally typical & $53.68_{\pm .16}$ & $0.88_{\pm .00}$ & $0.65_{\pm .00}$ & $0.56_{\pm .00}$ & $0.11_{\pm .00}$ & $7.43_{\pm .21}$ & $2.00_{\pm .00}$ & $0.59_{\pm .00}$ & $0.63_{\pm .00}$ & $0.32_{\pm .00}$ \\
Min-$p$         & $56.20_{\pm .58}$ & $0.85_{\pm .00}$ & $0.65_{\pm .00}$ & $0.54_{\pm .00}$ & $0.10_{\pm .00}$ & $9.12_{\pm .93}$ & $1.52_{\pm .00}$ & $0.56_{\pm .00}$ & $0.51_{\pm .01}$ & $0.28_{\pm .00}$ \\
DiffS.-cut      & $59.29_{\pm .63}$ & $0.77_{\pm .00}$ & $0.65_{\pm .00}$ & $0.39_{\pm .01}$ & $0.08_{\pm .00}$ & $12.69_{\pm .20}$ & $0.97_{\pm .00}$ & $0.57_{\pm .00}$ & $0.26_{\pm .01}$ & $0.18_{\pm .00}$ \\
DiffS.-lb       & $52.14_{\pm .60}$ & $0.89_{\pm .00}$ & $0.65_{\pm .00}$ & $0.56_{\pm .00}$ & $0.11_{\pm .00}$ & $7.25_{\pm .42}$ & $2.03_{\pm .01}$ & $0.59_{\pm .00}$ & $0.64_{\pm .00}$ & $0.32_{\pm .00}$ \\
DiffS.-minp     & $56.00_{\pm .18}$ & $0.85_{\pm .00}$ & $0.65_{\pm .00}$ & $0.54_{\pm .00}$ & $0.10_{\pm .00}$ & $9.93_{\pm .14}$ & $1.56_{\pm .01}$ & $0.56_{\pm .00}$ & $0.51_{\pm .01}$ & $0.28_{\pm .00}$ \\
 \hline
\end{tabular}
}
\caption{Accuracy and diversity of results for the GSM8K test set over 3 seeds for the instructed (left) and pre-trained (right) \texttt{Meta-Llama-3-70B} model. 
The mean and standard error of the final score for each run are reported for accuracy and cross-input diversity, whereas the mean and the $95\%$ confidence interval for the full set of answers are reported for against-greedy diversity.
\label{tab:gsm8k_70b}}
\end{table*}

\begin{table*}[ht]
\centering
\resizebox{\textwidth}{!}{%
\begin{tabular}{|L{2.3cm}||C{1.5cm}|C{1.3cm}C{1.3cm}|C{1.3cm}C{1.3cm}||C{1.5cm}|C{1.3cm}C{1.3cm}|C{1.3cm}C{1.3cm}|} 
\hline
Model: & \multicolumn{5}{c||}{RLHF-instructed} & \multicolumn{5}{c|}{Pre-trained} \\
\hline %  $\uparrow$
Method & Accuracy & \multicolumn{2}{c|}{Cross-Input} & \multicolumn{2}{c||}{Against-Greedy} & Accuracy & \multicolumn{2}{c|}{Cross-Input} & \multicolumn{2}{c|}{Against-Greedy} \\
\hline
\textcolor{white}{placeholder} & \textcolor{white}{placeholder} & EAD & SBERT & EAD & SBERT & \textcolor{white}{placeholder} & EAD & SBERT & EAD & SBERT \\
\hline
Greedy          & $22.56_{\pm .00}$ & $2.01_{\pm .00}$ & $0.80_{\pm .00}$ & - & - & $6.70_{\pm .00}$ & $2.18_{\pm .00}$ & $0.78_{\pm .00}$ & - & - \\
Top-$p$         & $20.39_{\pm .15}$ & $2.56_{\pm .01}$ & $0.79_{\pm .00}$ & $0.55_{\pm .00}$ & $0.17_{\pm .00}$ & $4.04_{\pm .15}$ & $5.12_{\pm .02}$ & $0.79_{\pm .00}$ & $0.57_{\pm .00}$ & $0.34_{\pm .00}$ \\
$\eta$-sampling & $19.02_{\pm .13}$ & $2.80_{\pm .01}$ & $0.79_{\pm .00}$ & $0.58_{\pm .00}$ & $0.18_{\pm .00}$ & $3.09_{\pm .09}$ & $6.06_{\pm .02}$ & $0.79_{\pm .00}$ & $0.64_{\pm .00}$ & $0.37_{\pm .00}$ \\
Locally typical & $20.19_{\pm .20}$ & $2.56_{\pm .00}$ & $0.79_{\pm .00}$ & $0.55_{\pm .00}$ & $0.17_{\pm .00}$ & $4.12_{\pm .18}$ & $5.15_{\pm .02}$ & $0.79_{\pm .00}$ & $0.57_{\pm .00}$ & $0.34_{\pm .00}$ \\
Min-$p$         & $20.58_{\pm .04}$ & $2.41_{\pm .00}$ & $0.79_{\pm .00}$ & $0.53_{\pm .00}$ & $0.17_{\pm .00}$ & $5.46_{\pm .13}$ & $3.73_{\pm .01}$ & $0.78_{\pm .00}$ & $0.45_{\pm .00}$ & $0.30_{\pm .00}$ \\
DiffS.-cut      & $21.27_{\pm .03}$ & $2.08_{\pm .00}$ & $0.80_{\pm .00}$ & $0.40_{\pm .00}$ & $0.14_{\pm .00}$ & $7.50_{\pm .18}$ & $2.29_{\pm .01}$ & $0.78_{\pm .00}$ & $0.25_{\pm .00}$ & $0.22_{\pm .00}$ \\
DiffS.-lb       & $19.79_{\pm .40}$ & $2.58_{\pm .01}$ & $0.79_{\pm .00}$ & $0.55_{\pm .00}$ & $0.18_{\pm .00}$ & $4.15_{\pm .03}$ & $5.22_{\pm .00}$ & $0.79_{\pm .00}$ & $0.57_{\pm .00}$ & $0.34_{\pm .00}$ \\
DiffS.-minp     & $20.75_{\pm .56}$ & $2.45_{\pm .00}$ & $0.79_{\pm .00}$ & $0.53_{\pm .00}$ & $0.17_{\pm .00}$ & $5.26_{\pm .21}$ & $3.82_{\pm .01}$ & $0.78_{\pm .00}$ & $0.45_{\pm .00}$ & $0.30_{\pm .00}$ \\
 \hline
\end{tabular}
}
\caption{Accuracy and diversity of results for the MATH test set over 3 seeds for the instructed (left) and pre-trained (right) \texttt{Meta-Llama-3-70B} model. 
The mean and standard error of the final score for each run are reported for accuracy and cross-input diversity, whereas the mean and the $95\%$ confidence interval for the full set of answers are reported for against-greedy diversity.
\label{tab:math_70b}}
\end{table*}

The findings from the XSum case study are similar. As reported in Table \ref{tab:xsum_70b}, \textit{DiffSampling-cut} behaves on par with greedy in terms of quality, but increases scores in terms of per-input diversity. Instead, our two relaxations do not substantially differ from their corresponding baselines top-$p$ and min-$p$. However, it is remarkable that, contrary to what we found for smaller models, greediness does not provide qualitative advantages for the instructed version of \texttt{Meta-Llama-3-70B}.

\begin{table*}[ht]
\centering
\resizebox{\textwidth}{!}{%
\begin{tabular}{|L{2.3cm}||C{1.2cm}C{1.2cm}C{1.2cm}|C{1.2cm}C{1.2cm}|C{1.2cm}C{1.2cm}||C{1.2cm}C{1.2cm}C{1.2cm}|C{1.2cm}C{1.2cm}|C{1.2cm}C{1.2cm}|} 
\hline
Model: & \multicolumn{7}{c||}{RLHF-instructed} & \multicolumn{7}{c|}{Pre-trained} \\
\hline %  $\uparrow$
Method & \multicolumn{3}{c|}{Quality} & \multicolumn{2}{c|}{Per-Input} & \multicolumn{2}{c||}{Against-Greedy} & \multicolumn{3}{c|}{Quality} & \multicolumn{2}{c|}{Per-Input} & \multicolumn{2}{c|}{Against-Greedy} \\
\hline
\textcolor{white}{placeholder} & R-$1$ & SIM & COH & EAD & SBERT & EAD & SBERT & R-$1$ & SIM & COH & EAD & SBERT & EAD & SBERT \\
\hline
Greedy          & $0.29_{\pm .00}$ & $0.63_{\pm .00}$ & $0.75_{\pm .00}$ & $0.20_{\pm .00}$ & - & - & - & $0.24_{\pm .00}$ & $0.50_{\pm .01}$ & $0.70_{\pm .00}$ & $0.20_{\pm .00}$ & - & - & - \\
Top-$p$         & $0.29_{\pm .00}$ & $0.62_{\pm .00}$ & $0.75_{\pm .00}$ & $0.50_{\pm .00}$ & $0.10_{\pm .00}$ & $0.44_{\pm .01}$ & $0.08_{\pm .00}$ & $0.22_{\pm .00}$ & $0.48_{\pm .01}$ & $0.63_{\pm .01}$ & $0.72_{\pm .01}$ & $0.49_{\pm .00}$ & $0.71_{\pm .01}$ & $0.41_{\pm .01}$ \\
$\eta$-sampling & $0.29_{\pm .00}$ & $0.62_{\pm .00}$ & $0.75_{\pm .00}$ & $0.55_{\pm .00}$ & $0.11_{\pm .00}$ & $0.49_{\pm .01}$ & $0.09_{\pm .00}$ & $0.21_{\pm .00}$ & $0.47_{\pm .01}$ & $0.61_{\pm .01}$ & $0.76_{\pm .01}$ & $0.52_{\pm .00}$ & $0.75_{\pm .01}$ & $0.44_{\pm .01}$ \\
Locally typical & $0.29_{\pm .00}$ & $0.62_{\pm .00}$ & $0.75_{\pm .00}$ & $0.50_{\pm .00}$ & $0.10_{\pm .00}$ & $0.44_{\pm .01}$ & $0.08_{\pm .00}$ & $0.22_{\pm .00}$ & $0.48_{\pm .01}$ & $0.63_{\pm .01}$ & $0.72_{\pm .01}$ & $0.49_{\pm .00}$ & $0.71_{\pm .01}$ & $0.42_{\pm .01}$ \\
Min-$p$         & $0.29_{\pm .00}$ & $0.62_{\pm .00}$ & $0.75_{\pm .00}$ & $0.50_{\pm .00}$ & $0.10_{\pm .00}$ & $0.44_{\pm .01}$ & $0.08_{\pm .00}$ & $0.23_{\pm .00}$ & $0.50_{\pm .01}$ & $0.67_{\pm .01}$ & $0.65_{\pm .00}$ & $0.42_{\pm .00}$ & $0.63_{\pm .01}$ & $0.35_{\pm .01}$ \\
DiffS.-cut      & $0.29_{\pm .00}$ & $0.63_{\pm .00}$ & $0.75_{\pm .00}$ & $0.33_{\pm .00}$ & $0.05_{\pm .00}$ & $0.25_{\pm .01}$ & $0.05_{\pm .00}$ & $0.24_{\pm .00}$ & $0.50_{\pm .01}$ & $0.70_{\pm .00}$ & $0.38_{\pm .00}$ & $0.18_{\pm .00}$ & $0.32_{\pm .01}$ & $0.16_{\pm .01}$ \\
DiffS.-lb       & $0.29_{\pm .00}$ & $0.63_{\pm .00}$ & $0.75_{\pm .00}$ & $0.51_{\pm .00}$ & $0.10_{\pm .00}$ & $0.45_{\pm .01}$ & $0.08_{\pm .00}$ & $0.22_{\pm .00}$ & $0.48_{\pm .01}$ & $0.63_{\pm .01}$ & $0.73_{\pm .01}$ & $0.50_{\pm .00}$ & $0.72_{\pm .01}$ & $0.42_{\pm .01}$ \\
DiffS.-minp     & $0.29_{\pm .00}$ & $0.63_{\pm .00}$ & $0.75_{\pm .00}$ & $0.50_{\pm .00}$ & $0.10_{\pm .00}$ & $0.44_{\pm .01}$ & $0.08_{\pm .00}$ & $0.23_{\pm .00}$ & $0.50_{\pm .01}$ & $0.66_{\pm .01}$ & $0.65_{\pm .00}$ & $0.42_{\pm .00}$ & $0.63_{\pm .01}$ & $0.35_{\pm .01}$ \\
\hline
\end{tabular}
}
\caption{Aggregate results over 5 outputs sampled for each of the 1000 prompts from the XSum dataset for the instructed (left) and the pre-trained (right) \texttt{Meta-Llama-3-70B} model. The mean and $95\%$ confidence interval are reported for all the metrics.
\label{tab:xsum_70b}}
\end{table*}

The results are similar for the WritingPrompts dataset. As shown in Table \ref{tab:wp_70b}, the greedy strategy is not optimal for the instructed model; this impacts not only \textit{DiffSampling-cut}, but also min-$p$ and \textit{DiffSampling-minp}. However, the other considerations still hold: \textit{DiffSampling-cut} produces more diverse outputs than the greedy strategy, while the two relaxations perform on par with their corresponding baselines. While greediness appears to guarantee higher quality for the pre-trained model, it should be noted that the per-input EAD becomes extremely low.  This is due to extensive repetitions within the same output (but not shared across outputs, which causes the SBERT metric to increase) and may indicate that more greedy strategies perform worse with larger models, which may have learned more unbalanced probability distributions. In such scenarios, looser sampling strategies, such as $\eta$-sampling or our \textit{DiffSampling-lb}, are likely to be better choices.

\begin{table}[ht]
\centering
\resizebox{.67\textwidth}{!}{%
\begin{tabular}{|L{2.3cm}||C{1.3cm}|C{1.3cm}C{1.3cm}||C{1.3cm}|C{1.3cm}C{1.3cm}|} 
\hline
Model: & \multicolumn{3}{c||}{RLHF-instructed} & \multicolumn{3}{c|}{Pre-trained} \\
\hline %  $\uparrow$
Method & \multicolumn{1}{c|}{Quality} & \multicolumn{2}{c||}{Per-Input Diversity} & \multicolumn{1}{c|}{Quality} & \multicolumn{2}{c|}{Per-Input Diversity} \\
\hline
\textcolor{white}{placeholder} & COH & EAD & SBERT & COH & EAD & SBERT \\
\hline
Greedy          & $0.27_{\pm .01}$ & $0.06_{\pm .00}$ & - & $0.42_{\pm .01}$ & $0.01_{\pm .00}$ & - \\
Top-$p$         & $0.29_{\pm .01}$ & $0.53_{\pm .00}$ & $0.58_{\pm .01}$ & $0.26_{\pm .01}$ & $0.18_{\pm .01}$ & $0.72_{\pm .00}$ \\
$\eta$-sampling & $0.29_{\pm .01}$ & $0.58_{\pm .00}$ & $0.57_{\pm .01}$ & $0.26_{\pm .01}$ & $0.35_{\pm .01}$ & $0.71_{\pm .00}$ \\
Locally typical & $0.29_{\pm .01}$ & $0.59_{\pm .00}$ & $0.46_{\pm .01}$ & $0.26_{\pm .01}$ & $0.18_{\pm .00}$ & $0.72_{\pm .00}$ \\
Min-$p$         & $0.27_{\pm .01}$ & $0.45_{\pm .01}$ & $0.55_{\pm .01}$ & $0.34_{\pm .01}$ & $0.03_{\pm .00}$ & $0.60_{\pm .00}$ \\
DiffS.-cut      & $0.27_{\pm .01}$ & $0.23_{\pm .01}$ & $0.24_{\pm .01}$ & $0.41_{\pm .01}$ & $0.01_{\pm .00}$ & $0.32_{\pm .01}$ \\
DiffS.-lb       & $0.29_{\pm .01}$ & $0.53_{\pm .00}$ & $0.57_{\pm .01}$ & $0.25_{\pm .01}$ & $0.19_{\pm .01}$ & $0.72_{\pm .00}$ \\
DiffS.-minp     & $0.27_{\pm .01}$ & $0.44_{\pm .01}$ & $0.55_{\pm .01}$ & $0.34_{\pm .01}$ & $0.03_{\pm .00}$ & $0.60_{\pm .00}$ \\
\hline
\end{tabular}
}
\caption{Aggregate results for the WritingPrompts dataset for the instructed (left) and the pre-trained (right) \texttt{Meta-Llama-3-70B} model. The mean and the $95\%$ confidence interval for the full set of answers are reported for all the metrics.
\label{tab:wp_70b}}
\end{table}

This consideration finds confirmation from the experiments on the divergent association task. As shown in Fig. \ref{fig:dat_70b}, \textit{DiffSampling-cut} always produces the greedy solution for the instructed model, while showing a little more variation for the pre-trained model. Again, both min-$p$ and \textit{DiffSampling-minp} are influenced by the poor performances of greedy for the pre-trained model, resulting in lower DAT score but higher valid output ratio compared with the other sampling strategies. On the contrary, top-$p$ sampling, locally typical sampling, and \textit{DiffSampling-lb} seem to better trade off quality and diversity, having a comparably high valid output ratio for the instructed model and especially a substantially higher DAT score for the pre-trained model.

\begin{figure}[ht]
    \centering
    \includegraphics[width=1.\textwidth]{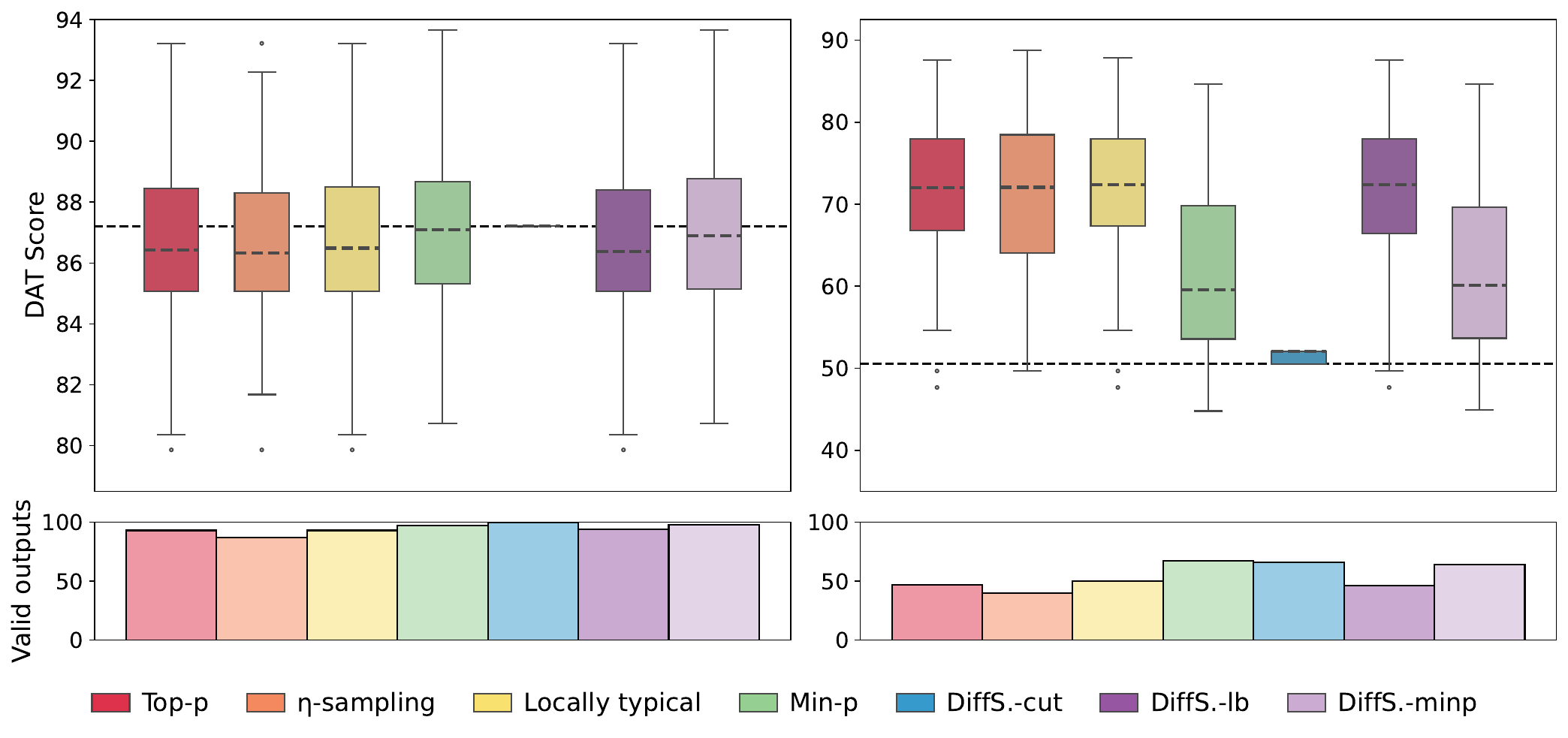}
    \caption{DAT scores for our methods and the baselines over the instructed (left) and pre-trained (right) \texttt{Meta-Llama-3-70B} model. Below, the number of valid outputs produced by each sampling strategy. The dashed line reports the greedy score.}
    \label{fig:dat_70b}
\end{figure}

In summary, our preliminary experiments on a larger model confirm the positive relation between our three methods and the corresponding baselines, while also demonstrating that models with different sizes may necessitate different degrees of greediness and thus suggesting a looser sampling strategy (e.g., \textit{DiffSampling-lb}) when dealing with sufficiently large language models.

\section{Experiments on Temperature Scaling} \label{full_temperature}

In addition to investigating performance at a temperature of $\tau = 1.0$, we conduct experiments at a lower temperature ($0.6$) and three higher temperatures ($1.5$, $2.0$, and $10.0$) to verify whether our truncation strategy preserves only appropriate tokens, i.e.,  whether the quality of generated outputs remains competitive and diversity improves across different temperatures. Overall, we found that \textit{DiffSampling-cut} maintains the same level of quality even at very high temperature, and, thus, can be safely used even with $\tau \gg 1.0$. On the other hand, \textit{DiffSampling-lb} tends to produce lower-quality outputs at higher temperatures. This effect is particularly pronounced for non-instructed models, where performance for $\tau > 1.0$ decreases rapidly. However, fine-tuned models maintain competitive quality scores at higher temperatures while increasing diversity; the choice of $\tau$ is a matter of trading off quality and diversity. When accuracy is not a \textit{sine qua non} requirement, we suggest adopting a $\tau \geq 2.0$. Finally, \textit{DiffSampling-minp} exhibits the same general trend as \textit{DiffSampling-lb} for pre-trained models, although the effect is less pronounced. For tuned models, however, it shows virtually no performance degradation at higher temperatures, enabling the use of larger values of $\tau$ in most scenarios.

\subsection{Math Problem Solving}

%%% GSM8K and MATH

\begin{table*}[ht!]
\centering
\resizebox{\textwidth}{!}{%
\begin{tabular}{|L{2.5cm}||C{1.5cm}|C{1.3cm}C{1.3cm}|C{1.3cm}C{1.3cm}||C{1.5cm}|C{1.3cm}C{1.3cm}|C{1.3cm}C{1.3cm}|} 
\hline
Dataset: & \multicolumn{5}{c||}{GSM8K} & \multicolumn{5}{c|}{MATH} \\
\hline %  $\uparrow$
Method & Accuracy & \multicolumn{2}{c|}{Cross-Input} & \multicolumn{2}{c||}{Against-Greedy} & Accuracy & \multicolumn{2}{c|}{Cross-Input} & \multicolumn{2}{c|}{Against-Greedy} \\
\hline
\textcolor{white}{placeholder} & \textcolor{white}{placeholder} & EAD & SBERT & EAD & SBERT & \textcolor{white}{placeholder} & EAD & SBERT & EAD & SBERT \\
\hline
\multicolumn{11}{|l|}{\textit{Temperature = 0.0}} \\
\hline
Greedy          & $66.44_{\pm .09}$ & $2.03_{\pm .00}$ & $0.64_{\pm .00}$ & - & - & $20.62_{\pm .01}$ & $5.65_{\pm .00}$ & $0.80_{\pm .00}$ & - & - \\
\hline
\multicolumn{11}{|l|}{\textit{Temperature = 1.0}} \\
\hline
Top-$p$         & $65.00_{\pm .18}$ & $2.08_{\pm .01}$ & $0.64_{\pm .00}$ & $0.23_{\pm .00}$ & $0.03_{\pm .00}$ & $20.02_{\pm .12}$ & $6.08_{\pm .02}$ & $0.80_{\pm .00}$ & $0.36_{\pm .00}$ & $0.10_{\pm .00}$ \\
$\eta$-sampling & $65.05_{\pm .19}$ & $2.12_{\pm .00}$ & $0.64_{\pm .00}$ & $0.25_{\pm .00}$ & $0.04_{\pm .00}$ & $19.67_{\pm .20}$ & $6.36_{\pm .01}$ & $0.80_{\pm .00}$ & $0.39_{\pm .00}$ & $0.11_{\pm .00}$ \\
Locally typical & $66.29_{\pm .55}$ & $2.09_{\pm .00}$ & $0.64_{\pm .00}$ & $0.23_{\pm .00}$ & $0.03_{\pm .00}$ & $19.95_{\pm .26}$ & $6.06_{\pm .01}$ & $0.80_{\pm .00}$ & $0.36_{\pm .00}$ & $0.10_{\pm .00}$ \\
Min-$p$         & $65.76_{\pm .44}$ & $2.09_{\pm .00}$ & $0.64_{\pm .00}$ & $0.23_{\pm .00}$ & $0.03_{\pm .00}$ & $20.25_{\pm .09}$ & $6.09_{\pm .01}$ & $0.80_{\pm .00}$ & $0.36_{\pm .00}$ & $0.10_{\pm .00}$ \\
DiffS.-cut      & $66.36_{\pm .23}$ & $2.04_{\pm .00}$ & $0.64_{\pm .00}$ & $0.14_{\pm .00}$ & $0.02_{\pm .00}$ & $21.38_{\pm .20}$ & $5.71_{\pm .01}$ & $0.80_{\pm .00}$ & $0.27_{\pm .00}$ & $0.07_{\pm .00}$ \\
DiffS.-lb       & $65.18_{\pm .65}$ & $2.09_{\pm .01}$ & $0.64_{\pm .00}$ & $0.23_{\pm .00}$ & $0.03_{\pm .00}$ & $20.20_{\pm .08}$ & $6.11_{\pm .02}$ & $0.80_{\pm .00}$ & $0.37_{\pm .00}$ & $0.10_{\pm .00}$ \\
DiffS.-minp     & $65.48_{\pm .60}$ & $2.09_{\pm .01}$ & $0.64_{\pm .00}$ & $0.23_{\pm .00}$ & $0.03_{\pm .00}$ & $20.18_{\pm .08}$ & $6.06_{\pm .00}$ & $0.80_{\pm .00}$ & $0.36_{\pm .00}$ & $0.10_{\pm .00}$ \\
\hline
\multicolumn{11}{|l|}{\textit{Temperature = 0.6}} \\
\hline
Top-$p$         & $66.34_{\pm .67}$ & $2.05_{\pm .01}$ & $0.64_{\pm .00}$ & $0.17_{\pm .00}$ & $0.02_{\pm .00}$ & $21.58_{\pm .32}$ & $5.81_{\pm .02}$ & $0.80_{\pm .00}$ & $0.31_{\pm .00}$ & $0.09_{\pm .00}$ \\
$\eta$-sampling & $66.26_{\pm .22}$ & $2.07_{\pm .01}$ & $0.64_{\pm .00}$ & $0.19_{\pm .00}$ & $0.03_{\pm .00}$ & $20.36_{\pm .15}$ & $5.94_{\pm .01}$ & $0.80_{\pm .00}$ & $0.33_{\pm .00}$ & $0.09_{\pm .00}$ \\
Locally typical & $66.34_{\pm .67}$ & $2.05_{\pm .01}$ & $0.64_{\pm .00}$ & $0.17_{\pm .00}$ & $0.02_{\pm .00}$ & $21.58_{\pm .32}$ & $5.81_{\pm .02}$ & $0.80_{\pm .00}$ & $0.31_{\pm .00}$ & $0.09_{\pm .00}$ \\
Min-$p$         & $66.52_{\pm .30}$ & $2.06_{\pm .01}$ & $0.64_{\pm .00}$ & $0.17_{\pm .00}$ & $0.02_{\pm .00}$ & $21.31_{\pm .08}$ & $5.81_{\pm .01}$ & $0.80_{\pm .00}$ & $0.31_{\pm .00}$ & $0.09_{\pm .00}$ \\
DiffS.-cut      & $66.74_{\pm .04}$ & $2.05_{\pm .00}$ & $0.64_{\pm .00}$ & $0.13_{\pm .00}$ & $0.02_{\pm .00}$ & $21.52_{\pm .13}$ & $5.72_{\pm .00}$ & $0.80_{\pm .00}$ & $0.25_{\pm .00}$ & $0.07_{\pm .00}$ \\
DiffS.-lb       & $65.73_{\pm .23}$ & $2.06_{\pm .00}$ & $0.64_{\pm .00}$ & $0.19_{\pm .00}$ & $0.03_{\pm .00}$ & $20.65_{\pm .20}$ & $5.89_{\pm .01}$ & $0.80_{\pm .00}$ & $0.32_{\pm .00}$ & $0.09_{\pm .00}$ \\
DiffS.-minp     & $67.05_{\pm .14}$ & $2.06_{\pm .01}$ & $0.64_{\pm .00}$ & $0.19_{\pm .00}$ & $0.03_{\pm .00}$ & $20.56_{\pm .21}$ & $5.88_{\pm .00}$ & $0.80_{\pm .00}$ & $0.32_{\pm .00}$ & $0.09_{\pm .00}$ \\
\hline
\multicolumn{11}{|l|}{\textit{Temperature = 1.5}} \\
\hline
Top-$p$         & $63.91_{\pm .57}$ & $2.17_{\pm .01}$ & $0.64_{\pm .00}$ & $0.28_{\pm .00}$ & $0.04_{\pm .00}$ & $18.38_{\pm .22}$ & $6.92_{\pm .02}$ & $0.80_{\pm .00}$ & $0.42_{\pm .00}$ & $0.12_{\pm .00}$ \\
$\eta$-sampling & $60.35_{\pm .55}$ & $2.28_{\pm .00}$ & $0.64_{\pm .00}$ & $0.32_{\pm .00}$ & $0.05_{\pm .00}$ & $15.63_{\pm .17}$ & $7.77_{\pm .01}$ & $0.80_{\pm .00}$ & $0.45_{\pm .00}$ & $0.14_{\pm .00}$ \\
Locally typical & $64.39_{\pm .41}$ & $2.17_{\pm .01}$ & $0.64_{\pm .00}$ & $0.28_{\pm .00}$ & $0.04_{\pm .00}$ & $18.73_{\pm .01}$ & $7.04_{\pm .02}$ & $0.80_{\pm .00}$ & $0.42_{\pm .00}$ & $0.12_{\pm .00}$ \\
Min-$p$         & $64.29_{\pm .38}$ & $2.15_{\pm .00}$ & $0.64_{\pm .00}$ & $0.28_{\pm .00}$ & $0.04_{\pm .00}$ & $18.94_{\pm .23}$ & $6.54_{\pm .02}$ & $0.80_{\pm .00}$ & $0.40_{\pm .00}$ & $0.12_{\pm .00}$ \\
DiffS.-cut      & $66.72_{\pm .36}$ & $2.05_{\pm .00}$ & $0.64_{\pm .00}$ & $0.15_{\pm .00}$ & $0.02_{\pm .00}$ & $21.36_{\pm .15}$ & $5.73_{\pm .00}$ & $0.80_{\pm .00}$ & $0.27_{\pm .00}$ & $0.07_{\pm .00}$ \\
DiffS.-lb       & $65.20_{\pm .25}$ & $2.11_{\pm .01}$ & $0.64_{\pm .00}$ & $0.25_{\pm .00}$ & $0.04_{\pm .00}$ & $19.55_{\pm .03}$ & $6.31_{\pm .02}$ & $0.80_{\pm .00}$ & $0.39_{\pm .00}$ & $0.11_{\pm .00}$ \\
DiffS.-minp     & $65.55_{\pm .61}$ & $2.11_{\pm .00}$ & $0.64_{\pm .00}$ & $0.25_{\pm .00}$ & $0.04_{\pm .00}$ & $20.04_{\pm .13}$ & $6.19_{\pm .01}$ & $0.80_{\pm .00}$ & $0.38_{\pm .00}$ & $0.11_{\pm .00}$ \\
\hline
\multicolumn{11}{|l|}{\textit{Temperature = 2.0}} \\
\hline
Top-$p$         & $25.40_{\pm .07}$ & $10.13_{\pm .10}$ & $0.66_{\pm .00}$ & $0.70_{\pm .01}$ & $0.36_{\pm .01}$ & $2.49_{\pm .01}$ & $48.71_{\pm .08}$ & $0.52_{\pm .00}$ & $0.92_{\pm .00}$ & $0.68_{\pm .00}$ \\
$\eta$-sampling & $35.51_{\pm .30}$ & $7.35_{\pm .05}$ & $0.69_{\pm .00}$ & $0.58_{\pm .01}$ & $0.22_{\pm .01}$ & $4.26_{\pm .06}$ & $43.39_{\pm .10}$ & $0.64_{\pm .00}$ & $0.86_{\pm .00}$ & $0.53_{\pm .00}$ \\
Locally typical & $24.61_{\pm .60}$ & $10.65_{\pm .05}$ & $0.65_{\pm .00}$ & $0.71_{\pm .01}$ & $0.37_{\pm .01}$ & $2.46_{\pm .03}$ & $51.04_{\pm .07}$ & $0.50_{\pm .00}$ & $0.93_{\pm .00}$ & $0.69_{\pm .00}$ \\
Min-$p$         & $62.19_{\pm .37}$ & $2.24_{\pm .01}$ & $0.64_{\pm .00}$ & $0.32_{\pm .00}$ & $0.05_{\pm .00}$ & $16.92_{\pm .21}$ & $7.21_{\pm .01}$ & $0.80_{\pm .00}$ & $0.44_{\pm .00}$ & $0.13_{\pm .00}$ \\
DiffS.-cut      & $66.44_{\pm .18}$ & $2.05_{\pm .00}$ & $0.64_{\pm .00}$ & $0.15_{\pm .00}$ & $0.02_{\pm .00}$ & $21.66_{\pm .20}$ & $5.71_{\pm .01}$ & $0.80_{\pm .00}$ & $0.27_{\pm .00}$ & $0.08_{\pm .00}$ \\
DiffS.-lb       & $63.48_{\pm .43}$ & $2.12_{\pm .00}$ & $0.64_{\pm .00}$ & $0.26_{\pm .00}$ & $0.04_{\pm .00}$ & $19.17_{\pm .10}$ & $6.40_{\pm .02}$ & $0.80_{\pm .00}$ & $0.40_{\pm .00}$ & $0.12_{\pm .00}$ \\
DiffS.-minp     & $65.13_{\pm .28}$ & $2.12_{\pm .01}$ & $0.64_{\pm .00}$ & $0.26_{\pm .00}$ & $0.04_{\pm .00}$ & $19.70_{\pm .09}$ & $6.32_{\pm .02}$ & $0.80_{\pm .00}$ & $0.39_{\pm .00}$ & $0.11_{\pm .00}$ \\
\hline
\multicolumn{11}{|l|}{\textit{Temperature = 10.0}} \\
\hline
Top-$p$         & $0.00_{\pm .00}$ & $17.26_{\pm .03}$ & $0.12_{\pm .00}$ & $1.00_{\pm .00}$ & $0.96_{\pm .00}$ & $0.00_{\pm .00}$ & $58.65_{\pm .03}$ & $0.12_{\pm .00}$ & $1.00_{\pm .00}$ & $1.00_{\pm .00}$ \\
$\eta$-sampling & $0.00_{\pm .00}$ & $17.43_{\pm .04}$ & $0.12_{\pm .00}$ & $1.00_{\pm .00}$ & $0.96_{\pm .00}$ & $0.00_{\pm .00}$ & $59.18_{\pm .02}$ & $0.12_{\pm .00}$ & $1.00_{\pm .00}$ & $1.00_{\pm .00}$ \\
Locally typical & $0.00_{\pm .00}$ & $17.52_{\pm .01}$ & $0.11_{\pm .00}$ & $1.01_{\pm .00}$ & $0.96_{\pm .00}$ & $0.00_{\pm .00}$ & $59.69_{\pm .01}$ & $0.11_{\pm .00}$ & $1.01_{\pm .00}$ & $1.00_{\pm .00}$ \\
Min-$p$         & $0.00_{\pm .00}$ & $17.39_{\pm .04}$ & $0.13_{\pm .00}$ & $1.00_{\pm .00}$ & $0.95_{\pm .00}$ & $0.00_{\pm .00}$ & $59.16_{\pm .02}$ & $0.13_{\pm .00}$ & $1.00_{\pm .00}$ & $1.00_{\pm .00}$ \\
DiffS.-cut      & $66.31_{\pm .26}$ & $2.04_{\pm .00}$ & $0.64_{\pm .00}$ & $0.15_{\pm .00}$ & $0.02_{\pm .00}$ & $21.22_{\pm .11}$ & $5.74_{\pm .01}$ & $0.80_{\pm .00}$ & $0.28_{\pm .00}$ & $0.08_{\pm .00}$ \\
DiffS.-lb       & $64.11_{\pm .13}$ & $2.15_{\pm .00}$ & $0.64_{\pm .00}$ & $0.29_{\pm .00}$ & $0.04_{\pm .00}$ & $18.20_{\pm .07}$ & $6.72_{\pm .00}$ & $0.80_{\pm .00}$ & $0.42_{\pm .00}$ & $0.12_{\pm .00}$ \\
DiffS.-minp     & $63.58_{\pm .43}$ & $2.17_{\pm .01}$ & $0.64_{\pm .00}$ & $0.29_{\pm .00}$ & $0.04_{\pm .00}$ & $18.64_{\pm .20}$ & $6.54_{\pm .01}$ & $0.80_{\pm .00}$ & $0.41_{\pm .00}$ & $0.12_{\pm .00}$ \\
 \hline
\end{tabular}
}
\caption{Accuracy and diversity of results for the GSM8K and MATH test sets over 3 seeds with different temperature values. 
%The best scores are in \textbf{bold}, while the worst are in \underline{underline}. 
The mean and standard error of the final score for each run are reported for accuracy and cross-input diversity, whereas the mean and $95\%$ confidence interval for the full set of answers are reported for against-greedy diversity.
\label{tab:math_full}}
\end{table*}

Table \ref{tab:math_full} reports all the results with different temperatures for the GSM8K (left side) and MATH (right side) test sets. For the former, a lower temperature makes all the models (including the baselines) more in line with the greedy strategy, thus diminishing the diversity scores while usually increasing the accuracy. On the contrary, all the baselines tend to perform poorer at increasing temperatures in terms of output correctness, while diversity improves accordingly (especially for a syntactic-based metric such as EAD; the qualitative examples reported below demonstrate why). Instead, our methods maintain the highest possible accuracy, with a slight improvement in diversity at higher $\tau$.

For the latter, a lower temperature makes all the baselines closer to our methods in terms of accuracy, while diminishing their diversity scores. At increasing temperature, the baselines rapidly start failing to solve the problems, possibly due to a more random selection of tokens that also causes syntactic diversity to increase. By applying temperature after the truncation, our methods preserve their output quality regardless of the temperature used, with small but relevant gains in diversity (for example, \textit{DiffSampling-minp} at $\tau = 2.0$ has an accuracy comparable with min-$p$ at $\tau = 1.0$, but with higher diversity scores).

%%% METAMATHQA
%For the sake of completeness, we also report the full results on a sample of $1000$ entries from the MetaMathQA training set. As apparent from Table \ref{tab:metamathqa_full}, the greediness of the approach is directly correlated with the accuracy of solutions. In particular, sampling at a temperature of $0.6$ increases the accuracy of all baselines while undermining their diversity scores, while higher temperatures lower the accuracy and increase (syntactic) diversity; notably, a very high temperature causes semantic diversity to fall. On the contrary, our three methods achieve similar accuracy at any temperature, with small increases in diversity.

\subsection{Extreme Summarization}

\begin{table*}[ht!]
\centering
\resizebox{\textwidth}{!}{%
\begin{tabular}{|L{2.5cm}||C{1.2cm}C{1.2cm}C{1.2cm}|C{1.2cm}C{1.2cm}|C{1.2cm}C{1.2cm}||C{1.2cm}C{1.2cm}C{1.2cm}|C{1.2cm}C{1.2cm}|C{1.2cm}C{1.2cm}|} 
\hline
Model: & \multicolumn{7}{c||}{RLHF-instructed} & \multicolumn{7}{c|}{Pre-trained} \\
\hline %  $\uparrow$
Method & \multicolumn{3}{c|}{Quality} & \multicolumn{2}{c|}{Per-Input} & \multicolumn{2}{c||}{Against-Greedy} & \multicolumn{3}{c|}{Quality} & \multicolumn{2}{c|}{Per-Input} & \multicolumn{2}{c|}{Against-Greedy} \\
\hline
\textcolor{white}{placeholder} & R-$1$ & SIM & COH & EAD & SBERT & EAD & SBERT & R-$1$ & SIM & COH & EAD & SBERT & EAD & SBERT \\
\hline
\multicolumn{15}{|l|}{\textit{Temperature = 0.0}} \\
\hline
Greedy             & $0.23_{\pm .00}$ & $0.49_{\pm .01}$ & $0.63_{\pm .01}$ & $0.18_{\pm .00}$ & $0.00_{\pm .00}$ & - & - & $0.22_{\pm .00}$ & $0.51_{\pm .00}$ & $0.74_{\pm .00}$ & $0.19_{\pm .00}$ & $0.00_{\pm .00}$ & - & - \\
\hline
\multicolumn{15}{|l|}{\textit{Temperature = 1.0}} \\
\hline
Top-$p$         & $0.21_{\pm .00}$ & $0.45_{\pm .01}$ & $0.59_{\pm .01}$ & $0.36_{\pm .01}$ & $0.47_{\pm .01}$ & $0.66_{\pm .01}$ & $0.41_{\pm .01}$ & $0.16_{\pm .00}$ & $0.34_{\pm .01}$ & $0.48_{\pm .01}$ & $0.72_{\pm .01}$ & $0.66_{\pm .01}$ & $0.77_{\pm .01}$ & $0.55_{\pm .01}$ \\
$\eta$-sampling & $0.20_{\pm .00}$ & $0.45_{\pm .01}$ & $0.58_{\pm .01}$ & $0.38_{\pm .01}$ & $0.49_{\pm .01}$ & $0.69_{\pm .01}$ & $0.43_{\pm .01}$ & $0.16_{\pm .00}$ & $0.34_{\pm .01}$ & $0.48_{\pm .01}$ & $0.75_{\pm .01}$ & $0.67_{\pm .00}$ & $0.80_{\pm .01}$ & $0.56_{\pm .01}$ \\
Locally typical & $0.21_{\pm .00}$ & $0.45_{\pm .01}$ & $0.59_{\pm .01}$ & $0.36_{\pm .01}$ & $0.47_{\pm .01}$ & $0.66_{\pm .01}$ & $0.41_{\pm .01}$ & $0.16_{\pm .00}$ & $0.34_{\pm .01}$ & $0.48_{\pm .01}$ & $0.72_{\pm .01}$ & $0.66_{\pm .01}$ & $0.77_{\pm .01}$ & $0.55_{\pm .01}$ \\
Min-$p$         & $0.22_{\pm .00}$ & $0.46_{\pm .01}$ & $0.61_{\pm .01}$ & $0.36_{\pm .01}$ & $0.43_{\pm .01}$ & $0.64_{\pm .01}$ & $0.38_{\pm .01}$ & $0.20_{\pm .00}$ & $0.44_{\pm .01}$ & $0.63_{\pm .01}$ & $0.65_{\pm .01}$ & $0.47_{\pm .01}$ & $0.62_{\pm .01}$ & $0.39_{\pm .01}$ \\
DiffS.-cut      & $0.23_{\pm .00}$ & $0.48_{\pm .01}$ & $0.63_{\pm .01}$ & $0.35_{\pm .01}$ & $0.25_{\pm .01}$ & $0.45_{\pm .01}$ & $0.23_{\pm .01}$ & $0.21_{\pm .00}$ & $0.49_{\pm .00}$ & $0.73_{\pm .00}$ & $0.38_{\pm .01}$ & $0.19_{\pm .00}$ & $0.32_{\pm .01}$ & $0.17_{\pm .01}$ \\
DiffS.-lb       & $0.21_{\pm .00}$ & $0.45_{\pm .01}$ & $0.59_{\pm .01}$ & $0.37_{\pm .01}$ & $0.47_{\pm .01}$ & $0.67_{\pm .01}$ & $0.41_{\pm .01}$ & $0.16_{\pm .00}$ & $0.34_{\pm .01}$ & $0.48_{\pm .01}$ & $0.72_{\pm .01}$ & $0.66_{\pm .01}$ & $0.77_{\pm .01}$ & $0.55_{\pm .01}$ \\
DiffS.-minp     & $0.22_{\pm .00}$ & $0.46_{\pm .01}$ & $0.60_{\pm .01}$ & $0.35_{\pm .01}$ & $0.43_{\pm .01}$ & $0.64_{\pm .01}$ & $0.38_{\pm .01}$ & $0.20_{\pm .00}$ & $0.44_{\pm .01}$ & $0.62_{\pm .01}$ & $0.65_{\pm .01}$ & $0.47_{\pm .01}$ & $0.63_{\pm .01}$ & $0.39_{\pm .01}$ \\
\hline
\multicolumn{15}{|l|}{\textit{Temperature = 0.6}} \\
\hline
Top-$p$         & $0.22_{\pm .00}$ & $0.48_{\pm .01}$ & $0.62_{\pm .01}$ & $0.38_{\pm .01}$ & $0.34_{\pm .01}$ & $0.56_{\pm .01}$ & $0.30_{\pm .01}$ & $0.20_{\pm .00}$ & $0.47_{\pm .01}$ & $0.68_{\pm .01}$ & $0.53_{\pm .01}$ & $0.37_{\pm .01}$ & $0.50_{\pm .01}$ & $0.30_{\pm .01}$ \\
$\eta$-sampling & $0.22_{\pm .00}$ & $0.47_{\pm .01}$ & $0.62_{\pm .01}$ & $0.38_{\pm .01}$ & $0.38_{\pm .01}$ & $0.59_{\pm .01}$ & $0.33_{\pm .01}$ & $0.20_{\pm .00}$ & $0.46_{\pm .01}$ & $0.66_{\pm .01}$ & $0.59_{\pm .01}$ & $0.42_{\pm .01}$ & $0.55_{\pm .01}$ & $0.33_{\pm .01}$ \\
Locally typical & $0.22_{\pm .00}$ & $0.48_{\pm .01}$ & $0.62_{\pm .01}$ & $0.38_{\pm .01}$ & $0.34_{\pm .01}$ & $0.56_{\pm .01}$ & $0.30_{\pm .01}$ & $0.20_{\pm .00}$ & $0.47_{\pm .01}$ & $0.68_{\pm .01}$ & $0.53_{\pm .01}$ & $0.37_{\pm .01}$ & $0.50_{\pm .01}$ & $0.30_{\pm .01}$ \\
Min-$p$         & $0.22_{\pm .00}$ & $0.48_{\pm .01}$ & $0.63_{\pm .01}$ & $0.39_{\pm .01}$ & $0.34_{\pm .01}$ & $0.55_{\pm .01}$ & $0.30_{\pm .01}$ & $0.21_{\pm .00}$ & $0.47_{\pm .01}$ & $0.69_{\pm .01}$ & $0.50_{\pm .01}$ & $0.33_{\pm .01}$ & $0.46_{\pm .01}$ & $0.28_{\pm .01}$ \\
DiffS.-cut      & $0.23_{\pm .00}$ & $0.49_{\pm .01}$ & $0.63_{\pm .01}$ & $0.35_{\pm .01}$ & $0.24_{\pm .01}$ & $0.43_{\pm .01}$ & $0.22_{\pm .01}$ & $0.21_{\pm .00}$ & $0.49_{\pm .00}$ & $0.73_{\pm .00}$ & $0.37_{\pm .01}$ & $0.18_{\pm .00}$ & $0.30_{\pm .01}$ & $0.16_{\pm .01}$ \\
DiffS.-lb       & $0.22_{\pm .00}$ & $0.47_{\pm .01}$ & $0.62_{\pm .01}$ & $0.39_{\pm .01}$ & $0.38_{\pm .01}$ & $0.59_{\pm .01}$ & $0.32_{\pm .01}$ & $0.20_{\pm .00}$ & $0.45_{\pm .01}$ & $0.65_{\pm .01}$ & $0.59_{\pm .01}$ & $0.42_{\pm .01}$ & $0.55_{\pm .01}$ & $0.34_{\pm .01}$ \\
DiffS.-minp     & $0.22_{\pm .00}$ & $0.48_{\pm .01}$ & $0.62_{\pm .01}$ & $0.39_{\pm .01}$ & $0.37_{\pm .01}$ & $0.58_{\pm .01}$ & $0.32_{\pm .01}$ & $0.20_{\pm .00}$ & $0.47_{\pm .01}$ & $0.67_{\pm .01}$ & $0.55_{\pm .01}$ & $0.38_{\pm .01}$ & $0.52_{\pm .01}$ & $0.31_{\pm .01}$ \\
\hline
\multicolumn{15}{|l|}{\textit{Temperature = 1.5}} \\
\hline
Top-$p$         & $0.07_{\pm .00}$ & $0.18_{\pm .01}$ & $0.30_{\pm .01}$ & $0.64_{\pm .01}$ & $0.72_{\pm .00}$ & $0.88_{\pm .01}$ & $0.77_{\pm .01}$ & $0.03_{\pm .00}$ & $0.08_{\pm .00}$ & $0.19_{\pm .00}$ & $0.79_{\pm .01}$ & $0.77_{\pm .00}$ & $0.97_{\pm .00}$ & $0.88_{\pm .00}$ \\
$\eta$-sampling & $0.10_{\pm .00}$ & $0.25_{\pm .01}$ & $0.38_{\pm .01}$ & $0.63_{\pm .01}$ & $0.70_{\pm .00}$ & $0.86_{\pm .01}$ & $0.69_{\pm .01}$ & $0.03_{\pm .00}$ & $0.08_{\pm .00}$ & $0.19_{\pm .00}$ & $0.79_{\pm .01}$ & $0.77_{\pm .00}$ & $0.96_{\pm .00}$ & $0.88_{\pm .00}$ \\
Locally typical & $0.06_{\pm .00}$ & $0.17_{\pm .01}$ & $0.30_{\pm .01}$ & $0.69_{\pm .01}$ & $0.70_{\pm .00}$ & $0.89_{\pm .01}$ & $0.77_{\pm .01}$ & $0.02_{\pm .00}$ & $0.07_{\pm .00}$ & $0.19_{\pm .00}$ & $0.82_{\pm .01}$ & $0.72_{\pm .00}$ & $0.97_{\pm .00}$ & $0.90_{\pm .00}$ \\
Min-$p$         & $0.20_{\pm .00}$ & $0.44_{\pm .01}$ & $0.58_{\pm .01}$ & $0.37_{\pm .01}$ & $0.50_{\pm .01}$ & $0.71_{\pm .01}$ & $0.45_{\pm .01}$ & $0.17_{\pm .00}$ & $0.38_{\pm .01}$ & $0.53_{\pm .01}$ & $0.78_{\pm .01}$ & $0.60_{\pm .00}$ & $0.79_{\pm .01}$ & $0.51_{\pm .01}$ \\
DiffS.-cut      & $0.23_{\pm .00}$ & $0.48_{\pm .01}$ & $0.63_{\pm .01}$ & $0.35_{\pm .01}$ & $0.25_{\pm .01}$ & $0.46_{\pm .01}$ & $0.23_{\pm .01}$ & $0.21_{\pm .00}$ & $0.49_{\pm .00}$ & $0.73_{\pm .01}$ & $0.38_{\pm .01}$ & $0.19_{\pm .00}$ & $0.33_{\pm .01}$ & $0.18_{\pm .01}$ \\
DiffS.-lb       & $0.20_{\pm .00}$ & $0.43_{\pm .01}$ & $0.57_{\pm .01}$ & $0.40_{\pm .01}$ & $0.52_{\pm .01}$ & $0.71_{\pm .01}$ & $0.45_{\pm .01}$ & $0.11_{\pm .00}$ & $0.23_{\pm .01}$ & $0.34_{\pm .01}$ & $0.75_{\pm .01}$ & $0.79_{\pm .00}$ & $0.89_{\pm .00}$ & $0.70_{\pm .01}$ \\
DiffS.-minp     & $0.21_{\pm .00}$ & $0.46_{\pm .01}$ & $0.60_{\pm .01}$ & $0.35_{\pm .01}$ & $0.45_{\pm .01}$ & $0.67_{\pm .01}$ & $0.41_{\pm .01}$ & $0.19_{\pm .00}$ & $0.43_{\pm .01}$ & $0.60_{\pm .01}$ & $0.70_{\pm .01}$ & $0.52_{\pm .00}$ & $0.70_{\pm .01}$ & $0.44_{\pm .01}$ \\
\hline
\multicolumn{15}{|l|}{\textit{Temperature = 2.0}} \\
\hline
Top-$p$         & $0.01_{\pm .00}$ & $0.03_{\pm .00}$ & $0.14_{\pm .00}$ & $0.63_{\pm .01}$ & $0.65_{\pm .00}$ & $0.92_{\pm .01}$ & $0.91_{\pm .00}$ & $0.01_{\pm .00}$ & $0.03_{\pm .00}$ & $0.15_{\pm .00}$ & $0.81_{\pm .01}$ & $0.65_{\pm .00}$ & $0.98_{\pm .00}$ & $0.94_{\pm .00}$ \\
$\eta$-sampling & $0.01_{\pm .00}$ & $0.03_{\pm .00}$ & $0.13_{\pm .00}$ & $0.65_{\pm .01}$ & $0.67_{\pm .00}$ & $0.92_{\pm .01}$ & $0.92_{\pm .00}$ & $0.01_{\pm .00}$ & $0.03_{\pm .00}$ & $0.14_{\pm .00}$ & $0.80_{\pm .01}$ & $0.66_{\pm .00}$ & $0.98_{\pm .00}$ & $0.93_{\pm .00}$ \\
Locally typical & $0.00_{\pm .00}$ & $0.03_{\pm .00}$ & $0.14_{\pm .00}$ & $0.71_{\pm .01}$ & $0.63_{\pm .00}$ & $0.93_{\pm .00}$ & $0.92_{\pm .00}$ & $0.01_{\pm .00}$ & $0.03_{\pm .00}$ & $0.15_{\pm .00}$ & $0.83_{\pm .01}$ & $0.63_{\pm .00}$ & $0.98_{\pm .00}$ & $0.94_{\pm .00}$ \\
Min-$p$         & $0.19_{\pm .00}$ & $0.42_{\pm .01}$ & $0.55_{\pm .01}$ & $0.41_{\pm .01}$ & $0.56_{\pm .01}$ & $0.76_{\pm .01}$ & $0.50_{\pm .01}$ & $0.13_{\pm .00}$ & $0.30_{\pm .01}$ & $0.43_{\pm .01}$ & $0.82_{\pm .01}$ & $0.72_{\pm .00}$ & $0.89_{\pm .00}$ & $0.62_{\pm .01}$ \\
DiffS.-cut      & $0.23_{\pm .00}$ & $0.48_{\pm .01}$ & $0.63_{\pm .01}$ & $0.35_{\pm .01}$ & $0.25_{\pm .01}$ & $0.46_{\pm .01}$ & $0.24_{\pm .01}$ & $0.21_{\pm .00}$ & $0.49_{\pm .00}$ & $0.73_{\pm .01}$ & $0.38_{\pm .01}$ & $0.19_{\pm .00}$ & $0.33_{\pm .01}$ & $0.18_{\pm .01}$ \\
DiffS.-lb       & $0.19_{\pm .00}$ & $0.43_{\pm .01}$ & $0.56_{\pm .01}$ & $0.43_{\pm .01}$ & $0.54_{\pm .01}$ & $0.73_{\pm .01}$ & $0.48_{\pm .01}$ & $0.09_{\pm .00}$ & $0.19_{\pm .01}$ & $0.29_{\pm .01}$ & $0.76_{\pm .01}$ & $0.82_{\pm .00}$ & $0.92_{\pm .00}$ & $0.75_{\pm .01}$ \\
DiffS.-minp     & $0.21_{\pm .00}$ & $0.46_{\pm .01}$ & $0.59_{\pm .01}$ & $0.37_{\pm .01}$ & $0.47_{\pm .01}$ & $0.69_{\pm .01}$ & $0.42_{\pm .01}$ & $0.19_{\pm .00}$ & $0.42_{\pm .01}$ & $0.58_{\pm .01}$ & $0.72_{\pm .01}$ & $0.53_{\pm .00}$ & $0.72_{\pm .01}$ & $0.45_{\pm .01}$ \\
\hline
\multicolumn{15}{|l|}{\textit{Temperature = 10.0}} \\
\hline
Top-$p$         & $0.00_{\pm .00}$ & $0.02_{\pm .00}$ & $0.12_{\pm .00}$ & $0.74_{\pm .01}$ & $0.64_{\pm .00}$ & $0.93_{\pm .00}$ & $0.92_{\pm .00}$ & $0.00_{\pm .00}$ & $0.03_{\pm .00}$ & $0.13_{\pm .00}$ & $0.78_{\pm .01}$ & $0.61_{\pm .00}$ & $0.97_{\pm .00}$ & $0.95_{\pm .00}$ \\
$\eta$-sampling & $0.00_{\pm .00}$ & $0.02_{\pm .00}$ & $0.11_{\pm .00}$ & $0.74_{\pm .01}$ & $0.64_{\pm .00}$ & $0.93_{\pm .00}$ & $0.93_{\pm .00}$ & $0.00_{\pm .00}$ & $0.02_{\pm .00}$ & $0.12_{\pm .00}$ & $0.75_{\pm .01}$ & $0.63_{\pm .00}$ & $0.97_{\pm .00}$ & $0.95_{\pm .00}$ \\
Locally typical & $0.00_{\pm .00}$ & $0.02_{\pm .00}$ & $0.12_{\pm .00}$ & $0.75_{\pm .01}$ & $0.63_{\pm .00}$ & $0.93_{\pm .00}$ & $0.93_{\pm .00}$ & $0.00_{\pm .00}$ & $0.02_{\pm .00}$ & $0.12_{\pm .00}$ & $0.77_{\pm .01}$ & $0.61_{\pm .00}$ & $0.97_{\pm .00}$ & $0.95_{\pm .00}$ \\
Min-$p$         & $0.00_{\pm .00}$ & $0.02_{\pm .00}$ & $0.11_{\pm .00}$ & $0.74_{\pm .01}$ & $0.64_{\pm .00}$ & $0.93_{\pm .00}$ & $0.93_{\pm .00}$ & $0.00_{\pm .00}$ & $0.02_{\pm .00}$ & $0.12_{\pm .00}$ & $0.75_{\pm .01}$ & $0.63_{\pm .00}$ & $0.97_{\pm .00}$ & $0.95_{\pm .00}$ \\
DiffS.-cut      & $0.23_{\pm .00}$ & $0.48_{\pm .01}$ & $0.63_{\pm .01}$ & $0.35_{\pm .01}$ & $0.25_{\pm .01}$ & $0.47_{\pm .01}$ & $0.25_{\pm .01}$ & $0.21_{\pm .00}$ & $0.49_{\pm .00}$ & $0.73_{\pm .01}$ & $0.39_{\pm .01}$ & $0.19_{\pm .00}$ & $0.34_{\pm .01}$ & $0.19_{\pm .01}$ \\
DiffS.-lb       & $0.17_{\pm .00}$ & $0.40_{\pm .01}$ & $0.52_{\pm .01}$ & $0.48_{\pm .01}$ & $0.59_{\pm .01}$ & $0.78_{\pm .01}$ & $0.54_{\pm .01}$ & $0.05_{\pm .00}$ & $0.13_{\pm .00}$ & $0.23_{\pm .01}$ & $0.77_{\pm .01}$ & $0.83_{\pm .00}$ & $0.95_{\pm .00}$ & $0.83_{\pm .01}$ \\
DiffS.-minp     & $0.21_{\pm .00}$ & $0.45_{\pm .01}$ & $0.58_{\pm .01}$ & $0.38_{\pm .01}$ & $0.49_{\pm .01}$ & $0.72_{\pm .01}$ & $0.45_{\pm .01}$ & $0.18_{\pm .00}$ & $0.40_{\pm .01}$ & $0.55_{\pm .01}$ & $0.76_{\pm .01}$ & $0.57_{\pm .00}$ & $0.79_{\pm .01}$ & $0.50_{\pm .01}$ \\
 \hline
\end{tabular}
}
\caption{Aggregate results over 5 outputs sampled for each of the 1000 prompts from the XSum dataset for the instructed model (left) and the pre-trained model (right) when adopting different temperature values. The mean and $95\%$ confidence interval are reported for all the metrics.
\label{tab:xsum_full}}
\end{table*}

Similar considerations can be traced for XSum, as reported in Table \ref{tab:xsum_full}. For both RLHF-instructed and pre-trained models, the quality of output produced by the baselines tends to dramatically decrease at higher temperatures (only min-$p$ achieves good results at $\tau > 1.0$), with the consequence of an increasing against-greedy diversity due to the choice of random and meaningless tokens. Instead, the quality of the output generated by \textit{DiffSampling} remains more stable, with small but consistent increases in diversity.

\subsection{Divergence Association Task}

\begin{figure*}[ht]
    \centering
    \includegraphics[width=1.\textwidth]{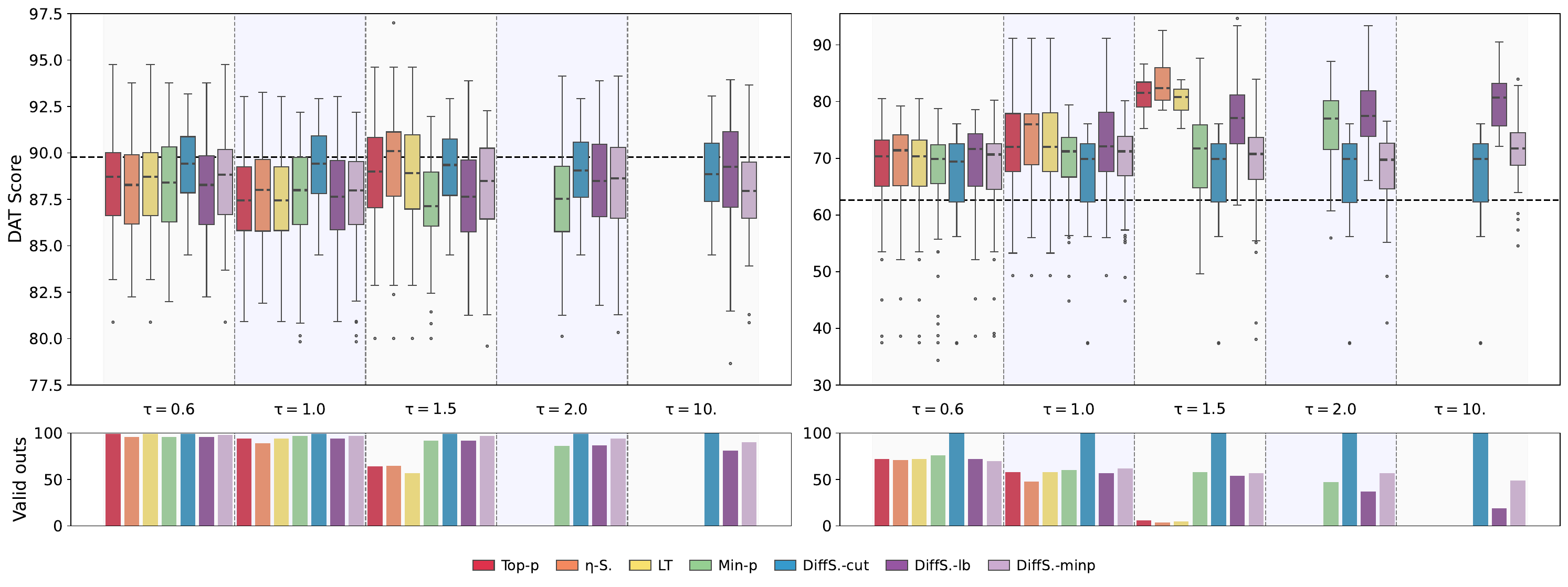}
    \caption{DAT scores for our methods and the baselines for the instructed (left) and pre-trained (right) model with different temperature values, together with the number of valid outputs produced by each sampling strategy. The dashed line represents the score of the greedy strategy.}
    \label{fig:dat_full}
\end{figure*}

Figure \ref{fig:dat_full} reports the DAT score and the percentage of output validity of the DPO-instructed and pre-trained models with different temperature values. Top-$p$ sampling, locally typical sampling, and $\eta$-sampling rapidly stop outputting valid lists of nouns when the temperature raises, even if the DAT score tends to be higher at $\tau = 1.5$; min-$p$ returns a high percentage of valid outputs even at $\tau = 2.0$, but increase the DAT score only for the pre-trained model and cannot produce anything valid at $\tau = 10.0$. Instead, the performance of our methods remains very similar across different temperatures in terms of both the DAT score and the percentage of valid outputs, except for \textit{DiffSampling-lb}, whose scores increase at the cost of some validity.
 
On the other hand, the greedy decoding strategy is less effective for the pre-trained model, which results in higher temperatures yielding better DAT scores across both the baselines and our methods.
However, the number of valid outputs decreases faster, and top-$p$ sampling, locally typical sampling, and $\eta$-sampling produce very few correct lists at a temperature of $1.5$ (but with a higher DAT score). Again, min-$p$ better manages temperatures around $1.5$ and $2.0$, with higher scores and still at least half of the outputs as valid, but cannot produce any correct output at a temperature of $10.0$.

\subsection{WritingPrompts}

\begin{table*}[ht!]
\centering
\resizebox{0.75\textwidth}{!}{%
\begin{tabular}{|L{2.5cm}||C{1.2cm}|C{1.2cm}C{1.2cm}||C{1.2cm}|C{1.2cm}C{1.2cm}|} 
\hline
Model: & \multicolumn{3}{c||}{RLHF-instructed} & \multicolumn{3}{c|}{Pre-trained} \\
\hline %  $\uparrow$
Method & Quality & \multicolumn{2}{c||}{Per-Input Diversity} & Quality & \multicolumn{2}{c|}{Per-Input Diversity} \\
\hline
\textcolor{white}{placeholder} & COH & EAD & SBERT & COH & EAD & SBERT \\
\hline
\multicolumn{7}{|l|}{\textit{Temperature = 0.0}} \\
\hline
Greedy             & $0.44_{\pm .01}$ & $0.06_{\pm .01}$ & - & $0.59_{\pm .01}$ & $0.07_{\pm .00}$ & - \\
\hline
\multicolumn{7}{|l|}{\textit{Temperature = 1.0}} \\
\hline
Top-$p$         & $0.42_{\pm .01}$ & $0.73_{\pm .00}$ & $0.25_{\pm .00}$ & $0.42_{\pm .01}$ & $0.64_{\pm .00}$ & $0.58_{\pm .00}$ \\
$\eta$-sampling & $0.42_{\pm .01}$ & $0.80_{\pm .00}$ & $0.28_{\pm .00}$ & $0.40_{\pm .01}$ & $0.77_{\pm .00}$ & $0.60_{\pm .00}$ \\
Locally typical & $0.42_{\pm .01}$ & $0.73_{\pm .00}$ & $0.25_{\pm .00}$ & $0.42_{\pm .01}$ & $0.64_{\pm .00}$ & $0.58_{\pm .00}$ \\
Min-$p$         & $0.43_{\pm .01}$ & $0.71_{\pm .00}$ & $0.23_{\pm .00}$ & $0.51_{\pm .01}$ & $0.35_{\pm .01}$ & $0.46_{\pm .00}$ \\
DiffS.-cut      & $0.43_{\pm .01}$ & $0.63_{\pm .00}$ & $0.19_{\pm .00}$ & $0.60_{\pm .01}$ & $0.15_{\pm .00}$ & $0.31_{\pm .01}$ \\
DiffS.-lb       & $0.42_{\pm .01}$ & $0.73_{\pm .00}$ & $0.25_{\pm .00}$ & $0.41_{\pm .01}$ & $0.67_{\pm .00}$ & $0.58_{\pm .00}$ \\
DiffS.-minp     & $0.43_{\pm .01}$ & $0.71_{\pm .00}$ & $0.23_{\pm .00}$ & $0.51_{\pm .01}$ & $0.36_{\pm .00}$ & $0.47_{\pm .00}$ \\
\hline
\multicolumn{7}{|l|}{\textit{Temperature = 0.6}} \\
\hline
Top-$p$         & $0.43_{\pm .01}$ & $0.67_{\pm .00}$ & $0.22_{\pm .00}$ & $0.55_{\pm .01}$ & $0.21_{\pm .00}$ & $0.42_{\pm .00}$ \\
$\eta$-sampling & $0.43_{\pm .01}$ & $0.68_{\pm .00}$ & $0.22_{\pm .00}$ & $0.53_{\pm .01}$ & $0.28_{\pm .00}$ & $0.45_{\pm .00}$ \\
Locally typical & $0.43_{\pm .01}$ & $0.67_{\pm .00}$ & $0.22_{\pm .00}$ & $0.55_{\pm .01}$ & $0.21_{\pm .00}$ & $0.42_{\pm .00}$ \\
Min-$p$         & $0.43_{\pm .01}$ & $0.66_{\pm .00}$ & $0.22_{\pm .00}$ & $0.57_{\pm .01}$ & $0.20_{\pm .00}$ & $0.40_{\pm .00}$ \\
DiffS.-cut      & $0.43_{\pm .01}$ & $0.63_{\pm .00}$ & $0.19_{\pm .00}$ & $0.60_{\pm .01}$ & $0.16_{\pm .00}$ & $0.31_{\pm .01}$ \\
DiffS.-lb       & $0.43_{\pm .01}$ & $0.68_{\pm .00}$ & $0.23_{\pm .00}$ & $0.53_{\pm .01}$ & $0.27_{\pm .00}$ & $0.45_{\pm .00}$ \\
DiffS.-minp     & $0.43_{\pm .01}$ & $0.68_{\pm .00}$ & $0.22_{\pm .00}$ & $0.54_{\pm .01}$ & $0.24_{\pm .00}$ & $0.43_{\pm .00}$ \\
\hline
\multicolumn{7}{|l|}{\textit{Temperature = 1.5}} \\
\hline
Top-$p$         & $0.18_{\pm .00}$ & $1.00_{\pm .00}$ & $0.14_{\pm .00}$ & $0.15_{\pm .00}$ & $1.00_{\pm .00}$ & $0.24_{\pm .00}$ \\
$\eta$-sampling & $0.21_{\pm .00}$ & $1.01_{\pm .00}$ & $0.17_{\pm .00}$ & $0.15_{\pm .00}$ & $1.00_{\pm .00}$ & $0.29_{\pm .00}$ \\
Locally typical & $0.17_{\pm .00}$ & $1.01_{\pm .00}$ & $0.14_{\pm .00}$ & $0.14_{\pm .00}$ & $1.00_{\pm .00}$ & $0.25_{\pm .00}$ \\
Min-$p$         & $0.42_{\pm .01}$ & $0.78_{\pm .00}$ & $0.26_{\pm .00}$ & $0.43_{\pm .01}$ & $0.73_{\pm .00}$ & $0.55_{\pm .00}$ \\
DiffS.-cut      & $0.43_{\pm .01}$ & $0.63_{\pm .00}$ & $0.19_{\pm .00}$ & $0.60_{\pm .01}$ & $0.16_{\pm .00}$ & $0.31_{\pm .01}$ \\
DiffS.-lb       & $0.40_{\pm .01}$ & $0.89_{\pm .00}$ & $0.33_{\pm .00}$ & $0.31_{\pm .01}$ & $0.92_{\pm .00}$ & $0.63_{\pm .00}$ \\
DiffS.-minp     & $0.43_{\pm .01}$ & $0.73_{\pm .00}$ & $0.24_{\pm .00}$ & $0.48_{\pm .01}$ & $0.46_{\pm .00}$ & $0.49_{\pm .00}$ \\
\hline
\multicolumn{7}{|l|}{\textit{Temperature = 2.0}} \\
\hline
Top-$p$         & $0.12_{\pm .00}$ & $1.01_{\pm .00}$ & $0.10_{\pm .00}$ & $0.12_{\pm .00}$ & $1.01_{\pm .00}$ & $0.27_{\pm .00}$ \\
$\eta$-sampling & $0.11_{\pm .00}$ & $1.02_{\pm .00}$ & $0.12_{\pm .00}$ & $0.11_{\pm .00}$ & $1.00_{\pm .00}$ & $0.32_{\pm .00}$ \\
Locally typical & $0.12_{\pm .00}$ & $1.01_{\pm .00}$ & $0.10_{\pm .00}$ & $0.11_{\pm .00}$ & $1.01_{\pm .00}$ & $0.29_{\pm .00}$ \\
Min-$p$         & $0.39_{\pm .01}$ & $0.92_{\pm .00}$ & $0.33_{\pm .00}$ & $0.35_{\pm .01}$ & $0.85_{\pm .00}$ & $0.62_{\pm .00}$ \\
DiffS.-cut      & $0.43_{\pm .01}$ & $0.64_{\pm .00}$ & $0.19_{\pm .00}$ & $0.60_{\pm .01}$ & $0.16_{\pm .00}$ & $0.31_{\pm .00}$ \\
DiffS.-lb       & $0.38_{\pm .01}$ & $0.93_{\pm .00}$ & $0.33_{\pm .00}$ & $0.26_{\pm .01}$ & $0.97_{\pm .00}$ & $0.52_{\pm .00}$ \\
DiffS.-minp     & $0.43_{\pm .01}$ & $0.74_{\pm .00}$ & $0.24_{\pm .00}$ & $0.47_{\pm .01}$ & $0.53_{\pm .00}$ & $0.50_{\pm .00}$ \\
\hline
\multicolumn{7}{|l|}{\textit{Temperature = 10.0}} \\
\hline
Top-$p$         & $0.11_{\pm .00}$ & $1.02_{\pm .00}$ & $0.13_{\pm .00}$ & $0.10_{\pm .00}$ & $1.00_{\pm .00}$ & $0.33_{\pm .00}$ \\
$\eta$-sampling & $0.10_{\pm .00}$ & $1.02_{\pm .00}$ & $0.14_{\pm .00}$ & $0.10_{\pm .00}$ & $1.00_{\pm .00}$ & $0.36_{\pm .00}$ \\
Locally typical & $0.10_{\pm .00}$ & $1.02_{\pm .00}$ & $0.14_{\pm .00}$ & $0.10_{\pm .00}$ & $1.00_{\pm .00}$ & $0.35_{\pm .00}$ \\
Min-$p$         & $0.10_{\pm .00}$ & $1.02_{\pm .00}$ & $0.14_{\pm .00}$ & $0.10_{\pm .00}$ & $1.00_{\pm .00}$ & $0.36_{\pm .00}$ \\
DiffS.-cut      & $0.43_{\pm .01}$ & $0.64_{\pm .00}$ & $0.20_{\pm .00}$ & $0.60_{\pm .01}$ & $0.17_{\pm .00}$ & $0.31_{\pm .01}$ \\
DiffS.-lb       & $0.32_{\pm .01}$ & $0.98_{\pm .00}$ & $0.29_{\pm .00}$ & $0.19_{\pm .00}$ & $1.00_{\pm .00}$ & $0.37_{\pm .00}$ \\
DiffS.-minp     & $0.42_{\pm .01}$ & $0.77_{\pm .00}$ & $0.25_{\pm .00}$ & $0.45_{\pm .01}$ & $0.68_{\pm .00}$ & $0.52_{\pm .00}$ \\
 \hline
\end{tabular}
}
\caption{Aggregate results over 3 seeds for the WritingPrompts dataset for the instructed model (left) and the pre-trained model (right) with different temperature values. The mean and standard error of the final score for each run are reported for cross-input diversity, whereas the mean and $95\%$ confidence interval for the full set of answers are reported for the other metrics.
\label{tab:wp_full}}
\end{table*}

Finally, Table \ref{tab:wp_full} reports the full results for the story generation task at different temperatures. As above, the quality of output produced by the baselines drops quickly at higher temperatures, and only min-$p$ achieves competitive results at $\tau = 2.0$. On the contrary, the quality of the output generated by \textit{DiffSampling-cut} and \textit{DiffSampling-minp} remains more stable, with small but consistent increases in diversity; \textit{DiffSampling-lb} behaves worse, but still better than its competitor top-$p$.

\subsection{Temperature Before or After Truncating} 
\label{temp_position}

As thoroughly described in the article, we apply temperature after truncating based on the minimum discrete derivative to preserve the guarantees of correctness of selected tokens. However, the de facto standard is to apply temperature before any other truncation or modification. In this section, we examine the implications of the temperature position in terms of quality and diversity.

\begin{table*}[ht!]
\centering
\resizebox{\textwidth}{!}{%
\begin{tabular}{|L{2.0cm}||C{1.5cm}|C{1.3cm}C{1.3cm}|C{1.3cm}C{1.3cm}||C{1.5cm}|C{1.3cm}C{1.3cm}|C{1.3cm}C{1.3cm}|} 
\hline
Method & \multicolumn{5}{c||}{BEFORE} & \multicolumn{5}{c|}{AFTER} \\
\hline %  $\uparrow$
\textcolor{white}{placeholder} & Accuracy & \multicolumn{2}{c|}{Cross-Input} & \multicolumn{2}{c||}{Against-Greedy} & Accuracy & \multicolumn{2}{c|}{Cross-Input} & \multicolumn{2}{c|}{Against-Greedy} \\
\hline
\textcolor{white}{placeholder} & \textcolor{white}{placeholder} & EAD & SBERT & EAD & SBERT & \textcolor{white}{placeholder} & EAD & SBERT & EAD & SBERT \\
\hline
\multicolumn{11}{|l|}{\textit{Temperature = 0.6}} \\
\hline
DiffS.-cut  & $66.19_{\pm .12}$ & $2.04_{\pm .00}$ & $0.64_{\pm .00}$ & $0.10_{\pm .00}$ & $0.01_{\pm .00}$ & $66.74_{\pm .04}$ & $2.05_{\pm .00}$ & $0.64_{\pm .00}$ & $0.13_{\pm .00}$ & $0.02_{\pm .00}$ \\
DiffS.-lb   & $66.59_{\pm .28}$ & $2.06_{\pm .00}$ & $0.64_{\pm .00}$ & $0.18_{\pm .00}$ & $0.02_{\pm .00}$ & $65.73_{\pm .23}$ & $2.06_{\pm .00}$ & $0.64_{\pm .00}$ & $0.19_{\pm .00}$ & $0.03_{\pm .00}$ \\
DiffS.-minp & $66.14_{\pm .15}$ & $2.05_{\pm .00}$ & $0.64_{\pm .00}$ & $0.17_{\pm .00}$ & $0.02_{\pm .00}$ & $67.05_{\pm .14}$ & $2.06_{\pm .01}$ & $0.64_{\pm .00}$ & $0.19_{\pm .00}$ & $0.03_{\pm .00}$ \\
\hline
\multicolumn{11}{|l|}{\textit{Temperature = 1.5}} \\
\hline
DiffS.-cut  & $66.16_{\pm .57}$ & $2.05_{\pm .00}$ & $0.64_{\pm .00}$ & $0.17_{\pm .00}$ & $0.02_{\pm .00}$ & $66.72_{\pm .36}$ & $2.05_{\pm .00}$ & $0.64_{\pm .00}$ & $0.15_{\pm .00}$ & $0.02_{\pm .00}$ \\
DiffS.-lb   & $63.84_{\pm .53}$ & $2.18_{\pm .01}$ & $0.64_{\pm .00}$ & $0.28_{\pm .00}$ & $0.04_{\pm .00}$ & $65.20_{\pm .25}$ & $2.11_{\pm .01}$ & $0.64_{\pm .00}$ & $0.25_{\pm .00}$ & $0.04_{\pm .00}$ \\
DiffS.-minp & $64.34_{\pm .36}$ & $2.15_{\pm .01}$ & $0.64_{\pm .00}$ & $0.28_{\pm .00}$ & $0.04_{\pm .00}$ & $65.55_{\pm .61}$ & $2.11_{\pm .00}$ & $0.64_{\pm .00}$ & $0.25_{\pm .00}$ & $0.04_{\pm .00}$ \\
\hline
\multicolumn{11}{|l|}{\textit{Temperature = 2.0}} \\
\hline
DiffS.-cut  & $65.50_{\pm .09}$ & $2.06_{\pm .01}$ & $0.64_{\pm .00}$ & $0.19_{\pm .00}$ & $0.03_{\pm .00}$ & $66.44_{\pm .18}$ & $2.05_{\pm .00}$ & $0.64_{\pm .00}$ & $0.15_{\pm .00}$ & $0.02_{\pm .00}$ \\
DiffS.-lb   & $18.17_{\pm .55}$ & $11.92_{\pm .04}$ & $0.61_{\pm .00}$ & $0.77_{\pm .01}$ & $0.43_{\pm .01}$ & $63.48_{\pm .43}$ & $2.12_{\pm .00}$ & $0.64_{\pm .00}$ & $0.26_{\pm .00}$ & $0.04_{\pm .00}$ \\
DiffS.-minp & $61.81_{\pm .14}$ & $2.25_{\pm .01}$ & $0.64_{\pm .00}$ & $0.32_{\pm .00}$ & $0.05_{\pm .00}$ & $65.13_{\pm .28}$ & $2.12_{\pm .01}$ & $0.64_{\pm .00}$ & $0.26_{\pm .00}$ & $0.04_{\pm .00}$ \\
\hline
\multicolumn{11}{|l|}{\textit{Temperature = 10.0}} \\
\hline
DiffS.-cut  & $61.31_{\pm .21}$ & $2.22_{\pm .01}$ & $0.64_{\pm .00}$ & $0.31_{\pm .00}$ & $0.04_{\pm .00}$ & $66.31_{\pm .26}$ & $2.04_{\pm .00}$ & $0.64_{\pm .00}$ & $0.15_{\pm .00}$ & $0.02_{\pm .00}$ \\
DiffS.-lb   & $0.00_{\pm .00}$ & $17.38_{\pm .02}$ & $0.13_{\pm .00}$ & $1.00_{\pm .00}$ & $0.96_{\pm .00}$ & $64.11_{\pm .13}$ & $2.15_{\pm .00}$ & $0.64_{\pm .00}$ & $0.29_{\pm .00}$ & $0.04_{\pm .00}$ \\
DiffS.-minp & $0.00_{\pm .00}$ & $17.38_{\pm .02}$ & $0.13_{\pm .00}$ & $1.00_{\pm .00}$ & $0.96_{\pm .00}$ & $63.58_{\pm .43}$ & $2.17_{\pm .01}$ & $0.64_{\pm .00}$ & $0.29_{\pm .00}$ & $0.04_{\pm .00}$ \\
\hline
\end{tabular}
}
\caption{Accuracy and diversity of results for the GSM8K test set over 3 seeds. 
The mean and standard error of the final score for each run are reported for accuracy and cross-input diversity, whereas the mean and $95\%$ confidence interval for the full set of answers are reported for against-greedy diversity.
\label{gsm8k_temp}}
\end{table*}

Table \ref{gsm8k_temp} reports the results of our methods with temperature before (left side) and after (right side) the truncation for the GSM8K test set. As we can see, applying the temperature before causes the accuracy to degrade at higher temperatures, while ensuring a slightly higher diversity. Interestingly, at $\tau = 0.6$, applying the temperature after leads to better results in terms of both accuracy and diversity. This confirms that our choice preserves the quality as much as possible, at the cost of some additional diversity.

\begin{table*}[ht]
\centering
\resizebox{\textwidth}{!}{%
\begin{tabular}{|L{2.0cm}||C{1.5cm}|C{1.3cm}C{1.3cm}|C{1.3cm}C{1.3cm}||C{1.5cm}|C{1.3cm}C{1.3cm}|C{1.3cm}C{1.3cm}|} 
\hline
Method & \multicolumn{5}{c||}{BEFORE} & \multicolumn{5}{c|}{AFTER} \\
\hline %  $\uparrow$
\textcolor{white}{placeholder} & Accuracy & \multicolumn{2}{c|}{Cross-Input} & \multicolumn{2}{c||}{Against-Greedy} & Accuracy & \multicolumn{2}{c|}{Cross-Input} & \multicolumn{2}{c|}{Against-Greedy} \\
\hline
\textcolor{white}{placeholder} & \textcolor{white}{placeholder} & EAD & SBERT & EAD & SBERT & \textcolor{white}{placeholder} & EAD & SBERT & EAD & SBERT \\
\hline
\multicolumn{11}{|l|}{\textit{Temperature = 0.6}} \\
\hline
DiffS.-cut  & $21.44_{\pm .12}$ & $5.69_{\pm .01}$ & $0.80_{\pm .00}$ & $0.22_{\pm .00}$ & $0.06_{\pm .00}$ & $21.52_{\pm .13}$ & $5.72_{\pm .00}$ & $0.80_{\pm .00}$ & $0.25_{\pm .00}$ & $0.07_{\pm .00}$ \\
DiffS.-lb   & $21.22_{\pm .14}$ & $5.83_{\pm .01}$ & $0.80_{\pm .00}$ & $0.31_{\pm .00}$ & $0.09_{\pm .00}$ & $20.65_{\pm .20}$ & $5.89_{\pm .01}$ & $0.80_{\pm .00}$ & $0.32_{\pm .00}$ & $0.09_{\pm .00}$ \\
DiffS.-minp & $21.20_{\pm .06}$ & $5.83_{\pm .00}$ & $0.80_{\pm .00}$ & $0.31_{\pm .00}$ & $0.09_{\pm .00}$ & $20.56_{\pm .21}$ & $5.88_{\pm .00}$ & $0.80_{\pm .00}$ & $0.32_{\pm .00}$ & $0.09_{\pm .00}$ \\
\hline
\multicolumn{11}{|l|}{\textit{Temperature = 1.5}} \\
\hline
DiffS.-cut  & $21.15_{\pm .09}$ & $5.78_{\pm .01}$ & $0.80_{\pm .00}$ & $0.30_{\pm .00}$ & $0.08_{\pm .00}$ & $21.36_{\pm .15}$ & $5.73_{\pm .00}$ & $0.80_{\pm .00}$ & $0.27_{\pm .00}$ & $0.07_{\pm .00}$ \\
DiffS.-lb   & $18.28_{\pm .05}$ & $7.05_{\pm .05}$ & $0.80_{\pm .00}$ & $0.42_{\pm .00}$ & $0.12_{\pm .00}$ & $19.55_{\pm .03}$ & $6.31_{\pm .02}$ & $0.80_{\pm .00}$ & $0.39_{\pm .00}$ & $0.11_{\pm .00}$ \\
DiffS.-minp & $18.72_{\pm .19}$ & $6.54_{\pm .01}$ & $0.80_{\pm .00}$ & $0.41_{\pm .00}$ & $0.12_{\pm .00}$ & $20.04_{\pm .13}$ & $6.19_{\pm .01}$ & $0.80_{\pm .00}$ & $0.38_{\pm .00}$ & $0.11_{\pm .00}$ \\
\hline
\multicolumn{11}{|l|}{\textit{Temperature = 2.0}} \\
\hline
DiffS.-cut  & $21.25_{\pm .10}$ & $5.85_{\pm .00}$ & $0.80_{\pm .00}$ & $0.32_{\pm .00}$ & $0.09_{\pm .00}$ & $21.66_{\pm .20}$ & $5.71_{\pm .01}$ & $0.80_{\pm .00}$ & $0.27_{\pm .00}$ & $0.08_{\pm .00}$ \\
DiffS.-lb   & $1.77_{\pm .06}$ & $51.00_{\pm .09}$ & $0.48_{\pm .00}$ & $0.94_{\pm .00}$ & $0.72_{\pm .00}$ & $19.17_{\pm .10}$ & $6.40_{\pm .02}$ & $0.80_{\pm .00}$ & $0.40_{\pm .00}$ & $0.12_{\pm .00}$ \\
DiffS.-minp & $16.51_{\pm .06}$ & $7.25_{\pm .02}$ & $0.80_{\pm .00}$ & $0.45_{\pm .00}$ & $0.13_{\pm .00}$ & $19.70_{\pm .09}$ & $6.32_{\pm .02}$ & $0.80_{\pm .00}$ & $0.39_{\pm .00}$ & $0.11_{\pm .00}$ \\
\hline
\multicolumn{11}{|l|}{\textit{Temperature = 10.0}} \\
\hline
DiffS.-cut  & $16.63_{\pm .12}$ & $6.78_{\pm .01}$ & $0.80_{\pm .00}$ & $0.43_{\pm .00}$ & $0.12_{\pm .00}$ & $21.22_{\pm .11}$ & $5.74_{\pm .01}$ & $0.80_{\pm .00}$ & $0.28_{\pm .00}$ & $0.08_{\pm .00}$ \\
DiffS.-lb   & $0.00_{\pm .00}$ & $59.15_{\pm .04}$ & $0.13_{\pm .00}$ & $1.00_{\pm .00}$ & $1.00_{\pm .00}$ & $18.20_{\pm .07}$ & $6.72_{\pm .00}$ & $0.80_{\pm .00}$ & $0.42_{\pm .00}$ & $0.12_{\pm .00}$ \\
DiffS.-minp & $0.00_{\pm .00}$ & $59.15_{\pm .04}$ & $0.13_{\pm .00}$ & $1.00_{\pm .00}$ & $1.00_{\pm .00}$ & $18.64_{\pm .20}$ & $6.54_{\pm .01}$ & $0.80_{\pm .00}$ & $0.41_{\pm .00}$ & $0.12_{\pm .00}$ \\
\hline
\end{tabular}
}
\caption{Accuracy and diversity of results for the MATH test set over 3 seeds. 
The mean and standard error of the final score for each run are reported for accuracy and cross-input diversity, whereas the mean and $95\%$ confidence interval for the full set of answers are reported for against-greedy diversity.
\label{math_temp}}
\end{table*}

Table \ref{math_temp} reports the results of our methods with temperature before (left side) and after (right side) the truncation for the MATH test set. Again, applying a higher temperature before causes the accuracy to drop quickly for the two relaxations, and smoothly for \textit{DiffSampling-cut}, with benefits only in terms of syntactic diversity. Instead, applying the temperature after has a negligible impact on quality while fostering diversity.

\begin{table*}[ht!]
\centering
\resizebox{\textwidth}{!}{%
\begin{tabular}{|L{2.0cm}||C{1.2cm}C{1.2cm}C{1.2cm}|C{1.2cm}C{1.2cm}|C{1.2cm}C{1.2cm}||C{1.2cm}C{1.2cm}C{1.2cm}|C{1.2cm}C{1.2cm}|C{1.2cm}C{1.2cm}|} 
\hline
Method & \multicolumn{7}{c||}{BEFORE} & \multicolumn{7}{c|}{AFTER} \\
\hline %  $\uparrow$
\textcolor{white}{placeholder} & \multicolumn{3}{c|}{Quality} & \multicolumn{2}{c|}{Per-Input} & \multicolumn{2}{c||}{Against-Greedy} & \multicolumn{3}{c|}{Quality} & \multicolumn{2}{c|}{Per-Input} & \multicolumn{2}{c|}{Against-Greedy} \\
\hline
\textcolor{white}{placeholder} & R-$1$ & SIM & COH & EAD & SBERT & EAD & SBERT & R-$1$ & SIM & COH & EAD & SBERT & EAD & SBERT \\
\hline
\multicolumn{15}{|l|}{\textit{Temperature = 0.6}} \\
\hline
DiffS.-cut  & $0.22_{\pm .00}$ & $0.48_{\pm .01}$ & $0.62_{\pm .01}$ & $0.41_{\pm .01}$ & $0.37_{\pm .01}$ & $0.64_{\pm .01}$ & $0.36_{\pm .01}$ & $0.23_{\pm .00}$ & $0.49_{\pm .01}$ & $0.63_{\pm .01}$ & $0.35_{\pm .01}$ & $0.24_{\pm .01}$ & $0.43_{\pm .01}$ & $0.22_{\pm .01}$ \\
DiffS.-lb   & $0.22_{\pm .00}$ & $0.48_{\pm .01}$ & $0.62_{\pm .01}$ & $0.38_{\pm .01}$ & $0.35_{\pm .01}$ & $0.56_{\pm .01}$ & $0.30_{\pm .01}$ & $0.22_{\pm .00}$ & $0.47_{\pm .01}$ & $0.62_{\pm .01}$ & $0.39_{\pm .01}$ & $0.38_{\pm .01}$ & $0.59_{\pm .01}$ & $0.32_{\pm .01}$ \\
DiffS.-minp & $0.22_{\pm .00}$ & $0.48_{\pm .01}$ & $0.63_{\pm .01}$ & $0.39_{\pm .01}$ & $0.34_{\pm .01}$ & $0.56_{\pm .01}$ & $0.30_{\pm .01}$ & $0.22_{\pm .00}$ & $0.48_{\pm .01}$ & $0.62_{\pm .01}$ & $0.39_{\pm .01}$ & $0.37_{\pm .01}$ & $0.58_{\pm .01}$ & $0.32_{\pm .01}$ \\
\hline
\multicolumn{15}{|l|}{\textit{Temperature = 1.5}} \\
\hline
DiffS.-cut  & $0.23_{\pm .00}$ & $0.48_{\pm .01}$ & $0.63_{\pm .01}$ & $0.37_{\pm .01}$ & $0.28_{\pm .01}$ & $0.50_{\pm .01}$ & $0.26_{\pm .01}$ & $0.23_{\pm .00}$ & $0.48_{\pm .01}$ & $0.63_{\pm .01}$ & $0.35_{\pm .01}$ & $0.25_{\pm .01}$ & $0.46_{\pm .01}$ & $0.23_{\pm .01}$ \\
DiffS.-lb   & $0.06_{\pm .00}$ & $0.16_{\pm .01}$ & $0.27_{\pm .01}$ & $0.64_{\pm .01}$ & $0.73_{\pm .00}$ & $0.89_{\pm .01}$ & $0.79_{\pm .01}$ & $0.20_{\pm .00}$ & $0.43_{\pm .01}$ & $0.57_{\pm .01}$ & $0.40_{\pm .01}$ & $0.52_{\pm .01}$ & $0.71_{\pm .01}$ & $0.45_{\pm .01}$ \\
DiffS.-minp & $0.20_{\pm .00}$ & $0.44_{\pm .01}$ & $0.58_{\pm .01}$ & $0.38_{\pm .01}$ & $0.51_{\pm .01}$ & $0.71_{\pm .01}$ & $0.45_{\pm .01}$ & $0.21_{\pm .00}$ & $0.46_{\pm .01}$ & $0.60_{\pm .01}$ & $0.35_{\pm .01}$ & $0.45_{\pm .01}$ & $0.67_{\pm .01}$ & $0.41_{\pm .01}$ \\
\hline
\multicolumn{15}{|l|}{\textit{Temperature = 2.0}} \\
\hline
DiffS.-cut  & $0.23_{\pm .00}$ & $0.48_{\pm .01}$ & $0.63_{\pm .01}$ & $0.38_{\pm .01}$ & $0.31_{\pm .01}$ & $0.54_{\pm .01}$ & $0.29_{\pm .01}$ & $0.23_{\pm .00}$ & $0.48_{\pm .01}$ & $0.63_{\pm .01}$ & $0.35_{\pm .01}$ & $0.25_{\pm .01}$ & $0.46_{\pm .01}$ & $0.24_{\pm .01}$ \\
DiffS.-lb   & $0.01_{\pm .00}$ & $0.03_{\pm .00}$ & $0.13_{\pm .00}$ & $0.65_{\pm .01}$ & $0.66_{\pm .00}$ & $0.92_{\pm .01}$ & $0.92_{\pm .00}$ & $0.19_{\pm .00}$ & $0.43_{\pm .01}$ & $0.56_{\pm .01}$ & $0.43_{\pm .01}$ & $0.54_{\pm .01}$ & $0.73_{\pm .01}$ & $0.48_{\pm .01}$ \\
DiffS.-minp & $0.19_{\pm .00}$ & $0.42_{\pm .01}$ & $0.54_{\pm .01}$ & $0.42_{\pm .01}$ & $0.57_{\pm .00}$ & $0.76_{\pm .01}$ & $0.50_{\pm .01}$ & $0.21_{\pm .00}$ & $0.46_{\pm .01}$ & $0.59_{\pm .01}$ & $0.37_{\pm .01}$ & $0.47_{\pm .01}$ & $0.69_{\pm .01}$ & $0.42_{\pm .01}$ \\
\hline
\multicolumn{15}{|l|}{\textit{Temperature = 10.0}} \\
\hline
DiffS.-cut  & $0.17_{\pm .00}$ & $0.38_{\pm .01}$ & $0.51_{\pm .01}$ & $0.57_{\pm .01}$ & $0.53_{\pm .01}$ & $0.74_{\pm .01}$ & $0.50_{\pm .01}$ & $0.23_{\pm .00}$ & $0.48_{\pm .01}$ & $0.63_{\pm .01}$ & $0.35_{\pm .01}$ & $0.25_{\pm .01}$ & $0.47_{\pm .01}$ & $0.25_{\pm .01}$ \\
DiffS.-lb   & $0.00_{\pm .00}$ & $0.02_{\pm .00}$ & $0.11_{\pm .00}$ & $0.74_{\pm .01}$ & $0.64_{\pm .00}$ & $0.93_{\pm .00}$ & $0.93_{\pm .00}$ & $0.17_{\pm .00}$ & $0.40_{\pm .01}$ & $0.52_{\pm .01}$ & $0.48_{\pm .01}$ & $0.59_{\pm .01}$ & $0.78_{\pm .01}$ & $0.54_{\pm .01}$ \\
DiffS.-minp & $0.00_{\pm .00}$ & $0.02_{\pm .00}$ & $0.11_{\pm .00}$ & $0.74_{\pm .01}$ & $0.64_{\pm .00}$ & $0.93_{\pm .00}$ & $0.93_{\pm .00}$ & $0.21_{\pm .00}$ & $0.45_{\pm .01}$ & $0.58_{\pm .01}$ & $0.38_{\pm .01}$ & $0.49_{\pm .01}$ & $0.72_{\pm .01}$ & $0.45_{\pm .01}$ \\
 \hline
\end{tabular}
}
\caption{The mean and $95\%$ confidence interval of quality and diversity metrics for the 5 samples generated by the instructed model with temperature before and after \textit{DiffSampling} for each of the 1000 prompts from the XSum test set.
\label{xsum_instruct_temp}}
\end{table*}

\begin{table*}[ht!]
\centering
\resizebox{\textwidth}{!}{%
\begin{tabular}{|L{2.0cm}||C{1.2cm}C{1.2cm}C{1.2cm}|C{1.2cm}C{1.2cm}|C{1.2cm}C{1.2cm}||C{1.2cm}C{1.2cm}C{1.2cm}|C{1.2cm}C{1.2cm}|C{1.2cm}C{1.2cm}|} 
\hline
Method & \multicolumn{7}{c||}{BEFORE} & \multicolumn{7}{c|}{AFTER} \\
\hline %  $\uparrow$
\textcolor{white}{placeholder} & \multicolumn{3}{c|}{Quality} & \multicolumn{2}{c|}{Pre} & \multicolumn{2}{c||}{Against-Greedy} & \multicolumn{3}{c|}{Quality} & \multicolumn{2}{c|}{Per-Input} & \multicolumn{2}{c|}{Against-Greedy} \\
\hline
\textcolor{white}{placeholder} & R-$1$ & SIM & COH & EAD & SBERT & EAD & SBERT & R-$1$ & SIM & COH & EAD & SBERT & EAD & SBERT \\
\hline
\multicolumn{15}{|l|}{\textit{Temperature = 0.6}} \\
\hline
DiffS.-cut  & $0.21_{\pm .00}$ & $0.48_{\pm .01}$ & $0.70_{\pm .01}$ & $0.50_{\pm .01}$ & $0.29_{\pm .01}$ & $0.47_{\pm .01}$ & $0.27_{\pm .01}$ & $0.21_{\pm .00}$ & $0.49_{\pm .00}$ & $0.73_{\pm .00}$ & $0.37_{\pm .01}$ & $0.18_{\pm .00}$ & $0.30_{\pm .01}$ & $0.16_{\pm .01}$ \\
DiffS.-lb   & $0.20_{\pm .00}$ & $0.46_{\pm .01}$ & $0.68_{\pm .01}$ & $0.54_{\pm .01}$ & $0.38_{\pm .01}$ & $0.50_{\pm .01}$ & $0.31_{\pm .01}$ & $0.20_{\pm .00}$ & $0.45_{\pm .01}$ & $0.65_{\pm .01}$ & $0.59_{\pm .01}$ & $0.42_{\pm .01}$ & $0.55_{\pm .01}$ & $0.34_{\pm .01}$ \\
DiffS.-minp & $0.20_{\pm .00}$ & $0.47_{\pm .01}$ & $0.69_{\pm .01}$ & $0.50_{\pm .01}$ & $0.34_{\pm .01}$ & $0.47_{\pm .01}$ & $0.28_{\pm .01}$ & $0.20_{\pm .00}$ & $0.47_{\pm .01}$ & $0.67_{\pm .01}$ & $0.55_{\pm .01}$ & $0.38_{\pm .01}$ & $0.52_{\pm .01}$ & $0.31_{\pm .01}$ \\
\hline
\multicolumn{15}{|l|}{\textit{Temperature = 1.5}} \\
\hline
DiffS.-cut  & $0.21_{\pm .00}$ & $0.49_{\pm .00}$ & $0.73_{\pm .01}$ & $0.41_{\pm .01}$ & $0.22_{\pm .00}$ & $0.36_{\pm .01}$ & $0.20_{\pm .01}$ & $0.21_{\pm .00}$ & $0.49_{\pm .00}$ & $0.73_{\pm .01}$ & $0.38_{\pm .01}$ & $0.19_{\pm .00}$ & $0.33_{\pm .01}$ & $0.18_{\pm .01}$ \\
DiffS.-lb   & $0.03_{\pm .00}$ & $0.07_{\pm .00}$ & $0.18_{\pm .00}$ & $0.79_{\pm .01}$ & $0.75_{\pm .00}$ & $0.97_{\pm .00}$ & $0.89_{\pm .00}$ & $0.11_{\pm .00}$ & $0.23_{\pm .01}$ & $0.34_{\pm .01}$ & $0.75_{\pm .01}$ & $0.79_{\pm .00}$ & $0.89_{\pm .00}$ & $0.70_{\pm .01}$ \\
DiffS.-minp & $0.17_{\pm .00}$ & $0.38_{\pm .01}$ & $0.52_{\pm .01}$ & $0.77_{\pm .01}$ & $0.61_{\pm .00}$ & $0.80_{\pm .01}$ & $0.52_{\pm .01}$ & $0.19_{\pm .00}$ & $0.43_{\pm .01}$ & $0.60_{\pm .01}$ & $0.70_{\pm .01}$ & $0.52_{\pm .00}$ & $0.70_{\pm .01}$ & $0.44_{\pm .01}$ \\
\hline
\multicolumn{15}{|l|}{\textit{Temperature = 2.0}} \\
\hline
DiffS.-cut  & $0.21_{\pm .00}$ & $0.49_{\pm .00}$ & $0.72_{\pm .01}$ & $0.44_{\pm .01}$ & $0.24_{\pm .00}$ & $0.39_{\pm .01}$ & $0.22_{\pm .01}$ & $0.21_{\pm .00}$ & $0.49_{\pm .00}$ & $0.73_{\pm .01}$ & $0.38_{\pm .01}$ & $0.19_{\pm .00}$ & $0.33_{\pm .01}$ & $0.18_{\pm .01}$ \\
DiffS.-lb   & $0.01_{\pm .00}$ & $0.03_{\pm .00}$ & $0.14_{\pm .00}$ & $0.80_{\pm .01}$ & $0.66_{\pm .00}$ & $0.98_{\pm .00}$ & $0.94_{\pm .00}$ & $0.09_{\pm .00}$ & $0.19_{\pm .01}$ & $0.29_{\pm .01}$ & $0.76_{\pm .01}$ & $0.82_{\pm .00}$ & $0.92_{\pm .00}$ & $0.75_{\pm .01}$ \\
DiffS.-minp & $0.13_{\pm .00}$ & $0.29_{\pm .01}$ & $0.41_{\pm .01}$ & $0.81_{\pm .01}$ & $0.73_{\pm .00}$ & $0.90_{\pm .00}$ & $0.64_{\pm .01}$ & $0.19_{\pm .00}$ & $0.42_{\pm .01}$ & $0.58_{\pm .01}$ & $0.72_{\pm .01}$ & $0.53_{\pm .00}$ & $0.72_{\pm .01}$ & $0.45_{\pm .01}$ \\
\hline
\multicolumn{15}{|l|}{\textit{Temperature = 10.0}} \\
\hline
DiffS.-cut  & $0.07_{\pm .00}$ & $0.17_{\pm .01}$ & $0.32_{\pm .01}$ & $0.53_{\pm .01}$ & $0.58_{\pm .01}$ & $0.73_{\pm .01}$ & $0.71_{\pm .01}$ & $0.21_{\pm .00}$ & $0.49_{\pm .00}$ & $0.73_{\pm .01}$ & $0.39_{\pm .01}$ & $0.19_{\pm .00}$ & $0.34_{\pm .01}$ & $0.19_{\pm .01}$ \\
DiffS.-lb   & $0.00_{\pm .00}$ & $0.02_{\pm .00}$ & $0.12_{\pm .00}$ & $0.75_{\pm .01}$ & $0.63_{\pm .00}$ & $0.97_{\pm .00}$ & $0.95_{\pm .00}$ & $0.05_{\pm .00}$ & $0.13_{\pm .00}$ & $0.23_{\pm .01}$ & $0.77_{\pm .01}$ & $0.83_{\pm .00}$ & $0.95_{\pm .00}$ & $0.83_{\pm .01}$ \\
DiffS.-minp & $0.00_{\pm .00}$ & $0.02_{\pm .00}$ & $0.12_{\pm .00}$ & $0.75_{\pm .01}$ & $0.63_{\pm .00}$ & $0.97_{\pm .00}$ & $0.95_{\pm .00}$ & $0.18_{\pm .00}$ & $0.40_{\pm .01}$ & $0.55_{\pm .01}$ & $0.76_{\pm .01}$ & $0.57_{\pm .00}$ & $0.79_{\pm .01}$ & $0.50_{\pm .01}$ \\
\hline
\end{tabular}
}
\caption{The mean and $95\%$ confidence interval of quality and diversity metrics for the 5 samples generated by the pre-trained model with temperature before and after \textit{DiffSampling} for each of the 1000 prompts from the XSum test set.
\label{xsum_pretrained_temp}}
\end{table*}

The same considerations hold for XSum as well. For both the instructed (Table \ref{xsum_instruct_temp}) and pre-trained (Table \ref{xsum_pretrained_temp}) models, the quality is not preserved with the temperature before, while it is with the temperature after, although diversity does not increase in the same way. Again, for \textit{DiffSampling-lb} and \textit{DiffSampling-minp}, the diversity at $\tau = 0.6$ is instead greater with the temperature after, even if the quality is, more or less, the same. However, in the case of \textit{DiffSampling-cut}, we can observe the opposite behavior: moving the temperature before cutting leads to a higher diversity at the cost of some quality.

\begin{figure*}[ht]
    \centering
    \includegraphics[width=1.\textwidth]{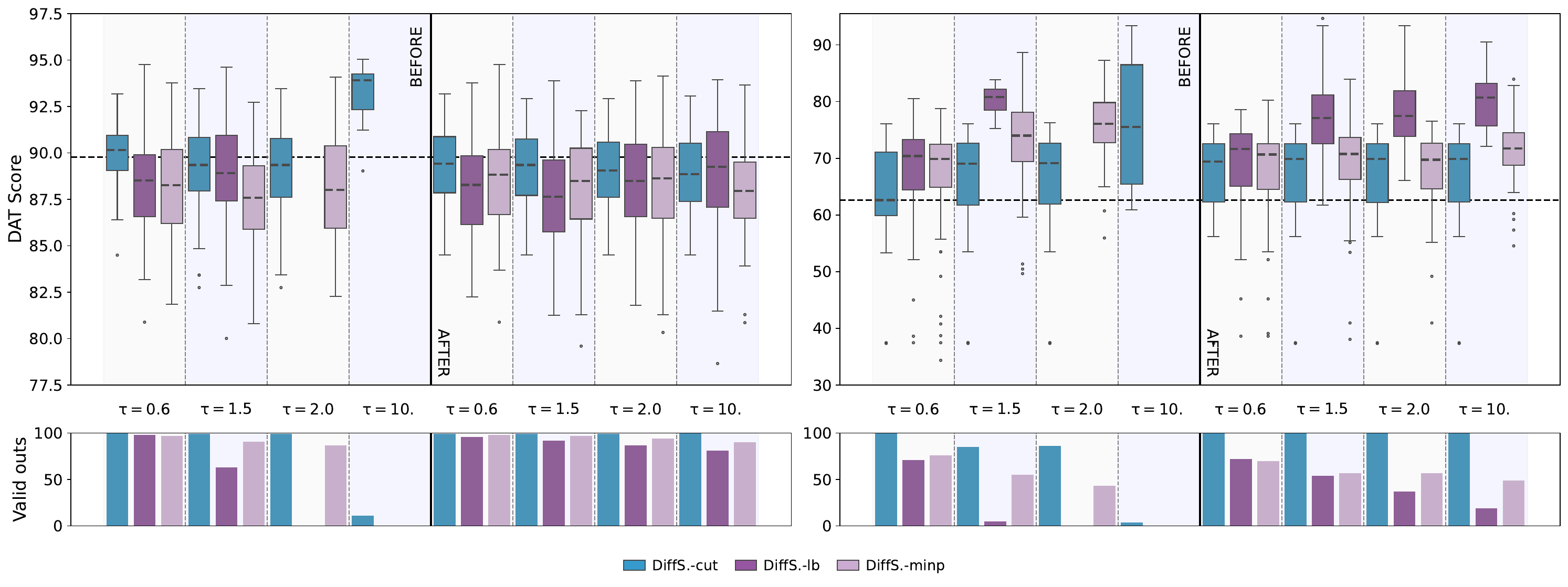}
    \caption{DAT scores and validity percentage of outputs with temperature scaling before and after the truncation for the instructed (left) and pre-trained (right) models. The dashed line represents the score of the greedy strategy.}
    \label{fig:dat_temp}
\end{figure*}

Applying the temperature before does not seem to give benefits for the divergence association task as well. As shown by Figure \ref{fig:dat_temp}, for both the instructed and pre-trained models, the DAT scores are very similar regardless of the temperature position, but almost no valid solutions are generated when a temperature of 10.0 is applied before truncating (and the same happens for a temperature of 2.0 in the case of \textit{DiffSampling-lb}).

\begin{table*}[ht!]
\centering
\resizebox{0.75\textwidth}{!}{%
\begin{tabular}{|L{2.0cm}||C{1.2cm}|C{1.2cm}C{1.2cm}||C{1.2cm}|C{1.2cm}C{1.2cm}|} 
\hline
Model: & \multicolumn{3}{c||}{BEFORE} & \multicolumn{3}{c|}{AFTER} \\
\hline %  $\uparrow$
Method & Quality & \multicolumn{2}{c||}{Per-Input Diversity} & Quality & \multicolumn{2}{c|}{Per-Input Diversity} \\
\hline
\textcolor{white}{placeholder} & COH & EAD & SBERT & COH & EAD & SBERT \\
\hline
\multicolumn{7}{|l|}{\textit{Temperature = 0.6}} \\
\hline
DiffS.-cut  & $0.43_{\pm .01}$ & $0.62_{\pm .00}$ & $0.18_{\pm .00}$ & $0.43_{\pm .01}$ & $0.63_{\pm .00}$ & $0.19_{\pm .00}$ \\
DiffS.-lb   & $0.44_{\pm .01}$ & $0.67_{\pm .00}$ & $0.22_{\pm .00}$ & $0.43_{\pm .01}$ & $0.68_{\pm .00}$ & $0.23_{\pm .00}$ \\
DiffS.-minp & $0.43_{\pm .01}$ & $0.67_{\pm .00}$ & $0.21_{\pm .00}$ & $0.43_{\pm .01}$ & $0.68_{\pm .00}$ & $0.22_{\pm .00}$ \\
\hline
\multicolumn{7}{|l|}{\textit{Temperature = 1.5}} \\
\hline
DiffS.-cut  & $0.44_{\pm .01}$ & $0.64_{\pm .00}$ & $0.20_{\pm .00}$ & $0.43_{\pm .01}$ & $0.63_{\pm .00}$ & $0.19_{\pm .00}$ \\
DiffS.-lb   & $0.17_{\pm .00}$ & $1.01_{\pm .00}$ & $0.15_{\pm .00}$ & $0.40_{\pm .01}$ & $0.89_{\pm .00}$ & $0.33_{\pm .00}$ \\
DiffS.-minp & $0.42_{\pm .01}$ & $0.79_{\pm .00}$ & $0.27_{\pm .00}$ & $0.43_{\pm .01}$ & $0.73_{\pm .00}$ & $0.24_{\pm .00}$ \\
\hline
\multicolumn{7}{|l|}{\textit{Temperature = 2.0}} \\
\hline
DiffS.-cut  & $0.44_{\pm .01}$ & $0.65_{\pm .00}$ & $0.21_{\pm .00}$ & $0.43_{\pm .01}$ & $0.64_{\pm .00}$ & $0.19_{\pm .00}$ \\
DiffS.-lb   & $0.11_{\pm .00}$ & $1.02_{\pm .00}$ & $0.12_{\pm .00}$ & $0.38_{\pm .01}$ & $0.93_{\pm .00}$ & $0.33_{\pm .00}$ \\
DiffS.-minp & $0.39_{\pm .01}$ & $0.92_{\pm .00}$ & $0.34_{\pm .00}$ & $0.43_{\pm .01}$ & $0.74_{\pm .00}$ & $0.24_{\pm .00}$ \\
\hline
\multicolumn{7}{|l|}{\textit{Temperature = 10.0}} \\
\hline
DiffS.-cut  & $0.23_{\pm .01}$ & $0.94_{\pm .00}$ & $0.24_{\pm .00}$ & $0.43_{\pm .01}$ & $0.64_{\pm .00}$ & $0.20_{\pm .00}$ \\
DiffS.-lb   & $0.10_{\pm .00}$ & $1.02_{\pm .00}$ & $0.14_{\pm .00}$ & $0.32_{\pm .01}$ & $0.98_{\pm .00}$ & $0.29_{\pm .00}$ \\
DiffS.-minp & $0.10_{\pm .00}$ & $1.02_{\pm .00}$ & $0.14_{\pm .00}$ & $0.42_{\pm .01}$ & $0.77_{\pm .00}$ & $0.25_{\pm .00}$ \\
\hline
\end{tabular}
}
\caption{Quality and diversity of results for story generation with the instructed model over 3 seeds. 
The mean and standard error of the final score for each run are reported for cross-input diversity, whereas the mean and $95\%$ confidence interval for the full set of answers are reported for the other metrics.
\label{wp_instruct_temp}}
\end{table*}

\begin{table*}[ht!]
\centering
\resizebox{0.75\textwidth}{!}{%
\begin{tabular}{|L{2.0cm}||C{1.2cm}|C{1.2cm}C{1.2cm}||C{1.2cm}|C{1.2cm}C{1.2cm}|} 
\hline
Model: & \multicolumn{3}{c||}{BEFORE} & \multicolumn{3}{c|}{AFTER} \\
\hline %  $\uparrow$
Method & Quality & \multicolumn{2}{c||}{Per-Input Diversity} & Quality & \multicolumn{2}{c|}{Per-Input Diversity} \\
\hline
\textcolor{white}{placeholder} & COH & EAD & SBERT & COH & EAD & SBERT \\
\hline
\multicolumn{7}{|l|}{\textit{Temperature = 0.6}} \\
\hline
DiffS.-cut  & $0.60_{\pm .01}$ & $0.14_{\pm .00}$ & $0.29_{\pm .01}$ & $0.60_{\pm .01}$ & $0.16_{\pm .00}$ & $0.31_{\pm .01}$ \\
DiffS.-lb   & $0.55_{\pm .01}$ & $0.22_{\pm .00}$ & $0.43_{\pm .00}$ & $0.53_{\pm .01}$ & $0.27_{\pm .00}$ & $0.45_{\pm .00}$ \\
DiffS.-minp & $0.57_{\pm .01}$ & $0.20_{\pm .00}$ & $0.41_{\pm .00}$ & $0.54_{\pm .01}$ & $0.24_{\pm .00}$ & $0.43_{\pm .00}$ \\
\hline
\multicolumn{7}{|l|}{\textit{Temperature = 1.5}} \\
\hline
DiffS.-cut  & $0.59_{\pm .01}$ & $0.17_{\pm .00}$ & $0.33_{\pm .00}$ & $0.60_{\pm .01}$ & $0.16_{\pm .00}$ & $0.31_{\pm .01}$ \\
DiffS.-lb   & $0.14_{\pm .00}$ & $1.00_{\pm .00}$ & $0.32_{\pm .00}$ & $0.31_{\pm .01}$ & $0.92_{\pm .00}$ & $0.63_{\pm .00}$ \\
DiffS.-minp & $0.42_{\pm .01}$ & $0.75_{\pm .00}$ & $0.55_{\pm .00}$ & $0.48_{\pm .01}$ & $0.46_{\pm .00}$ & $0.49_{\pm .00}$ \\
\hline
\multicolumn{7}{|l|}{\textit{Temperature = 2.0}} \\
\hline
DiffS.-cut  & $0.59_{\pm .01}$ & $0.18_{\pm .00}$ & $0.34_{\pm .00}$ & $0.60_{\pm .01}$ & $0.16_{\pm .00}$ & $0.31_{\pm .00}$ \\
DiffS.-lb   & $0.11_{\pm .00}$ & $1.00_{\pm .00}$ & $0.32_{\pm .00}$ & $0.26_{\pm .01}$ & $0.97_{\pm .00}$ & $0.52_{\pm .00}$ \\
DiffS.-minp & $0.34_{\pm .01}$ & $0.86_{\pm .00}$ & $0.63_{\pm .00}$ & $0.47_{\pm .01}$ & $0.53_{\pm .00}$ & $0.50_{\pm .00}$ \\
\hline
\multicolumn{7}{|l|}{\textit{Temperature = 10.0}} \\
\hline
DiffS.-cut  & $0.19_{\pm .01}$ & $0.69_{\pm .01}$ & $0.43_{\pm .00}$ & $0.60_{\pm .01}$ & $0.17_{\pm .00}$ & $0.31_{\pm .01}$ \\
DiffS.-lb   & $0.10_{\pm .00}$ & $1.00_{\pm .00}$ & $0.36_{\pm .00}$ & $0.19_{\pm .00}$ & $1.00_{\pm .00}$ & $0.37_{\pm .00}$ \\
DiffS.-minp & $0.10_{\pm .00}$ & $1.00_{\pm .00}$ & $0.36_{\pm .00}$ & $0.45_{\pm .01}$ & $0.68_{\pm .00}$ & $0.52_{\pm .00}$ \\
\hline
\end{tabular}
}
\caption{Quality and diversity of results for story generation with the pre-trained model over 3 seeds. 
The mean and standard error of the final score for each run are reported for cross-input diversity, whereas the mean and $95\%$ confidence interval for the full set of answers are reported for the other metrics.
\label{wp_pretrained_temp}}
\end{table*}

Finally, as above, for both the instructed (Table \ref{wp_instruct_temp}) and pre-trained (Table \ref{wp_pretrained_temp}) models, the quality of generated stories is not preserved with the temperature before, while it largely is with the temperature after. While EAD does not increase in the same way, it is important to notice how SBERT is instead higher with the temperature after, highlighting how randomness does not correlate with semantic diversity.

\subsection{Quality-Diversity Trade-Off Visualization}

To simplify the comprehension of how temperature impacts the quality-diversity trade-off and how the temperature position changes the performance of different sampling methods, we present here the Pareto fronts for top-$p$, min-$p$, and our three \textit{DiffSampling} methods (with temperature applied either before or after truncation) at five different temperatures $\tau$. We omit the DAT plots for conciseness, as they convey the same message as Figures \ref{fig:dat_full} and \ref{fig:dat_temp}.

\begin{figure*}[ht]
    \centering
    \includegraphics[width=1.\textwidth]{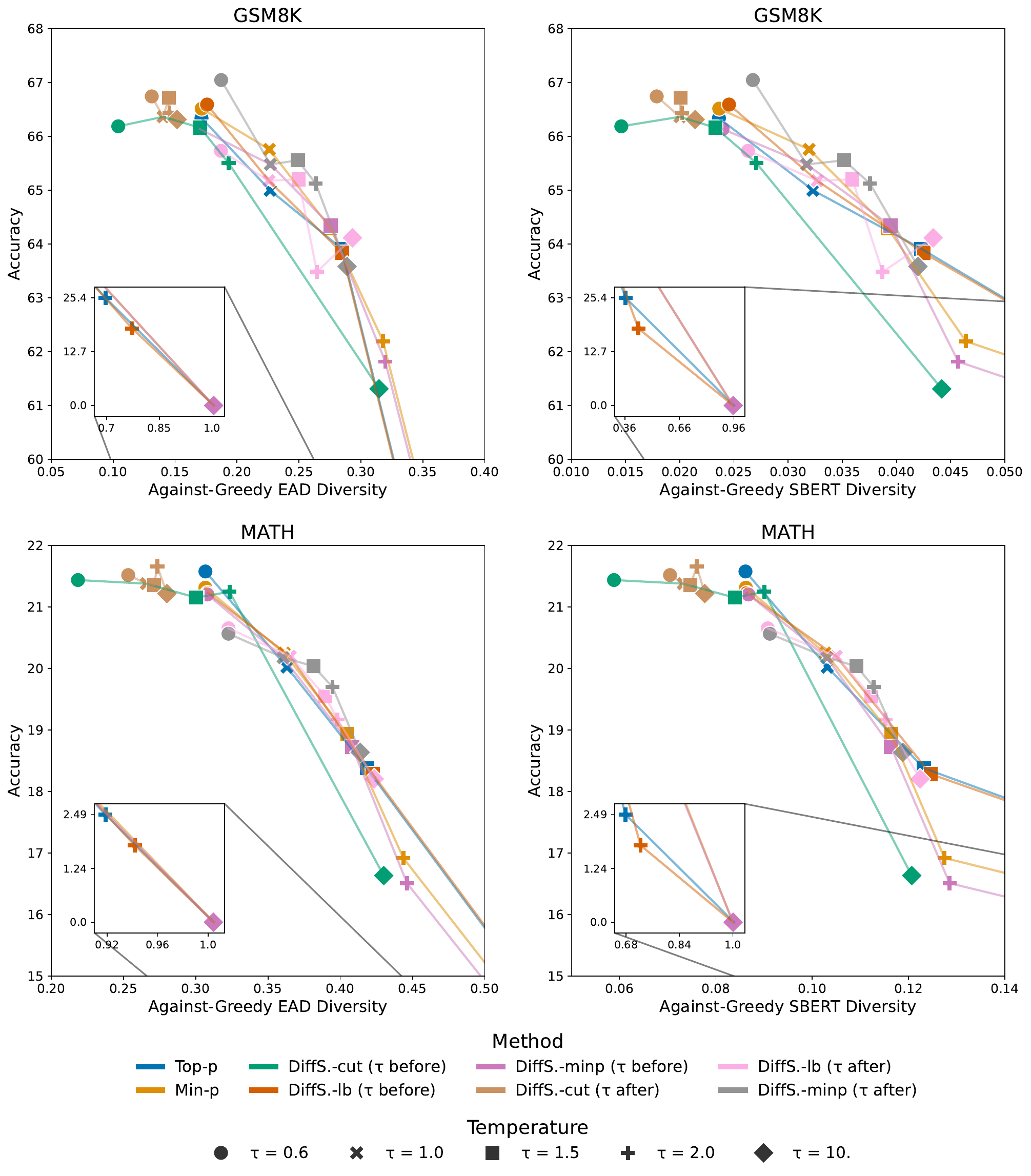}
    \caption{Quality-diversity trade-offs for GSM8K and MATH datasets at five different temperatures for two baselines (top-$p$ and min-$p$) and our three \textit{DiffSampling} methods, with temperature applied either before or after the truncation. Each point represents the mean score across the entire dataset and three different seeds.}
    \label{fig:pareto_math}
\end{figure*}

Figure \ref{fig:pareto_math} reports the plots for the math problem-solving datasets by considering both against-greedy EAD diversity and against-greedy SBERT diversity with respect to accuracy. As already discussed, applying the temperature after preserves higher quality while reducing the increments in diversity. Vice versa, applying the temperature before has dramatic effects on accuracy, and only \textit{DiffSampling-cut} achieves competitive results at very high $\tau$. In general, \textit{DiffSampling-minp} with temperature applied after seems to provide the best balance between quality and diversity at different $\tau$.

\begin{figure*}[ht]
    \centering
    \includegraphics[width=1.\textwidth]{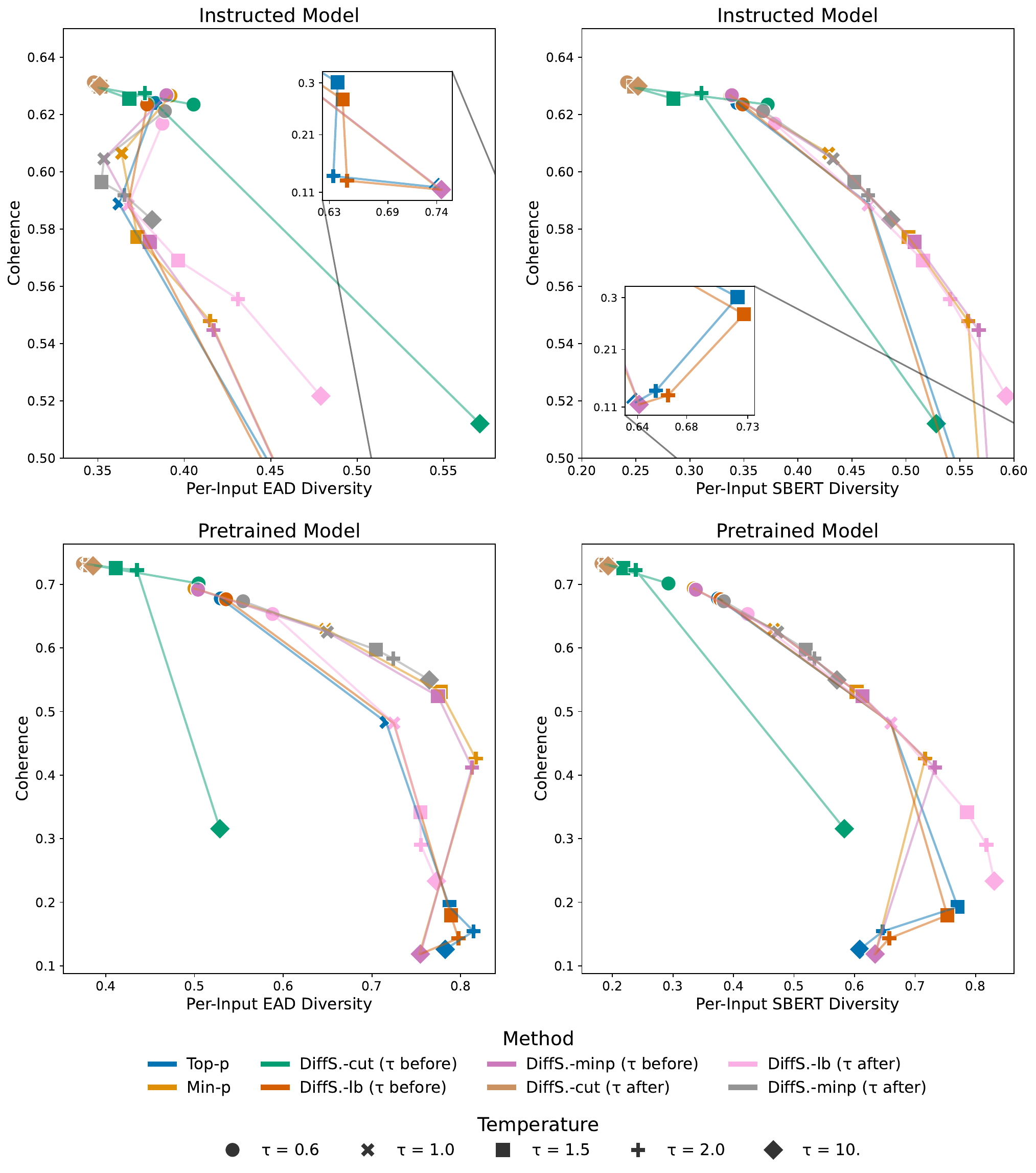}
    \caption{Quality-diversity trade-offs for the XSum dataset with the instructed and pre-trained models at five different temperatures for two baselines (top-$p$ and min-$p$) and our three \textit{DiffSampling} methods, with temperature applied either before or after the truncation. Each point represents the mean score across all the outputs generated from the same 1000 randomly sampled inputs.}
    \label{fig:pareto_xsum}
\end{figure*}

The results for XSum are similar. As shown in Figure \ref{fig:pareto_xsum}, the coherence of generated outputs degrades fast with $\tau > 1$ if temperature is applied before truncation. While min-$p$ and \textit{DiffSampling-minp} have interesting performances at $\tau \leq 2.0$, for very high temperatures only \textit{DiffSampling-cut} achieves comparable coherence. Overall, all sampling methods apart from \textit{DiffSampling-cut} have very similar quality-diversity trade-offs, but \textit{DiffSampling-minp} with temperature applied after seems to provide the best results considering the full range of temperature values.

\begin{figure*}[ht]
    \centering
    \includegraphics[width=1.\textwidth]{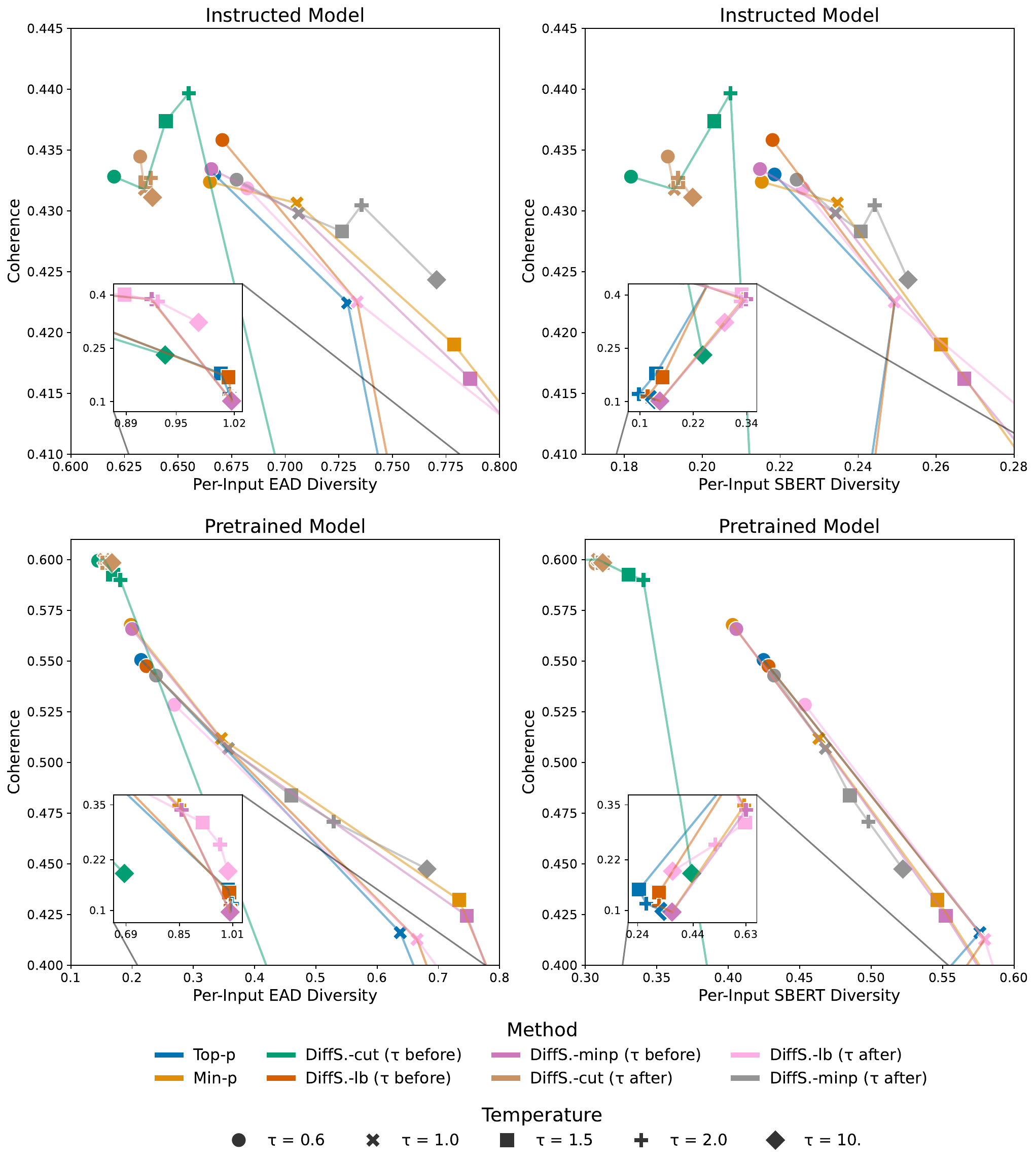}
    \caption{Quality-diversity trade-offs for the WritingPrompts dataset with the instructed and pre-trained models at five different temperatures for two baselines (top-$p$ and min-$p$) and our three \textit{DiffSampling} methods, with temperature applied either before or after the truncation. Each point represents the mean score across all the outputs generated from the same 500 randomly sampled inputs.}
    \label{fig:pareto_wp}
\end{figure*}

Finally, Figure \ref{fig:pareto_wp} shows the quality-diversity trade-offs in the case of the WritingPrompts dataset. Once more, \textit{DiffSampling-cut}, especially with temperature applied after, better preserves quality at different $\tau$, while \textit{DiffSampling-minp} with temperature applied after better balances quality and diversity for higher temperatures.

\section{Additional Experiments} \label{additional_experiments}

\begin{figure*}[ht]
    \centering
    \includegraphics[width=1.\textwidth]{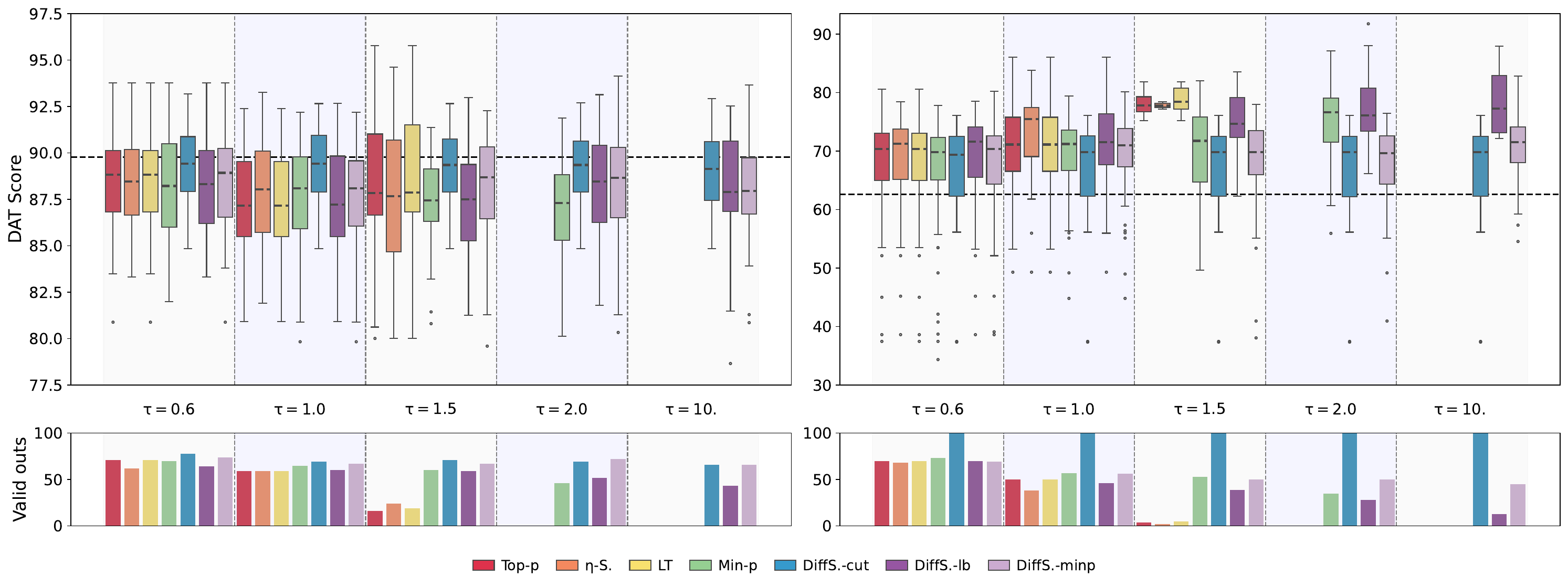}
    \caption{DAT scores over all 10 nouns for our methods and the baselines for the instructed (left) and pre-trained (right) model with different temperature values, together with the number of valid outputs produced by each sampling strategy. The dashed line represents the score of the greedy strategy.}
    \label{fig:dat_full_10}
\end{figure*}

\subsection{Divergent Association Task over All Nouns} \label{dat_ten}
While the classic DAT implementation only considers outputs containing at least 7 distinct nouns (out of 10), and computes the score based on the first 7 nouns, it is informative to examine performance when all 10 nouns are distinct and the score is computed over the full set. Indeed, the standard implementation may be overly conservative, allowing methods that fail to meet the full requirements to still receive a final score.
Figure~\ref{fig:dat_full_10} presents these scores for all baselines and our methods across the five temperatures considered. Although the DAT scores are largely consistent with those in Figure~\ref{fig:dat_full}, the number of valid outputs changes. Notably, the difference between \textit{DiffSampling} methods and the baselines becomes even more pronounced, particularly at temperatures $1.0$ and $1.5$. This further confirms that our strategy is more robust and better adheres to the task requirements.

\subsection{Ablation Study on the Lower Bound} \label{abl_lb}

\begin{table*}[ht!]
\centering
\resizebox{\textwidth}{!}{%
\begin{tabular}{|L{2.5cm}||C{1.5cm}|C{1.3cm}C{1.3cm}|C{1.3cm}C{1.3cm}||C{1.5cm}|C{1.3cm}C{1.3cm}|C{1.3cm}C{1.3cm}|} 
\hline
Dataset: & \multicolumn{5}{c||}{GSM8K} & \multicolumn{5}{c|}{MATH} \\
\hline %  $\uparrow$
Method & Accuracy & \multicolumn{2}{c|}{Cross-Input} & \multicolumn{2}{c||}{Against-Greedy} & Accuracy & \multicolumn{2}{c|}{Cross-Input} & \multicolumn{2}{c|}{Against-Greedy} \\
\hline
DiffSampling-lb & \textcolor{white}{placeholder} & EAD & SBERT & EAD & SBERT & \textcolor{white}{placeholder} & EAD & SBERT & EAD & SBERT \\
\hline
$p_{lb} = 0.0$  & $66.36_{\pm .23}$ & $2.04_{\pm .00}$ & $0.64_{\pm .00}$ & $0.14_{\pm .00}$ & $0.02_{\pm .00}$ & $21.38_{\pm .20}$ & $5.71_{\pm .01}$ & $0.80_{\pm .00}$ & $0.27_{\pm .00}$ & $0.07_{\pm .00}$ \\
$p_{lb} = 0.1$  & $66.46_{\pm .34}$ & $2.05_{\pm .00}$ & $0.64_{\pm .00}$ & $0.14_{\pm .00}$ & $0.02_{\pm .00}$ & $20.95_{\pm .20}$ & $5.72_{\pm .01}$ & $0.80_{\pm .00}$ & $0.27_{\pm .00}$ & $0.07_{\pm .00}$ \\
$p_{lb} = 0.2$  & $66.46_{\pm .34}$ & $2.05_{\pm .00}$ & $0.64_{\pm .00}$ & $0.14_{\pm .00}$ & $0.02_{\pm .00}$ & $20.95_{\pm .20}$ & $5.72_{\pm .01}$ & $0.80_{\pm .00}$ & $0.27_{\pm .00}$ & $0.07_{\pm .00}$ \\
$p_{lb} = 0.3$  & $66.79_{\pm .40}$ & $2.04_{\pm .00}$ & $0.64_{\pm .00}$ & $0.14_{\pm .00}$ & $0.02_{\pm .00}$ & $21.30_{\pm .08}$ & $5.73_{\pm .00}$ & $0.80_{\pm .00}$ & $0.27_{\pm .00}$ & $0.07_{\pm .00}$ \\
$p_{lb} = 0.4$  & $66.57_{\pm .39}$ & $2.06_{\pm .00}$ & $0.64_{\pm .00}$ & $0.14_{\pm .00}$ & $0.02_{\pm .00}$ & $21.08_{\pm .11}$ & $5.73_{\pm .02}$ & $0.80_{\pm .00}$ & $0.27_{\pm .00}$ & $0.07_{\pm .00}$ \\
$p_{lb} = 0.5$  & $67.17_{\pm .41}$ & $2.04_{\pm .00}$ & $0.64_{\pm .00}$ & $0.15_{\pm .00}$ & $0.02_{\pm .00}$ & $21.18_{\pm .41}$ & $5.74_{\pm .01}$ & $0.80_{\pm .00}$ & $0.28_{\pm .00}$ & $0.08_{\pm .00}$ \\
$p_{lb} = 0.6$  & $66.67_{\pm .37}$ & $2.05_{\pm .00}$ & $0.64_{\pm .00}$ & $0.16_{\pm .00}$ & $0.02_{\pm .00}$ & $21.18_{\pm .22}$ & $5.79_{\pm .02}$ & $0.80_{\pm .00}$ & $0.30_{\pm .00}$ & $0.09_{\pm .00}$ \\
$p_{lb} = 0.7$  & $65.58_{\pm .19}$ & $2.06_{\pm .00}$ & $0.64_{\pm .00}$ & $0.18_{\pm .00}$ & $0.03_{\pm .00}$ & $21.14_{\pm .15}$ & $5.86_{\pm .01}$ & $0.80_{\pm .00}$ & $0.32_{\pm .00}$ & $0.09_{\pm .00}$ \\
$p_{lb} = 0.8$  & $66.92_{\pm .08}$ & $2.07_{\pm .00}$ & $0.64_{\pm .00}$ & $0.20_{\pm .00}$ & $0.03_{\pm .00}$ & $20.78_{\pm .14}$ & $6.00_{\pm .01}$ & $0.80_{\pm .00}$ & $0.35_{\pm .00}$ & $0.10_{\pm .00}$ \\
$p_{lb} = 0.9$  & $65.18_{\pm .65}$ & $2.09_{\pm .01}$ & $0.64_{\pm .00}$ & $0.23_{\pm .00}$ & $0.03_{\pm .00}$ & $20.20_{\pm .08}$ & $6.11_{\pm .02}$ & $0.80_{\pm .00}$ & $0.37_{\pm .00}$ & $0.10_{\pm .00}$ \\
$p_{lb} = 0.95$ & $64.82_{\pm .31}$ & $2.09_{\pm .01}$ & $0.64_{\pm .00}$ & $0.24_{\pm .00}$ & $0.03_{\pm .00}$ & $20.24_{\pm .19}$ & $6.21_{\pm .01}$ & $0.80_{\pm .00}$ & $0.37_{\pm .00}$ & $0.11_{\pm .00}$ \\
$p_{lb} = 1.0$  & $64.87_{\pm .20}$ & $2.12_{\pm .00}$ & $0.64_{\pm .00}$ & $0.25_{\pm .00}$ & $0.04_{\pm .00}$ & $19.46_{\pm .19}$ & $6.36_{\pm .01}$ & $0.80_{\pm .00}$ & $0.39_{\pm .00}$ & $0.11_{\pm .00}$ \\
\hline
\end{tabular}
}
\caption{
Ablation study on the $p_{lb}$ value over 3 seeds for the GSM8K (left) and MATH (right) test sets.
The mean and standard error of the final score for each run are reported for accuracy and cross-input diversity, whereas the mean and $95\%$ confidence interval for the full set of answers are reported for against-greedy diversity.
\label{math_ablation_lb}}
\end{table*}

%%% MATH

We also conducted experiments on the four aforementioned case studies, varying the lower bound of the critical mass. 
Table \ref{math_ablation_lb} reports the results for the math problem-solving tasks, considering the GSM8K (left side) and MATH (right side) test sets. As expected, the against-greedy diversity scores and cross-input EAD increase together with $p_{lb}$; instead, while accuracy tends to decrease with higher lower bounds, the differences are not significant, and even a quite high value (e.g., $0.8$) achieves competitive results. %Notably, \textit{DiffSampling-lb} with $p_{lb}=0.9$ performs better than or equal to top-$p$ sampling (with $p=0.9$) under all quality and diversity metrics, highlighting how our method can improve upon existing solutions.

\begin{table*}[ht!]
\centering
\resizebox{\textwidth}{!}{%
\begin{tabular}{|L{2.5cm}||C{1.2cm}C{1.2cm}C{1.2cm}|C{1.2cm}C{1.2cm}|C{1.2cm}C{1.2cm}||C{1.2cm}C{1.2cm}C{1.2cm}|C{1.2cm}C{1.2cm}|C{1.2cm}C{1.2cm}|} 
\hline
Model: & \multicolumn{7}{c||}{RLHF-instructed} & \multicolumn{7}{c|}{Pre-trained} \\
\hline %  $\uparrow$
Method & \multicolumn{3}{c|}{Quality} & \multicolumn{2}{c|}{Per-Input} & \multicolumn{2}{c||}{Against-Greedy} & \multicolumn{3}{c|}{Quality} & \multicolumn{2}{c|}{Per-Input} & \multicolumn{2}{c|}{Against-Greedy} \\
\hline
DiffSampling-lb & R-$1$ & SIM & COH & EAD & SBERT & EAD & SBERT & R-$1$ & SIM & COH & EAD & SBERT & EAD & SBERT \\
\hline
$p_{lb} = 0.0$  & $0.23_{\pm .00}$ & $0.48_{\pm .01}$ & $0.63_{\pm .01}$ & $0.35_{\pm .01}$ & $0.25_{\pm .01}$ & $0.45_{\pm .01}$ & $0.23_{\pm .01}$ & $0.21_{\pm .00}$ & $0.49_{\pm .00}$ & $0.73_{\pm .00}$ & $0.38_{\pm .01}$ & $0.19_{\pm .00}$ & $0.32_{\pm .01}$ & $0.17_{\pm .01}$ \\
$p_{lb} = 0.1$  & $0.23_{\pm .00}$ & $0.48_{\pm .01}$ & $0.63_{\pm .01}$ & $0.35_{\pm .01}$ & $0.25_{\pm .01}$ & $0.45_{\pm .01}$ & $0.23_{\pm .01}$ & $0.21_{\pm .00}$ & $0.49_{\pm .00}$ & $0.73_{\pm .01}$ & $0.39_{\pm .01}$ & $0.20_{\pm .01}$ & $0.34_{\pm .01}$ & $0.19_{\pm .01}$ \\
$p_{lb} = 0.2$  & $0.23_{\pm .00}$ & $0.48_{\pm .01}$ & $0.63_{\pm .01}$ & $0.35_{\pm .01}$ & $0.25_{\pm .01}$ & $0.45_{\pm .01}$ & $0.23_{\pm .01}$ & $0.21_{\pm .00}$ & $0.48_{\pm .01}$ & $0.71_{\pm .01}$ & $0.43_{\pm .01}$ & $0.27_{\pm .01}$ & $0.40_{\pm .01}$ & $0.24_{\pm .01}$ \\
$p_{lb} = 0.3$  & $0.23_{\pm .00}$ & $0.48_{\pm .01}$ & $0.63_{\pm .01}$ & $0.37_{\pm .01}$ & $0.26_{\pm .01}$ & $0.47_{\pm .01}$ & $0.24_{\pm .01}$ & $0.21_{\pm .00}$ & $0.46_{\pm .01}$ & $0.68_{\pm .01}$ & $0.48_{\pm .01}$ & $0.35_{\pm .01}$ & $0.48_{\pm .01}$ & $0.30_{\pm .01}$ \\
$p_{lb} = 0.4$  & $0.23_{\pm .00}$ & $0.48_{\pm .01}$ & $0.63_{\pm .01}$ & $0.39_{\pm .01}$ & $0.29_{\pm .01}$ & $0.50_{\pm .01}$ & $0.27_{\pm .01}$ & $0.20_{\pm .00}$ & $0.45_{\pm .01}$ & $0.66_{\pm .01}$ & $0.54_{\pm .01}$ & $0.40_{\pm .01}$ & $0.54_{\pm .01}$ & $0.34_{\pm .01}$ \\
$p_{lb} = 0.5$  & $0.22_{\pm .00}$ & $0.48_{\pm .01}$ & $0.62_{\pm .01}$ & $0.39_{\pm .01}$ & $0.33_{\pm .01}$ & $0.54_{\pm .01}$ & $0.30_{\pm .01}$ & $0.20_{\pm .00}$ & $0.44_{\pm .01}$ & $0.63_{\pm .01}$ & $0.58_{\pm .01}$ & $0.46_{\pm .01}$ & $0.59_{\pm .01}$ & $0.38_{\pm .01}$ \\
$p_{lb} = 0.6$  & $0.22_{\pm .00}$ & $0.47_{\pm .01}$ & $0.62_{\pm .01}$ & $0.40_{\pm .01}$ & $0.37_{\pm .01}$ & $0.58_{\pm .01}$ & $0.33_{\pm .01}$ & $0.19_{\pm .00}$ & $0.42_{\pm .01}$ & $0.60_{\pm .01}$ & $0.63_{\pm .01}$ & $0.51_{\pm .01}$ & $0.64_{\pm .01}$ & $0.42_{\pm .01}$ \\
$p_{lb} = 0.7$  & $0.22_{\pm .00}$ & $0.46_{\pm .01}$ & $0.61_{\pm .01}$ & $0.37_{\pm .01}$ & $0.41_{\pm .01}$ & $0.62_{\pm .01}$ & $0.36_{\pm .01}$ & $0.19_{\pm .00}$ & $0.40_{\pm .01}$ & $0.57_{\pm .01}$ & $0.67_{\pm .01}$ & $0.56_{\pm .01}$ & $0.69_{\pm .01}$ & $0.46_{\pm .01}$ \\
$p_{lb} = 0.8$  & $0.21_{\pm .00}$ & $0.45_{\pm .01}$ & $0.60_{\pm .01}$ & $0.36_{\pm .01}$ & $0.45_{\pm .01}$ & $0.64_{\pm .01}$ & $0.39_{\pm .01}$ & $0.18_{\pm .00}$ & $0.38_{\pm .01}$ & $0.53_{\pm .01}$ & $0.70_{\pm .01}$ & $0.60_{\pm .01}$ & $0.73_{\pm .01}$ & $0.50_{\pm .01}$ \\
$p_{lb} = 0.9$  & $0.21_{\pm .00}$ & $0.45_{\pm .01}$ & $0.59_{\pm .01}$ & $0.37_{\pm .01}$ & $0.47_{\pm .01}$ & $0.67_{\pm .01}$ & $0.41_{\pm .01}$ & $0.16_{\pm .00}$ & $0.34_{\pm .01}$ & $0.48_{\pm .01}$ & $0.72_{\pm .01}$ & $0.66_{\pm .01}$ & $0.77_{\pm .01}$ & $0.55_{\pm .01}$ \\
$p_{lb} = 0.95$ & $0.21_{\pm .00}$ & $0.45_{\pm .01}$ & $0.58_{\pm .01}$ & $0.38_{\pm .01}$ & $0.48_{\pm .01}$ & $0.68_{\pm .01}$ & $0.42_{\pm .01}$ & $0.15_{\pm .00}$ & $0.32_{\pm .01}$ & $0.46_{\pm .01}$ & $0.73_{\pm .01}$ & $0.69_{\pm .00}$ & $0.80_{\pm .01}$ & $0.58_{\pm .01}$ \\
$p_{lb} = 1.0$  & $0.20_{\pm .00}$ & $0.44_{\pm .01}$ & $0.58_{\pm .01}$ & $0.40_{\pm .01}$ & $0.50_{\pm .01}$ & $0.70_{\pm .01}$ & $0.43_{\pm .01}$ & $0.14_{\pm .00}$ & $0.29_{\pm .01}$ & $0.42_{\pm .01}$ & $0.74_{\pm .01}$ & $0.74_{\pm .00}$ & $0.83_{\pm .01}$ & $0.62_{\pm .01}$ \\
 \hline
\end{tabular}
}
\caption{Ablation study on the $p_{lb}$ value over the 5 outputs sampled for each of the 1000 prompts from the XSum dataset for the instructed model (left) and the pre-trained model (right). The mean and $95\%$ confidence interval are reported for all the metrics.
\label{xsum_ablation_lb}}
\end{table*}

%%% XSUM

Table \ref{xsum_ablation_lb} reports the results for the extreme summarization task for both instructed (left side) and pre-trained (right side) models. Again, diversity scores are directly correlated with the lower bound. Instead, qualitative metrics do not vary much for the instructed model, while constantly decreasing for the pre-trained model with increasing $p_{lb}$. In this situation, the choice of $p_{lb}$ is relevant and requires us to decide whether to trade off quality or diversity.

\begin{table*}[ht!]
\centering
\resizebox{0.75\textwidth}{!}{%
\begin{tabular}{|L{2.5cm}||C{1.2cm}|C{1.2cm}C{1.2cm}||C{1.2cm}|C{1.2cm}C{1.2cm}|} 
\hline
Model: & \multicolumn{3}{c||}{RLHF-instructed} & \multicolumn{3}{c|}{Pre-trained} \\
\hline %  $\uparrow$
Method & Quality & \multicolumn{2}{c||}{Per-Input Diversity} & Quality & \multicolumn{2}{c|}{Per-Input Diversity} \\
\hline
DiffSampling-lb & COH & EAD & SBERT & COH & EAD & SBERT \\
\hline
$p_{lb} = 0.0$  & $0.43_{\pm .01}$ & $0.63_{\pm .00}$ & $0.19_{\pm .00}$ & $0.60_{\pm .01}$ & $0.15_{\pm .00}$ & $0.31_{\pm .01}$ \\
$p_{lb} = 0.1$  & $0.43_{\pm .01}$ & $0.63_{\pm .00}$ & $0.19_{\pm .00}$ & $0.60_{\pm .01}$ & $0.16_{\pm .00}$ & $0.32_{\pm .01}$ \\
$p_{lb} = 0.2$  & $0.43_{\pm .01}$ & $0.64_{\pm .00}$ & $0.20_{\pm .00}$ & $0.58_{\pm .01}$ & $0.16_{\pm .00}$ & $0.37_{\pm .01}$ \\
$p_{lb} = 0.3$  & $0.43_{\pm .01}$ & $0.65_{\pm .00}$ & $0.20_{\pm .00}$ & $0.56_{\pm .01}$ & $0.18_{\pm .00}$ & $0.41_{\pm .00}$ \\
$p_{lb} = 0.4$  & $0.43_{\pm .01}$ & $0.65_{\pm .00}$ & $0.21_{\pm .00}$ & $0.54_{\pm .01}$ & $0.19_{\pm .00}$ & $0.43_{\pm .00}$ \\
$p_{lb} = 0.5$  & $0.43_{\pm .01}$ & $0.66_{\pm .00}$ & $0.22_{\pm .00}$ & $0.52_{\pm .01}$ & $0.23_{\pm .00}$ & $0.46_{\pm .00}$ \\
$p_{lb} = 0.6$  & $0.43_{\pm .01}$ & $0.67_{\pm .00}$ & $0.22_{\pm .00}$ & $0.49_{\pm .01}$ & $0.28_{\pm .00}$ & $0.49_{\pm .00}$ \\
$p_{lb} = 0.7$  & $0.43_{\pm .01}$ & $0.69_{\pm .00}$ & $0.23_{\pm .00}$ & $0.47_{\pm .01}$ & $0.33_{\pm .00}$ & $0.51_{\pm .00}$ \\
$p_{lb} = 0.8$  & $0.43_{\pm .01}$ & $0.70_{\pm .00}$ & $0.24_{\pm .00}$ & $0.44_{\pm .01}$ & $0.47_{\pm .00}$ & $0.54_{\pm .00}$ \\
$p_{lb} = 0.9$  & $0.42_{\pm .01}$ & $0.73_{\pm .00}$ & $0.25_{\pm .00}$ & $0.41_{\pm .01}$ & $0.67_{\pm .00}$ & $0.58_{\pm .00}$ \\
$p_{lb} = 0.95$ & $0.42_{\pm .01}$ & $0.78_{\pm .00}$ & $0.27_{\pm .00}$ & $0.39_{\pm .01}$ & $0.76_{\pm .00}$ & $0.61_{\pm .00}$ \\
$p_{lb} = 1.0$  & $0.41_{\pm .01}$ & $0.88_{\pm .00}$ & $0.33_{\pm .00}$ & $0.35_{\pm .01}$ & $0.84_{\pm .00}$ & $0.64_{\pm .00}$ \\
 \hline
\end{tabular}
}
\caption{Ablation study on the $p_{lb}$ value over 3 seeds for the WritingPrompts dataset for the instructed model (left) and the pre-trained model (right). The mean and standard error of the final score for each run are reported for cross-input diversity, whereas the mean and $95\%$ confidence interval for the full set of answers are reported for the other metrics.
\label{wp_ablation_lb}}
\end{table*}

%%% WRITINGPROMPTS

Table \ref{wp_ablation_lb} reports the results for the story generation task for both instructed (left side) and pre-trained (right side) models. Coherence decreases at higher $p_{lb}$ values, but this effect is significant only for the pre-trained model. However, both diversity scores are directly correlated with the lower bound, especially at high values.

\begin{figure*}[ht]
    \centering
    \includegraphics[width=1.\textwidth]{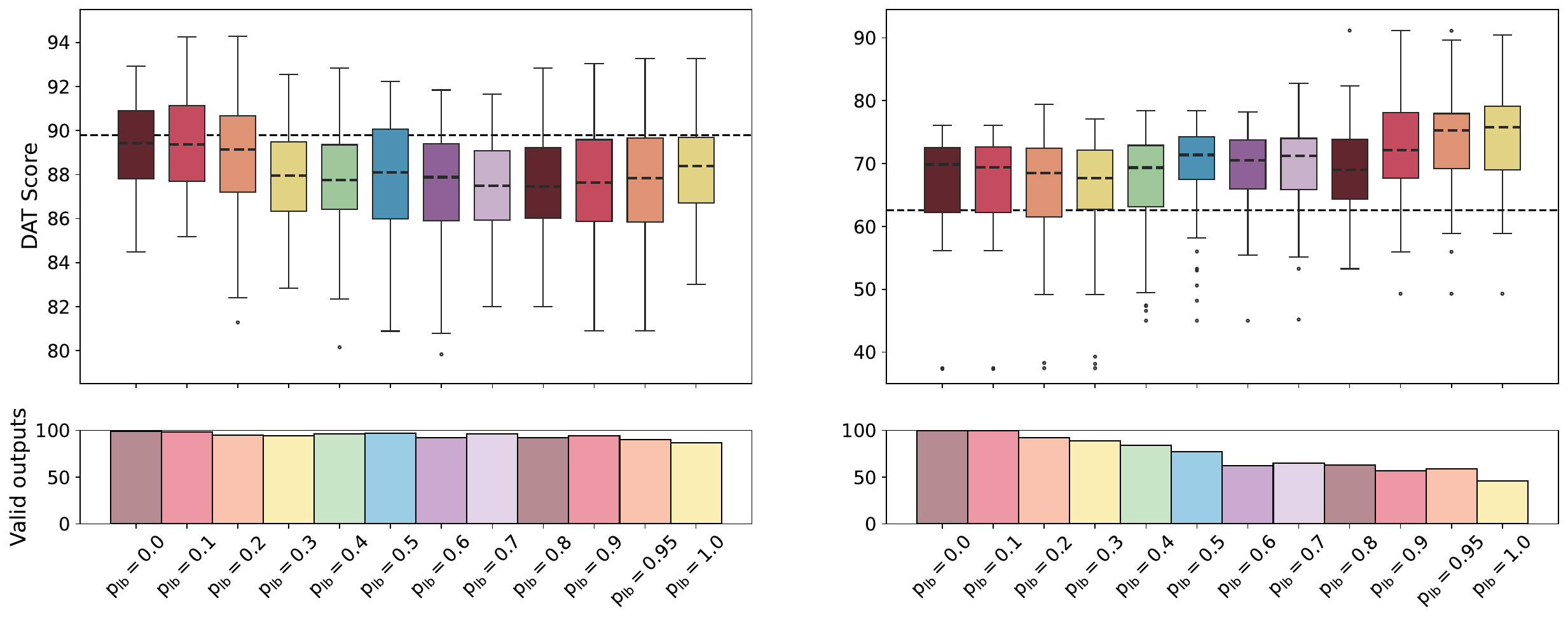}
    \caption{DAT scores and output validity percentage for \textit{DiffSampling-lb} when varying the $p_{lb}$ parameter for the instructed (left) and pre-trained (right) models. The dashed line represents the score of the greedy strategy.}
    \label{fig:dat_ablation_lb}
\end{figure*}

%%% DAT

Finally, Figure \ref{fig:dat_ablation_lb} reports the results for the divergent association task. As we would expect, the DAT score changes almost linearly between that for a lower bound of $0$ (that means \textit{DiffSampling-cut}) and $1$ (that means \textit{standard} sampling). Interestingly, the number of correct answers by the non-instructed model drops constantly, while it remains consistently higher in the case of the instructed model. 

To sum up, when greediness is desirable, a lower value of $p_{lb}$ can lead to high quality and diversity; otherwise, increasing $p_{lb}$ improves diversity, but the cost in terms of validity is not negligible and requires careful consideration. We suggest practitioners select the most appropriate $p_{lb}$ value by running it on a validation set if available, and otherwise lie in the $[0.8, 0.95]$ range, which has shown competitive results on both quality and diversity metrics.

\subsection{Ablation Study on the Dynamic Upper Bound} \label{abl_minp}

Finally, we conducted experiments on the four aforementioned case studies, varying the dynamic upper bound of the truncated tokens $p_{min}$.

\begin{table*}[ht!]
\centering
\resizebox{\textwidth}{!}{%
\begin{tabular}{|L{2.9cm}||C{1.5cm}|C{1.3cm}C{1.3cm}|C{1.3cm}C{1.3cm}||C{1.5cm}|C{1.3cm}C{1.3cm}|C{1.3cm}C{1.3cm}|} 
\hline
Dataset: & \multicolumn{5}{c||}{GSM8K} & \multicolumn{5}{c|}{MATH} \\
\hline %  $\uparrow$
Method & Accuracy & \multicolumn{2}{c|}{Cross-Input} & \multicolumn{2}{c||}{Against-Greedy} & Accuracy & \multicolumn{2}{c|}{Cross-Input} & \multicolumn{2}{c|}{Against-Greedy} \\
\hline
DiffSampling-minp & \textcolor{white}{placeholder} & EAD & SBERT & EAD & SBERT & \textcolor{white}{placeholder} & EAD & SBERT & EAD & SBERT \\
\hline
$p_{min} = 0.0$ & $64.87_{\pm .20}$ & $2.12_{\pm .00}$ & $0.64_{\pm .00}$ & $0.25_{\pm .00}$ & $0.04_{\pm .00}$ & $19.46_{\pm .19}$ & $6.36_{\pm .01}$ & $0.80_{\pm .00}$ & $0.39_{\pm .00}$ & $0.11_{\pm .00}$ \\
$p_{min} = 0.05$& $64.75_{\pm .09}$ & $2.09_{\pm .01}$ & $0.64_{\pm .00}$ & $0.24_{\pm .00}$ & $0.03_{\pm .00}$ & $20.28_{\pm .12}$ & $6.16_{\pm .00}$ & $0.80_{\pm .00}$ & $0.37_{\pm .00}$ & $0.11_{\pm .00}$ \\
$p_{min} = 0.1$ & $65.48_{\pm .60}$ & $2.09_{\pm .01}$ & $0.64_{\pm .00}$ & $0.23_{\pm .00}$ & $0.03_{\pm .00}$ & $20.18_{\pm .08}$ & $6.06_{\pm .00}$ & $0.80_{\pm .00}$ & $0.36_{\pm .00}$ & $0.10_{\pm .00}$ \\
$p_{min} = 0.2$ & $65.48_{\pm .41}$ & $2.07_{\pm .00}$ & $0.64_{\pm .00}$ & $0.21_{\pm .00}$ & $0.03_{\pm .00}$ & $20.65_{\pm .29}$ & $5.93_{\pm .01}$ & $0.80_{\pm .00}$ & $0.34_{\pm .00}$ & $0.10_{\pm .00}$ \\
$p_{min} = 0.3$ & $66.44_{\pm .35}$ & $2.05_{\pm .00}$ & $0.64_{\pm .00}$ & $0.19_{\pm .00}$ & $0.03_{\pm .00}$ & $21.13_{\pm .08}$ & $5.87_{\pm .01}$ & $0.80_{\pm .00}$ & $0.33_{\pm .00}$ & $0.09_{\pm .00}$ \\
$p_{min} = 0.4$ & $66.59_{\pm .48}$ & $2.05_{\pm .00}$ & $0.64_{\pm .00}$ & $0.17_{\pm .00}$ & $0.02_{\pm .00}$ & $21.41_{\pm .07}$ & $5.79_{\pm .01}$ & $0.80_{\pm .00}$ & $0.31_{\pm .00}$ & $0.09_{\pm .00}$ \\
$p_{min} = 0.5$ & $66.67_{\pm .07}$ & $2.04_{\pm .00}$ & $0.64_{\pm .00}$ & $0.15_{\pm .00}$ & $0.02_{\pm .00}$ & $21.23_{\pm .13}$ & $5.75_{\pm .01}$ & $0.80_{\pm .00}$ & $0.28_{\pm .00}$ & $0.08_{\pm .00}$ \\
$p_{min} = 0.6$ & $66.64_{\pm .29}$ & $2.04_{\pm .00}$ & $0.64_{\pm .00}$ & $0.14_{\pm .00}$ & $0.02_{\pm .00}$ & $21.67_{\pm .13}$ & $5.72_{\pm .01}$ & $0.80_{\pm .00}$ & $0.27_{\pm .00}$ & $0.08_{\pm .00}$ \\
$p_{min} = 0.7$ & $66.29_{\pm .27}$ & $2.04_{\pm .00}$ & $0.64_{\pm .00}$ & $0.14_{\pm .00}$ & $0.02_{\pm .00}$ & $21.25_{\pm .37}$ & $5.72_{\pm .00}$ & $0.80_{\pm .00}$ & $0.27_{\pm .00}$ & $0.07_{\pm .00}$ \\
$p_{min} = 0.8$ & $66.21_{\pm .32}$ & $2.04_{\pm .00}$ & $0.64_{\pm .00}$ & $0.14_{\pm .00}$ & $0.02_{\pm .00}$ & $21.16_{\pm .28}$ & $5.70_{\pm .01}$ & $0.80_{\pm .00}$ & $0.27_{\pm .00}$ & $0.07_{\pm .00}$ \\
$p_{min} = 0.9$ & $66.21_{\pm .32}$ & $2.04_{\pm .00}$ & $0.64_{\pm .00}$ & $0.14_{\pm .00}$ & $0.02_{\pm .00}$ & $21.25_{\pm .35}$ & $5.70_{\pm .01}$ & $0.80_{\pm .00}$ & $0.27_{\pm .00}$ & $0.07_{\pm .00}$ \\
$p_{min} = 1.0$ & $66.36_{\pm .23}$ & $2.04_{\pm .00}$ & $0.64_{\pm .00}$ & $0.14_{\pm .00}$ & $0.02_{\pm .00}$ & $21.38_{\pm .20}$ & $5.71_{\pm .01}$ & $0.80_{\pm .00}$ & $0.27_{\pm .00}$ & $0.07_{\pm .00}$ \\
\hline
\end{tabular}
}
\caption{Ablation study on the $p_{min}$ value over 3 seeds for the GSM8K (left) and MATH (right) test sets.
The mean and standard error of the final score for each run are reported for accuracy and cross-input diversity, whereas the mean and $95\%$ confidence interval for the full set of answers are reported for against-greedy diversity.
\label{math_ablation_minp}}
\end{table*}

%%% MATH
Table \ref{math_ablation_minp} reports the results for the math problem-solving tasks, considering the GSM8K (left side) and MATH (right side) test sets. As expected, the against-greedy diversity scores and cross-input EAD decrease together with $p_{min}$, plateauing at $p_{min} = 0.6$ (from that on, results are comparable with \textit{DiffSampling-cut}); specularly, accuracy is lower at smaller $p_{min}$, but the instructed model reaches a competitive score even at $p_{min} = 0.3$.

\begin{table*}[ht!]
\centering
\resizebox{\textwidth}{!}{%
\begin{tabular}{|L{2.9cm}||C{1.2cm}C{1.2cm}C{1.2cm}|C{1.2cm}C{1.2cm}|C{1.2cm}C{1.2cm}||C{1.2cm}C{1.2cm}C{1.2cm}|C{1.2cm}C{1.2cm}|C{1.2cm}C{1.2cm}|} 
\hline
Model: & \multicolumn{7}{c||}{RLHF-instructed} & \multicolumn{7}{c|}{Pre-trained} \\
\hline %  $\uparrow$
Method & \multicolumn{3}{c|}{Quality} & \multicolumn{2}{c|}{Per-Input} & \multicolumn{2}{c||}{Against-Greedy} & \multicolumn{3}{c|}{Quality} & \multicolumn{2}{c|}{Per-Input} & \multicolumn{2}{c|}{Against-Greedy} \\
\hline
DiffSampling-minp & R-$1$ & SIM & COH & EAD & SBERT & EAD & SBERT & R-$1$ & SIM & COH & EAD & SBERT & EAD & SBERT \\
\hline
$p_{min} = 0.0$  & $0.20_{\pm .00}$ & $0.44_{\pm .01}$ & $0.58_{\pm .01}$ & $0.40_{\pm .01}$ & $0.50_{\pm .01}$ & $0.70_{\pm .01}$ & $0.43_{\pm .01}$ & $0.14_{\pm .00}$ & $0.29_{\pm .01}$ & $0.42_{\pm .01}$ & $0.74_{\pm .01}$ & $0.74_{\pm .00}$ & $0.83_{\pm .01}$ & $0.62_{\pm .01}$ \\
$p_{min} = 0.05$ & $0.21_{\pm .00}$ & $0.46_{\pm .01}$ & $0.59_{\pm .01}$ & $0.36_{\pm .01}$ & $0.45_{\pm .01}$ & $0.66_{\pm .01}$ & $0.40_{\pm .01}$ & $0.19_{\pm .00}$ & $0.43_{\pm .01}$ & $0.60_{\pm .01}$ & $0.69_{\pm .01}$ & $0.52_{\pm .01}$ & $0.68_{\pm .01}$ & $0.42_{\pm .01}$ \\
$p_{min} = 0.1$  & $0.22_{\pm .00}$ & $0.46_{\pm .01}$ & $0.60_{\pm .01}$ & $0.35_{\pm .01}$ & $0.43_{\pm .01}$ & $0.64_{\pm .01}$ & $0.38_{\pm .01}$ & $0.20_{\pm .00}$ & $0.44_{\pm .01}$ & $0.62_{\pm .01}$ & $0.65_{\pm .01}$ & $0.47_{\pm .01}$ & $0.63_{\pm .01}$ & $0.39_{\pm .01}$ \\
$p_{min} = 0.2$  & $0.22_{\pm .00}$ & $0.47_{\pm .01}$ & $0.62_{\pm .01}$ & $0.37_{\pm .01}$ & $0.40_{\pm .01}$ & $0.61_{\pm .01}$ & $0.35_{\pm .01}$ & $0.20_{\pm .00}$ & $0.46_{\pm .01}$ & $0.66_{\pm .01}$ & $0.58_{\pm .01}$ & $0.41_{\pm .01}$ & $0.56_{\pm .01}$ & $0.34_{\pm .01}$ \\
$p_{min} = 0.3$  & $0.22_{\pm .00}$ & $0.47_{\pm .01}$ & $0.62_{\pm .01}$ & $0.38_{\pm .01}$ & $0.36_{\pm .01}$ & $0.58_{\pm .01}$ & $0.33_{\pm .01}$ & $0.20_{\pm .00}$ & $0.47_{\pm .01}$ & $0.68_{\pm .01}$ & $0.52_{\pm .01}$ & $0.36_{\pm .01}$ & $0.51_{\pm .01}$ & $0.31_{\pm .01}$ \\
$p_{min} = 0.4$  & $0.22_{\pm .00}$ & $0.48_{\pm .01}$ & $0.62_{\pm .01}$ & $0.39_{\pm .01}$ & $0.33_{\pm .01}$ & $0.55_{\pm .01}$ & $0.30_{\pm .01}$ & $0.21_{\pm .00}$ & $0.47_{\pm .01}$ & $0.70_{\pm .01}$ & $0.49_{\pm .01}$ & $0.32_{\pm .01}$ & $0.46_{\pm .01}$ & $0.27_{\pm .01}$ \\
$p_{min} = 0.5$  & $0.23_{\pm .00}$ & $0.48_{\pm .01}$ & $0.63_{\pm .01}$ & $0.38_{\pm .01}$ & $0.29_{\pm .01}$ & $0.51_{\pm .01}$ & $0.27_{\pm .01}$ & $0.21_{\pm .00}$ & $0.48_{\pm .01}$ & $0.71_{\pm .01}$ & $0.45_{\pm .01}$ & $0.27_{\pm .01}$ & $0.41_{\pm .01}$ & $0.24_{\pm .01}$ \\
$p_{min} = 0.6$  & $0.23_{\pm .00}$ & $0.48_{\pm .01}$ & $0.63_{\pm .01}$ & $0.37_{\pm .01}$ & $0.26_{\pm .01}$ & $0.47_{\pm .01}$ & $0.25_{\pm .01}$ & $0.21_{\pm .00}$ & $0.49_{\pm .00}$ & $0.72_{\pm .01}$ & $0.41_{\pm .01}$ & $0.23_{\pm .01}$ & $0.37_{\pm .01}$ & $0.21_{\pm .01}$ \\
$p_{min} = 0.7$  & $0.23_{\pm .00}$ & $0.48_{\pm .01}$ & $0.63_{\pm .01}$ & $0.35_{\pm .01}$ & $0.25_{\pm .01}$ & $0.45_{\pm .01}$ & $0.23_{\pm .01}$ & $0.21_{\pm .00}$ & $0.49_{\pm .00}$ & $0.73_{\pm .01}$ & $0.39_{\pm .01}$ & $0.20_{\pm .00}$ & $0.34_{\pm .01}$ & $0.19_{\pm .01}$ \\
$p_{min} = 0.8$  & $0.23_{\pm .00}$ & $0.48_{\pm .01}$ & $0.63_{\pm .01}$ & $0.35_{\pm .01}$ & $0.25_{\pm .01}$ & $0.45_{\pm .01}$ & $0.23_{\pm .01}$ & $0.21_{\pm .00}$ & $0.49_{\pm .00}$ & $0.73_{\pm .00}$ & $0.38_{\pm .01}$ & $0.19_{\pm .00}$ & $0.32_{\pm .01}$ & $0.18_{\pm .01}$ \\
$p_{min} = 0.9$  & $0.23_{\pm .00}$ & $0.48_{\pm .01}$ & $0.63_{\pm .01}$ & $0.35_{\pm .01}$ & $0.25_{\pm .01}$ & $0.45_{\pm .01}$ & $0.23_{\pm .01}$ & $0.21_{\pm .00}$ & $0.49_{\pm .00}$ & $0.73_{\pm .00}$ & $0.38_{\pm .01}$ & $0.19_{\pm .00}$ & $0.32_{\pm .01}$ & $0.17_{\pm .01}$ \\
$p_{min} = 1.0$  & $0.23_{\pm .00}$ & $0.48_{\pm .01}$ & $0.63_{\pm .01}$ & $0.35_{\pm .01}$ & $0.25_{\pm .01}$ & $0.45_{\pm .01}$ & $0.23_{\pm .01}$ & $0.21_{\pm .00}$ & $0.49_{\pm .00}$ & $0.73_{\pm .00}$ & $0.38_{\pm .01}$ & $0.19_{\pm .00}$ & $0.32_{\pm .01}$ & $0.17_{\pm .01}$ \\
 \hline
\end{tabular}
}
\caption{Ablation study on the $p_{min}$ value over the 5 outputs sampled for each of the 1000 prompts from the XSum dataset for the instructed model (left) and the pre-trained model (right). The mean and $95\%$ confidence interval are reported for all the metrics.
\label{xsum_ablation_minp}}
\end{table*}
%
%%% XSUM
%
The same holds for XSum as well. As shown in Table \ref{xsum_ablation_minp}, diversity decreases when increasing $p_{min}$ and plateaus at 0.5, while quality rapidly increases for the pre-trained model and is almost constant for the instructed model.

\begin{table*}[ht!]
\centering
\resizebox{0.75\textwidth}{!}{%
\begin{tabular}{|L{2.9cm}||C{1.2cm}|C{1.2cm}C{1.2cm}||C{1.2cm}|C{1.2cm}C{1.2cm}|} 
\hline
Model: & \multicolumn{3}{c||}{RLHF-instructed} & \multicolumn{3}{c|}{Pre-trained} \\
\hline %  $\uparrow$
Method & Quality & \multicolumn{2}{c||}{Per-Input Diversity} & Quality & \multicolumn{2}{c|}{Per-Input Diversity} \\
\hline
DiffSampling-minp & COH & EAD & SBERT & COH & EAD & SBERT \\
\hline
$p_{min} = 0.0$  & $0.41_{\pm .01}$ & $0.88_{\pm .00}$ & $0.33_{\pm .00}$ & $0.35_{\pm .01}$ & $0.84_{\pm .00}$ & $0.64_{\pm .00}$ \\
$p_{min} = 0.05$ & $0.43_{\pm .01}$ & $0.72_{\pm .00}$ & $0.24_{\pm .00}$ & $0.48_{\pm .01}$ & $0.46_{\pm .00}$ & $0.49_{\pm .00}$ \\
$p_{min} = 0.1$  & $0.43_{\pm .01}$ & $0.71_{\pm .00}$ & $0.23_{\pm .00}$ & $0.51_{\pm .01}$ & $0.36_{\pm .00}$ & $0.47_{\pm .00}$ \\
$p_{min} = 0.2$  & $0.43_{\pm .01}$ & $0.69_{\pm .00}$ & $0.22_{\pm .00}$ & $0.54_{\pm .01}$ & $0.26_{\pm .00}$ & $0.44_{\pm .00}$ \\
$p_{min} = 0.3$  & $0.44_{\pm .01}$ & $0.67_{\pm .00}$ & $0.22_{\pm .00}$ & $0.56_{\pm .01}$ & $0.22_{\pm .00}$ & $0.42_{\pm .00}$ \\
$p_{min} = 0.4$  & $0.43_{\pm .01}$ & $0.66_{\pm .00}$ & $0.21_{\pm .00}$ & $0.57_{\pm .01}$ & $0.20_{\pm .00}$ & $0.39_{\pm .00}$ \\
$p_{min} = 0.5$  & $0.44_{\pm .01}$ & $0.65_{\pm .00}$ & $0.21_{\pm .00}$ & $0.59_{\pm .01}$ & $0.18_{\pm .00}$ & $0.36_{\pm .00}$ \\
$p_{min} = 0.6$  & $0.44_{\pm .01}$ & $0.64_{\pm .00}$ & $0.20_{\pm .00}$ & $0.59_{\pm .01}$ & $0.16_{\pm .00}$ & $0.34_{\pm .01}$ \\
$p_{min} = 0.7$  & $0.43_{\pm .01}$ & $0.64_{\pm .00}$ & $0.20_{\pm .00}$ & $0.60_{\pm .01}$ & $0.16_{\pm .00}$ & $0.32_{\pm .01}$ \\
$p_{min} = 0.8$  & $0.43_{\pm .01}$ & $0.63_{\pm .00}$ & $0.19_{\pm .00}$ & $0.60_{\pm .01}$ & $0.16_{\pm .00}$ & $0.31_{\pm .01}$ \\
$p_{min} = 0.9$  & $0.43_{\pm .01}$ & $0.63_{\pm .00}$ & $0.19_{\pm .00}$ & $0.60_{\pm .01}$ & $0.15_{\pm .00}$ & $0.31_{\pm .01}$ \\
$p_{min} = 1.0$  & $0.43_{\pm .01}$ & $0.63_{\pm .00}$ & $0.19_{\pm .00}$ & $0.60_{\pm .01}$ & $0.15_{\pm .00}$ & $0.31_{\pm .01}$ \\
 \hline
\end{tabular}
}
\caption{Ablation study on the $p_{min}$ value over 3 seeds for the WritingPrompts dataset for the instructed model (left) and the pre-trained model (right). The mean and standard error of the final score for each run are reported for cross-input diversity, whereas the mean and $95\%$ confidence interval for the full set of answers are reported for the other metrics.
\label{wp_ablation_minp}}
\end{table*}

%%% WRITINGPROMPTS

As reported in Table \ref{wp_ablation_minp}, in the case of story generation, diversity rapidly drops when increasing $p_{min}$ and plateaus around 0.5, while quality increases for the pre-trained model and is almost constant for the instructed model.

\begin{figure*}[ht]
    \centering
    \includegraphics[width=1.\textwidth]{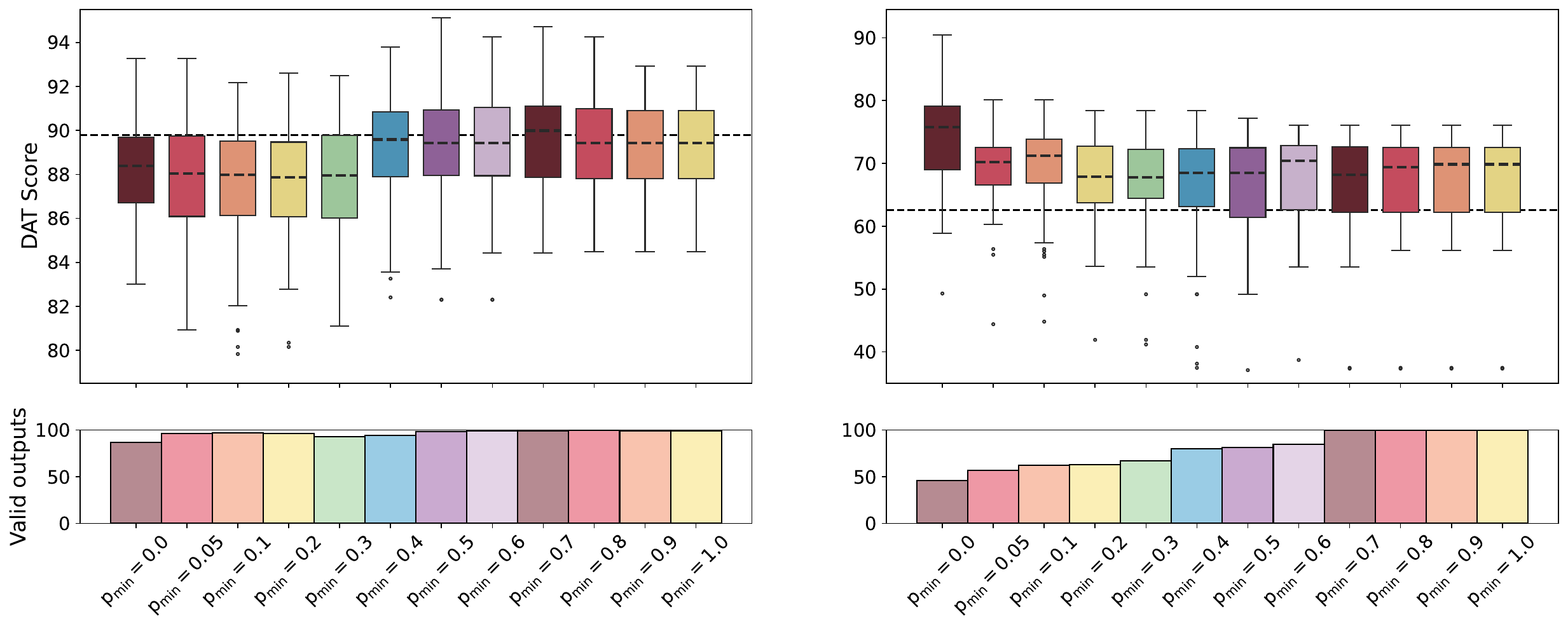}
    \caption{DAT scores and output validity percentage for \textit{DiffSampling-minp} when varying the $p_{min}$ parameter for the instructed (left) and pre-trained (right) models. The dashed line represents the score of the greedy strategy.}
    \label{fig:dat_ablation_minp}
\end{figure*}

%%% DAT

The same considerations are even more apparent for the divergent association task with Figure \ref{fig:dat_ablation_minp}. While behaving differently for the instructed and pre-trained models, the DAT score plateaus around $p_{min} = 0.5$. On the other hand, the percentage of valid outputs is close to $100\%$ for all $p_{min}$ values when considering the instructed model, and linearly increases when considering the pre-trained model.

To sum up, values above $0.5$ are not different from \textit{DiffSampling-cut}, while lower $p_{min}$ can help foster diversity with a small loss in accuracy, especially for instructed models. We suggest practitioners select the most appropriate $p_{min}$ value by running it on a validation set if available, and otherwise lie in the $[0.05, 0.3]$ range, with a lower or higher value depending on whether it is preferable to have more diversity or quality, respectively.

\section{Qualitative Analysis} \label{qualitative_appendix}
In the following subsections, we present and qualitatively discuss some generated solutions from our methods and the greedy, top-$p$, and min-$p$ strategies at different temperatures.

\begin{figure}[t]
    \centering
    \includegraphics[width=.5\textwidth]{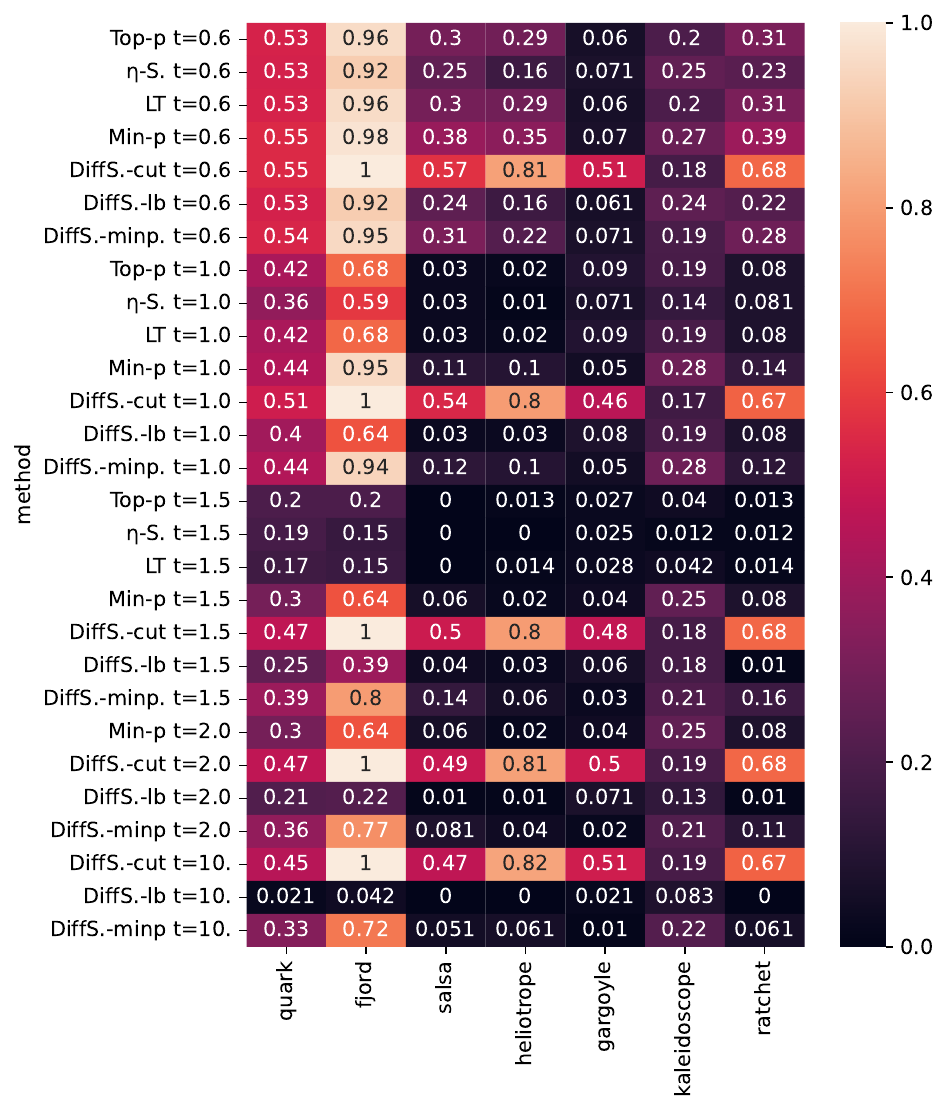}
    \caption{Percentage of times each greedy-selected noun has been returned by our three methods and baselines applied to the instructed version of Llama3-8B.}
    \label{fig:dat_heatmap_instruct}
\end{figure}
\begin{figure}[th]
    \centering
    \includegraphics[width=.5\textwidth]{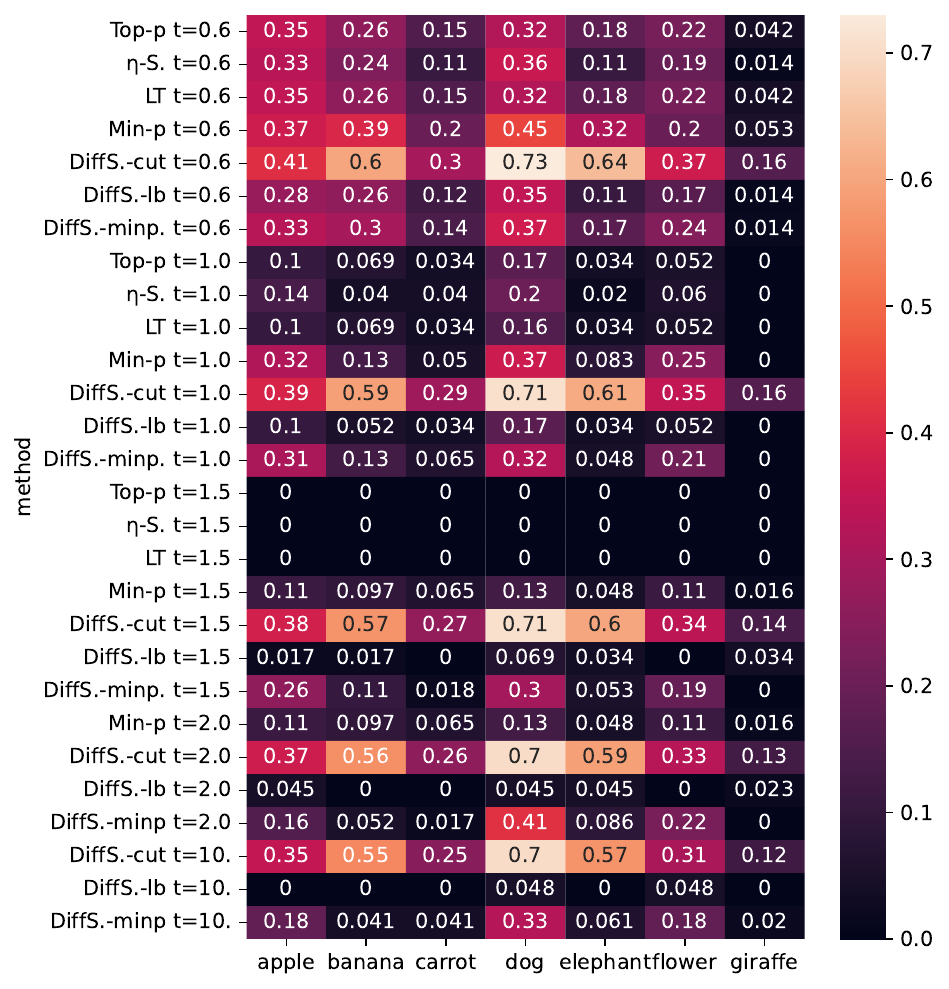}
    \caption{Percentage of times each greedy-selected noun has been returned by our three methods and baselines applied to the pre-trained version of Llama3-8B.}
    \label{fig:dat_heatmap_pretrained}
\end{figure}

\subsection{Divergent Association Task} \label{qualitative_dat}
For the divergent association task, we analyze how the generated solutions differ from the greedy one from a qualitative perspective.

\textbf{Instructed Model.}
In the case of the instructed version of Llama3-8B, the greedy decoding produces a high-quality list of different nouns, with a score comparable to more stochastic strategies. The best solution overall has been generated with $\eta$-sampling at a temperature of $1.5$; while it does not share any noun with the greedy solution, the first word starts with the same token. 
On the other hand, the best solution generated by one of our methods is made by \textit{DiffSampling-minp} at a temperature of $0.6$ and, predictably, shares more nouns with the greedy solutions; however, the 4 different nouns lead to a significant increase in DAT score:

\begin{tcolorbox}[colback=gray!10!white,colframe=black!50!white]
\textbf{Greedy solution:}\\
quark, fjord, salsa, heliotrope, gargoyle, kaleidoscope, ratchet\\
\textbf{Score: 89.786}\\
\\
\textbf{Our Best solution (DiffSampling-lb, t=10.):}\\
quasar, fjord, oboe, quiche, heliotrope, ratchet, tornado\\
\textbf{Score: 94.752}\\
\\
\textbf{Best baseline solution ($\eta$-sampling, t=1.5):}\\
quasar, bungee, newsletter, virago, pertussis, node, pumpkinseed\\
\textbf{Score: 97.005}
\end{tcolorbox}
Coupling the DAT score and percentage of correct answers with statistics about divergence from the greedy strategy can give additional insights into the behavior of different sampling schemes. Fig. \ref{fig:dat_heatmap_instruct} reports a heatmap with the percentage of appearance of each of the greedy-selected nouns in the various generated responses. As expected, \textit{DiffSampling-cut} is nearly greedy.
Instead, \textit{DiffSampling-minp} and especially \textit{DiffSampling-lb} behaviors are more similar to those of other baselines with unary temperatures. Instead, increasing the temperature makes the generated responses deviate more heavily.

\textbf{Pre-Trained Model.}
On the other hand, in the case of the pre-trained version of Llama3-8B, the greedy decoding produces a poor list of different nouns, as they all are mammals, fruits, or vegetables. On the contrary, the best overall solution is one of those produced with \textit{DiffSampling-lb} at a temperature of $1.5$, which shares no nouns with the greedy one and achieves a significantly higher score:

\begin{tcolorbox}[colback=gray!10!white,colframe=black!50!white]
\textbf{Greedy solution:}\\
apple, banana, carrot, dog, elephant, flower, giraffe\\
\textbf{Score: 62.614}\\
\\
\textbf{Our best solution (DiffSampling-lb, t=1.5):}\\
rhododendron, plate, kaon, time, gargle, odium, space\\
\textbf{Score: 94.665}\\
\\
\textbf{Best baseline solution ($\eta$-sampling, t=1.5):}\\
chocolate, sadness, spacecraft, fiction, batting, advertisement, motorists
\\
\textbf{Score: 92.506}
\end{tcolorbox}

Figure \ref{fig:dat_heatmap_pretrained} reports the percentage of appearance of each of the greedy-selected nouns in all the considered generative settings. As above, \textit{DiffSampling-cut} is the closest to greedy, and different temperatures do not influence the percentage of overlapping much. However, both \textit{DiffSampling-lb} and \textit{DiffSampling-minp} rarely output any greedily-generated noun, especially at higher temperatures, similar to what is done by many of the baselines.

\subsection{Math Problem Solving} \label{qualitative_math}
Tables \ref{tab:gsm8k_qual_example1} and \ref{tab:gsm8k_qual_example2} report two qualitative examples of our \textit{DiffSampling} methods for the GSM8K test set (preferred over MATH due to output length). The first thing we can notice is how a temperature of $10.0$ (and occasionally a temperature of $2.0$) makes the baselines generate random tokens, while our methods remain always on topic (even though potentially varying in the final result). In particular, temperature scaling on \textit{DiffSampling-cut} has the effect of rephrasing some sentences, but never losing the overall meaning and mathematical steps.

\subsection{Extreme Summarization} \label{qualitative_xsum}

\textbf{Instructed Model.}
Tables \ref{tab:xsum_instructed_qual_example1} and \ref{tab:xsum_instructed_qual_example2} report some qualitative examples of our \textit{DiffSampling} methods for XSum when adopting the instructed model. Again, higher temperatures make top-$p$ and min-$p$ behave more randomly. Our methods show less variety and often produce a similar output, but remain consistent across all tested temperatures.

\textbf{Pre-Trained Model.}
Tables \ref{tab:xsum_pretrained_qual_example1} and \ref{tab:xsum_pretrained_qual_example2} report some qualitative examples of our \textit{DiffSampling} methods for the XSum dataset when adopting the pre-trained model. Similar to what was experienced for the instructed model, top-$p$ and min-$p$ fail in producing coherent and meaningful outputs at higher temperatures, and sometimes they fail even at a temperature of $1.5$. While the pre-trained model is more prone to less coherence, our methods usually generate appropriate summaries, and on the rare occasions they fail to do so, the output is still somehow connected to the input text or the request.

\subsection{Story Generation} \label{qualitative_wp}

\textbf{Instructed Model.}
Tables \ref{tab:wp_instruct_qual_example1} and \ref{tab:wp_instruct_qual_example2} report some qualitative examples for \texttt{Llama-3.2-3B-Instruct}. As apparent, our two relaxations sometimes behave very close to top-$p$ and min-$p$, while in other cases they start diverging quite soon, while preserving the general meaning of the output. This confirms, once again, how our methods perform subtle corrections, extending the range of possible tokens occasionally but meaningfully. Instead, \textit{DiffSampling-cut} diverges almost immediately from the greedy strategy, as we would expect for a creative task where multiple outcomes are equally acceptable.

\textbf{Pre-Trained Model.}
Finally, Tables \ref{tab:wp_pretrained_qual_example1} and \ref{tab:wp_pretrained_qual_example2} report some qualitative examples for \texttt{Llama-3.2-3B}. As guessed from the quantitative scores, the greedy strategy tends to repeat the same tokens, with very poor variability. Also \textit{DiffSampling-cut} tends to repeat sentences multiple times, but this happens less frequently, and sometimes they are broken by different tokens, most likely due to our cutting strategy (see how the same sentence contains any time a different subject in Table \ref{tab:wp_pretrained_qual_example2}). Our other two methods are less prone to repetition and write more coherent text, while diverging sooner from their most similar baselines.

\begin{table*}
\small 
    \centering
    \resizebox{0.98\linewidth}{!}{
    \rowcolors{2}{gray!10}{white}
    % [inline block 0: 10 envs, 66399 chars -> data_tex | \begin{tabular}{p{4cm}p{17cm}} \toprule...]
}
\caption{\label{tab:wp_pretrained_qual_example2} Second qualitative example of WritingPrompts (pre-trained model) with our methods versus greedy, top-$p$, and min-$p$ decoding. In bold, the first token(s) where our methods (at $\tau = 1.0$) deviate from those they build upon.
}
\end{table*}

\end{document}